\begin{document}

\newtheorem{theorem}{Theorem}								
\newtheorem{lemma}[theorem]{Lemma}							
\newtheorem{corollary}[theorem]{Corollary}              





%
\newcommand*{\oneone}{$(1{+}1)$~\textup{EA}\xspace}
\newcommand*{\OM}{\textrm{\textup{\textsc{OneMax}}}\xspace}
\newcommand*{\nwspace}{\hspace*{.1em}}
\newcommand*{\Or}{\mathrm O}
\newcommand*{\Om}{\mathrm \Omega}
\newcommand*{\xorg}{x_{\mathrm{orig}}}
\newcommand*{\dominates}{\succcurlyeq}
\newcommand*{\sdominates}{\succ}
\newcommand*{\mutate}{\texttt{mutate}}
\newcommand*{\cross}{\texttt{cross}}
\newcommand*{\uar}{u.a.r.\xspace}
\newcommand*{\wrt}{w.r.t.\xspace}
\newcommand*{\Prob}{\textup{Pr}}
\newcommand*{\loss}{\textup{loss}}
\newcommand*{\surplus}{\textup{surplus}}
\newcommand{\qed}{\hspace*{\fill}\blackslug}
\newenvironment{proof}{$\;$\newline \noindent {\sc Proof.}$\;\;\;$\rm}{\qed}
\newcommand{\proofend}{\quad\blackslug}
\newcommand{\blackslug}{\penalty 1000\hbox{
    \vrule height 8pt width .4pt\hskip -.4pt
    \vbox{\hrule width 8pt height .4pt\vskip -.4pt
          \vskip 8pt
      \vskip -.4pt\hrule width 8pt height .4pt}
    \hskip -3.9pt
    \vrule height 8pt width .4pt}}


\newenvironment{myAlgorithm}%
	{\vspace{.75em}
	\begin{center}
	\begin{minipage}{.75\linewidth}
	\begin{algorithm2e}[H]
	\setstretch{1.25}}%
	{\end{algorithm2e}
	\end{minipage}
	\end{center}
	\vspace*{.75em}
  }

\title{\bf Runtime Performances of Randomized Search Heuristics\\ for the Dynamic Weighted Vertex Cover Problem
\thanks{A preliminary version of this work was presented at the {\it 2018 Genetic and Evolutionary Computation Conference} (GECCO)~\cite{shi2018GECCO}. The work is supported by the National Natural Science Foundation of China under Grants 61802441, 61672536, 61836016, and the Australian Research Council (ARC) through Grant DP160102401. 
}}
\author{
 \vspace*{3mm}
 {\sc Feng Shi} $^{\mbox{\footnotesize \textdagger}}$ \ \ \ \
 {\sc Frank Neumann}$^{\mbox{\footnotesize \textdaggerdbl}}$ \ \
 {\sc Jianxin Wang}$^{\mbox{\footnotesize \textdagger}}$\\
   $^{\mbox{\footnotesize \textdagger}}$School of Computer Science and Engineering, Central South University,  \\ 
   Changsha 410083, P.R. China, fengshi@csu.edu.cn\\
   $^{\mbox{\footnotesize \textdaggerdbl}}$School of Computer Science, The University of Adelaide, Adelaide, Australia  
   }

\date{}
\maketitle

\vspace{-7mm}

\begin{abstract}
	Randomized search heuristics such as evolutionary algorithms are frequently applied to dynamic combinatorial optimization problems. Within this paper, we present a dynamic model of the classic Weighted Vertex Cover problem and analyze the runtime performances of the well-studied algorithms Randomized Local Search and (1+1) EA adapted to it, to contribute to the theoretical understanding of evolutionary computing for problems with dynamic changes. In our investigations, we use an edge-based representation based on the dual form of the Linear Programming formulation for the problem and study the expected runtime that the adapted algorithms require to maintain a 2-approximate solution when the given weighted graph is modified by an edge-editing or weight-editing operation. Considering the weights on the vertices may be exponentially large with respect to the size of the graph, the step size adaption strategy is incorporated, with or without the 1/5-th rule that is employed to control the increasing/decreasing rate of the step size. Our results show that three of the four algorithms presented in the paper can recompute 2-approximate solutions for the studied dynamic changes in polynomial expected runtime, but the (1+1) EA with 1/5-th Rule requires pseudo-polynomial expected runtime.
\end{abstract}

\section{Introduction}
\label{sec:intro}
Over the past decades, randomized search heuristics such as evolutionary algorithms and ant colony optimization have been applied successfully in various areas, including engineering and economics. To gain a deep insight into the behaviors of evolutionary algorithms, many theoretical techniques for analyzing their expected runtime were presented~\cite{Auger11,ncs/Jansen13,BookNeuWit}. And using these techniques, evolutionary algorithms designed for some classic combinatorial optimization problems have been studied. In particular, the Vertex Cover problem plays a crucial role in the area~\cite{friedrich2010approximating,hansen2003reducing,jansen2013approximating,kratsch2013fixed,pourhassan2016ppsn}.

Consider an instance $I$ of a given combinatorial optimization problem, and a solution $S$ to $I$ satisfying a specific quality guarantee (optimal or approximated). If an operation on $I$ results in a new instance $I'$, which is similar to $I$
(the similarity between the two instances depends on the scale of the operation),
then a natural and interesting problem arises: Is it easy to find a solution $S'$ to $I'$ that satisfies the specific quality guarantee, starting from the original solution $S$? 
In other words, how much runtime does a specific algorithm take to get a solution $S'$ to $I'$ with the quality guarantee,  starting from $S$? 
The above setting is referred as the {\it dynamic model} of the given combinatorial optimization problem.

Studying the performances of evolutionary algorithms for dynamic models of combinatorial optimization problems is an emerging field in evolutionary computation
\cite{friedrich2017s,kotzing20151+,neumann2015runtime,pourhassan2015maintaining,roostapour2018performance,shi2019reoptimization}. 
Within the paper, we present the dynamic model of the Weighted Vertex Cover problem (WVC), which is simply named Dynamic Weighted Vertex Cover problem (DWVC). Our goal is to analyze the behaviors of the well-studied algorithms Randomized Local Search (RLS) and (1+1) EA that are adapted to DWVC. More specifically, we study the expected runtime (i.e., the expected number of fitness evaluations) that the algorithms need to recompute a 2-approximate solution when the given weighted graph is modified by a graph-editing operation, starting from a given 2-approximate solution to the original weighted graph.
Note that all weighted graphs considered in the paper are vertex-weighted, i.e., the weight function is defined on the vertices, not edges.

{\bf Related work.} For the Vertex Cover problem, it is well-known that under the Unique Games Conjecture~\cite{khot2002power}, there does not exist an approximation algorithm with a constant ratio $r < 2$, unless P = NP~\cite{khot2008vertex}.
The best-known 2-approximation algorithm for the Vertex Cover problem is based on the maximal matching: Construct a maximal matching by greedily adding edges, then let the vertex cover contain both endpoints of each edge in the maximal matching.
For WVC, Hochbaum~\cite{LPForWeighteVC1983} 
showed that a 2-approximate solution can be obtained by using the Linear Programming (LP) result of the fractional WVC. 
Du et al.~\cite{du2011design} found that a maximal solution to the dual form~\cite{vazirani2013approximation} of the LP formulation (simply called dual formulation) for the fractional WVC also directly induces a 2-approximate solution. 
Using this conclusion, Bar-Yehuda and Even~\cite{bar1981linear} presented a linear-time 2-approximation algorithm for WVC. The essential difference between the primal form of the LP formulation (simply called primal formulation) and the dual formulation for the fractional WVC is: The primal formulation considers the problem from the perspective of vertices; the dual formulation considers it from the perspective of edges~\cite{du2011design}. (More details of the LP formulation and its dual formulation for the fractional WVC can be found in the next section.)

Pourhassan et al.~\cite{pourhassan2015maintaining} presented a dynamic model of the Vertex Cover problem, in which the graph-editing operation adds (resp., removes) exactly one edge into (resp., from) the given unweighted graph, and analyzed evolutionary algorithms with respect to their abilities to maintain a 2-approximate solution. 
They examined different variants of the RLS and (1+1) EA, node-based representation and edge-based representation.
If using the node-based representation, they gave classes of instances for which both algorithms cannot get a 2-approximate solution in polynomial expected runtime with high probability.
However, using the edge-based representation, they showed that the RLS and (1+1) EA can maintain 2-approximations efficiently if the algorithms start with a search point corresponding to a maximal matching of the original unweighted graph and use the fitness function given in~\cite{jansen2013approximating}  penalizing the edges sharing vertices. 

Inspired by the work of Pourhassan et al.~\cite{pourhassan2015maintaining} and the essential difference between the primal and dual formulations of the fractional WVC, it is promising to consider DWVC from the perspective of edges, i.e., utilize the dual formulation to analyze DWVC.
Here we give another example to show that using the dual formulation is better than using the primal formulation, to analyze DWVC. 
Consider a simplest graph-editing operation that removes or adds exactly one edge $[v,v']$.
For the primal formulation, if a new edge $[v,v']$ is added into the graph, 
then the corresponding LP values of $v$ and $v'$ may be required to increase as their sum may be $< 1$ with respect to the given original LP solution; if an edge $[v,v']$ is removed from the graph, then the corresponding LP values of $v$ and $v'$ may have the room to decrease with respect to the given original LP solution.
Thus there are two possible adjustment directions for the LP values of the vertices if using the primal formulation. 
For the dual formulation, because the given original maximal solution to the dual formulation does not violate the corresponding LP constraints no matter whether the edge $[v,v']$ is removed or added, so we only need to consider increasing the LP values of the edges. 
Therefore, using the dual formulation is able to simplify the analysis for DWVC, compared to using the primal formulation.

We formulate DWVC in the paper as: Given a weighted graph $G = (V,E,W)$ and a maximal solution $S$ to the dual formulation of the fractional WVC on $G$, the goal is to find a maximal solution to the dual formulation of the fractional WVC on the weighted graph $G^* = (V^*,E^*,W^*)$ starting from the original maximal solution $S$, where $G^*$ is obtained by one of the two graph-editing operations on $G$:
(1) replace the edge-set $E$ with a new one $E^*$; (2) replace the weight function $W$ with a new one $W^*$.
The version of DWVC with edge-modification is denoted by DWVC-E, and the one with weight-modification is denoted by DWVC-W. 
Denote by $D \in \mathds{N}^+$ the scale of the graph-editing operation, more specifically, $D = |(E^* \setminus E) \cup (E \setminus E^*)|$ or $D = |\{v \in V|W(v) \neq W^*(v)\}|$.  
It is necessary to point that both $G$ and $G^*$ are simple graphs (at most one edge between any two vertices).

Recently Pourhassan et al.~\cite{pourhassan2017use} studied WVC using the dual formulation of the fractional WVC. As the weights on the vertices may be exponentially large with respect to the size of the graph (the number of edges), they incorporated the {\it Step Size Adaption} strategy~\cite{beyer2002evolution} into their (1+1) EA (Algorithm 4 given in~\cite{pourhassan2017use}). However, their (1+1) EA was shown to take exponential expected runtime with high probability to get a maximal solution to the dual formulation. 
There are two factors causing the long runtime of their algorithm. Firstly, for a mutation $M$ constructed by their (1+1) EA, there may exist two edges selected by $M$ whose LP values are increased and decreased respectively. 
The randomness on the adjustment direction of the LP values leads that a mutation increases the sum of LP values for the edges with a relatively small probability, i.e., a mutation is rejected with a relatively large probability. 
Secondly, for a mutation $M$ that is rejected by their (1+1) EA, the step sizes of all the edges selected by $M$ would be decreased. Under the combined impact of the two factors, the step sizes of the edges cannot be increased enough to overcome the exponentially large weights on the vertices. That is, the step size adaption strategy is nearly invalid for their (1+1) EA. 

{\bf Contributions}. Drawing on the experience of work~\cite{pourhassan2017use} due to Pourhassan et al., we give two algorithms (1+1) EA and RLS adapted to DWVC with the step size adaption strategy as well. To avoid the invalidation of the step size adaption strategy that happens in the algorithm of~\cite{pourhassan2017use}, the two algorithms adopt an extra policy with three points: (1) the LP values of the edges selected by a mutation either all increase or all decrease (this only applies to the (1+1) EA, because any mutation of the RLS selects exactly one edge); (2) whether the algorithms increase or decrease the LP values of the edges depends on the fitness of the maintained solution; and (3) the condition to decrease the step size of a specific edge is very strict. 
Under the cooperation of the step size adaption strategy and the policy given above, the (1+1) EA and RLS are shown to take expected runtime $\Or \big(\alpha m \log_{\alpha} W_{\textup{max}} \cdot \log (\max\{\alpha m, \alpha D \cdot W_{\textup{max}} \}) \! \big)$ to solve the two versions of DWVC (including two special variants for DWVC-E, and two special variants for DWVC-W), where $m$ denotes the number of edges in $G^*$, $W_{\textup{max}} \ge 1$ denotes the maximum weight that the vertices in $G$ and $G^*$ have, and $\alpha \in \mathds{N}^+ \setminus \{1\}$ denotes the increasing/decreasing rate of the step size (i.e., the increment on the LP value for each edge can be exponentially increased or decreased by a factor $\alpha$).

\begin{table*}[t]
\vspace*{.25cm}
\scriptsize
\begin{center}
\renewcommand{\arraystretch}{1}
\begin{tabular}{@{}lcccc@{}}
\toprule
 & {\bf RLS} \ or \ \textbf{(1+1) EA} & {\bf \mbox{\textup{RLS with 1/5-th Rule}}} & {\bf \mbox{\textup{(1+1) EA with 1/5-th Rule}}} &\\
\midrule
{\bf DWVC-E$^+$} &
  $\Or \big(\alpha m \log_{\alpha} W_{\textup{max}} \cdot \min\{D,\log (\alpha D \cdot \log_{\alpha} W_{\textup{max}})\} \! \big)$ &
  $\Or \big(\alpha m D \log_{\alpha} W_{\textup{max}} \cdot \log W_{\textup{max}} \! \big)$ &
  $\Om (2^{m^{\epsilon/2}})$, $0 < \epsilon \leq 1/2$ & \\
{\bf DWVC-E$^-$} &
  $\Or \big(\alpha m \log_{\alpha} W_{\textup{max}} \cdot \log (\max\{\alpha m, \alpha D \cdot W_{\textup{max}} \}) \! \big)$ &
  $\Or \big(\alpha m \log_{\alpha} W_{\textup{max}} \cdot \min\{m \log W_{\textup{max}}, D \cdot W_{\textup{max}} \}  \! \big)$ &
  $\Om (2^{m^{\epsilon/2}})$, $0 < \epsilon \leq 1/2$ \\
{\bf DWVC-E} &
  $\Or \big(\alpha m \log_{\alpha} W_{\textup{max}} \cdot \log (\max\{\alpha m, \alpha D \cdot W_{\textup{max}} \}) \! \big)$ &
  $\Or \big(\alpha m \log_{\alpha} W_{\textup{max}} \cdot \min\{m \log W_{\textup{max}}, D \cdot W_{\textup{max}} \}  \! \big)$ &
  $\Om (2^{m^{\epsilon/2}})$, $0 < \epsilon \leq 1/2$ \\ 
\cmidrule{1-5}
{\bf DWVC-W$^+$} &
  $\Or \big(\alpha m \log_{\alpha} W_{\textup{max}} \cdot \log (\max\{\alpha m, \alpha D \cdot W_{\textup{max}} \}) \! \big)$ &
  $\Or \big(\alpha m \log_{\alpha} W_{\textup{max}} \cdot \min\{m \log W_{\textup{max}}, D \cdot W_{\textup{max}} \}  \! \big)$ &
  $\Om (2^{m^{\epsilon/2}})$, $0 < \epsilon \leq 1/2$ \\ 
{\bf DWVC-W$^-$} &
  $\Or \big(\alpha m \log_{\alpha} W_{\textup{max}} \cdot \log (\max\{\alpha m, \alpha D \cdot W_{\textup{max}} \}) \! \big)$ &
  $\Or \big(\alpha m \log_{\alpha} W_{\textup{max}} \cdot \min\{m \log W_{\textup{max}}, D \cdot W_{\textup{max}} \}  \! \big)$ &
  $\Om (2^{m^{\epsilon/2}})$, $0 < \epsilon \leq 1/2$ \\ 
{\bf DWVC-W} &
  $\Or \big(\alpha m \log_{\alpha} W_{\textup{max}} \cdot \log (\max\{\alpha m, \alpha D \cdot W_{\textup{max}} \}) \! \big)$ &
  $\Or \big(\alpha m \log_{\alpha} W_{\textup{max}} \cdot \min\{m \log W_{\textup{max}}, D \cdot W_{\textup{max}} \}  \! \big)$ &
  $\Om (2^{m^{\epsilon/2}})$, $0 < \epsilon \leq 1/2$ \\ 

\bottomrule\\
\end{tabular}
\end{center}
\vspace*{-5mm}
\caption{Overview on runtime performances of the four algorithms for the two versions of DWVC, DWVC-E and DWVC-W, including the two special variants for DWVC-E (DWVC-E$^+$ and DWVC-E$^-$) and the two special variants for DWVC-W (DWVC-W$^+$ and DWVC-W$^-$).
The notation $m$ denotes the number of edges in the new graph, $W_{\textup{max}}$ denotes the maximum weight that the vertices in the original and new graphs have, $D$ denotes the scale of the graph-editing operation, and $\alpha \in \mathds{N}^+ \setminus \{1\}$ denotes the increasing/decreasing rate of the step size that is an integer ranging from 2 to $W_{\textup{max}}$.
The lower bound of the runtime of the \mbox{\textup{(1+1) EA with 1/5-th Rule}} for DWVC holds with probability $1- e^{-{\rm \Omega}(m^{\epsilon})}$ if $W_{\textup{max}} \ge \alpha^{m}$.
}
\label{table:overviewResults}
\end{table*}

For the extra policy given above, its last two points play an important role in avoiding the invalidity of the step size adaption strategy, but they seem too restrictive and a little artificial. Thus we introduce the 1/5-th (success) rule, and give two algorithms with both the 1/5-th rule and step size adaption strategy, called the (1+1) EA with 1/5-th Rule and RLS with 1/5-th Rule. 
The 1/5-th rule is one of the best-known techniques in parameter control, especially in the control of the mutation probability (for more details, please refer to~\cite{doerr2015optimal}). In the paper, we use the 1/5-th rule to control the increasing/decreasing rate of the step size. 
More specifically, the LP values of the edges selected by a mutation of the two algorithms with the 1/5-th rule increase or decrease with the same probability 1/2 (i.e., not depend on the maintained solution). If the mutation is accepted, then the step sizes of the selected edges are increased by a factor $\alpha$; otherwise, decreased by a factor $\alpha^{1/4}$.
For the RLS with 1/5-th Rule, we show that it can solve the two versions of DWVC (including the four special variants) efficiently. However, for the (1+1) EA with 1/5-th Rule, we construct a special instance for each version of DWVC, and show that the algorithm takes at least pseudo-polynomial time to solve it. 
The main results given in the paper are summarized in Table~\ref{table:overviewResults}.

The rest of the paper is structured as follows. We start by giving the related definitions and problem formulations in Section 2. 
Then we present the algorithms (1+1) EA and RLS (with the step size adaption strategy), and algorithms \mbox{\textup{(1+1) EA with 1/5-th Rule}} and \mbox{\textup{RLS with 1/5-th Rule}} for DWVC in two separated subsections of Section 3. 
For the two versions of DWVC, Sections 4 and 5, respectively, analyze the expected runtime of the (1+1) EA and RLS, and the \mbox{\textup{(1+1) EA with 1/5-th Rule}} and \mbox{\textup{RLS with 1/5-th Rule}}.
Finally, conclusions are presented in Section 6.

\section{Preliminaries}
\label{sec:prelims}

Consider a weighted graph $G=(V,E,W)$ with a vertex-set $V=\{v_1,\ldots, v_n\}$, an edge-set $E=\{e_1,\ldots, e_m\}$, and a weight function $W: V \rightarrow \mathds{N}^+$ on the vertices. For any vertex $v \in V$, denote by $N_G(v)$ the set containing all the neighbors of $v$ in $G$, and by $E_G(v)$ the set containing all the edges incident to $v$ in $G$. For any vertex-subset $V' \subseteq V$, let $E_G(V') = \bigcup_{v \in V'} E_G(v)$. For any edge $e \in E$, denote by $E_G(e)$ the set containing all the edges in $G$ that have a common endpoint with $e$. For any edge-subset $E' \subseteq E$, let $E_G(E') = \bigcup_{e \in E'} E_G(e) \setminus E'$.

A vertex-subset $V_c \subseteq V$ is a {\it vertex cover} of $G$ if for each edge $e \in E$, where $e$ can be represented by its two endpoints $v$ and $v'$ as $[v,v']$, at least one of its two endpoints $v$ and $v'$ is in $V_c$. The weight of $V_c$ is defined as the sum of the weights on the vertices in $V_c$, written $\sum_{v \in V_c} W(v)$. 
The Weighted Vertex Cover problem (WVC) on the weighted graph $G$ asks for a vertex cover of $G$ with the minimum weight,  among all vertex covers of $G$.

Using the node-based representation (i.e. the search space is $\{0,1\}^n$, and for any solution $x = x_1 \ldots x_n$ the node $v_i$ is chosen iff $x_i=1$), the Integer Linear Programming (ILP) formulation for WVC is given as follows.
\begin{eqnarray*}
&& min \ \sum_{i=1}^n W(v_i)\cdot x_i \\
st. && x_i+x_j\geq 1 \ \ \ \ \ \forall \ [v_i,v_j]\in E \\
&& x_i\in \{0,1\} \ \ \ \ \ \ i = 1, ..., n
\end{eqnarray*}

By relaxing the constraint $x_i\in \{0,1\}$ of the ILP given above to $x_i\in [0,1]$, the Linear Programming (LP) formulation for the fractional WVC is obtained. Hochbaum~\cite{LPForWeighteVC1983} showed that a 2-approximate solution can be found by using the LP result of the fractional WVC --- include all the vertices $v_i$ with $x_i\geq 1/2$.
The dual form of the LP formulation (or simply called dual formulation) for the fractional WVC is given as follows, where $Y: E \rightarrow \mathds{R}^+ \cup \{0\}$ denotes a value assignment on the edges.
\begin{eqnarray*}
&& max \ \sum_{e \in E} Y(e)\\
st. && \sum_{e \in E_G(v)} Y(e) \leq W(v) \ \ \ \ \  \forall \ v\in V
\end{eqnarray*}

The value assignment $Y$ is called a {\it dual-solution} of $G$ in the paper. Given a vertex $v \in V$, it {\it satisfies} the {\it dual-LP constraint} with respect to the dual-solution $Y$ if $\sum_{e \in E_G(v)} Y(e) \leq W(v)$. Similarly, for an edge $e \in E$, it {\it satisfies} the dual-LP constraint with respect to $Y$ if both its endpoints satisfy the dual-LP constraint with respect to $Y$.
The dual-solution $Y$ of $G$ is {\it feasible} if all the vertices in $G$ satisfy the dual-LP constraint with respect to $Y$; otherwise, {\it infeasible}. The vertex $v \in V$ is {\it tight} with respect to $Y$ if $\sum_{e \in E_G(v)} Y(e) = W(v)$, and the edge $e \in E$ is {\it tight} with respect to $Y$ if at least one of its two endpoints is tight with respect to $Y$.
Given a dual-solution $Y$ of $G$, denote by $V_G(Y)$ the set containing all the vertices in $G$ that do not satisfy the dual-LP constraint with respect to $Y$, and by $E_G(Y)$ the set containing all the edges that are incident to the vertices in $V_G(Y)$.

A {\it maximal feasible dual-solution} (MFDS) of $G$ is a feasible dual-solution of $G$ such that none of the edges can be assigned a larger LP value without violating the dual-LP constraint.
Given an MFDS $Y$ of $G$, it induces a vertex cover of $G$ with ratio 2 directly, which contains all tight vertices with respect to $Y$ (a formal proof about the approximate ratio can be found in Theorem 8.4 of~\cite{du2011design}).

Two versions of the Dynamic Weighted Vertex Cover problem (DWVC) are studied in the paper, whose formal formulations are given below.

\begin{quote}
DWVC with Edge Modification ({\bf DWVC-E}) \\
{\it Input}: \ \ \ a weighted graph $G = (V,E,W)$, an MFDS $Y_{\textup{orig}}$ of $G$, and a new edge-set $E^*$; \\
{\it Output}: an MFDS $Y^*$ of $G^* = (V,E^*, W)$

\medskip

DWVC with Weight Modification ({\bf DWVC-W}) \\
{\it Input}: \ \ \ a weighted graph $G = (V,E,W)$, an MFDS $Y_{\textup{orig}}$ of $G$, and a new weight function $W^*$; \\
{\it Output}: an MFDS $Y^*$ of $G^* = (V,E, W^*)$
\end{quote}

There are two special variants for DWVC-E, DWVC-E$^+$ and DWVC-E$^-$. 
Let $E^+ = E^* \setminus E$ and $E^- = E \setminus E^*$. 
The variants DWVC-E$^+$ and DWVC-E$^-$ consider the cases $E^- = \emptyset$ and $E^+ = \emptyset$, respectively.
Similarly, there are two special variants for DWVC-W, DWVC-W$^+$ and DWVC-W$^-$. 
Let $V^+ = \{v \in V|W^*(v) > W(v) \}$ and $V^- = \{v \in V|W^*(v) < W(v) \}$.
The variants DWVC-W$^+$ and DWVC-W$^-$ consider the cases $V^- = \emptyset$ and $V^+ = \emptyset$, respectively.

\section{Four Adaptive Algorithms}
\label{sec:algorithms}
We start with the subsection that introduces the (1+1)~EA and RLS, then the subsection that introduces the \mbox{\textup{(1+1)~EA with 1/5-th Rule}} and \mbox{\textup{RLS with 1/5-th Rule}}, finally the subsection that gives two simple lemmata based on the selection mechanism of the four algorithms. 

It is worthy to point out that the solution maintained by the four algorithms is actually a $m$-dimensional vector (recall that $m$ is the number of edges in $G^* = (V^*,E^*,W^*)$), $[x_1,x_2,\ldots,x_m]$, in which each $x_i$ ($1 \le i \le m$) is the LP value of the edge $e_i \in E^*$. 
We always use notation $M$ to denote a mutation of the four algorithms, which corresponds to an adjustment (increment or decrement) on the values of some elements (corresponding to the edges chosen by $M$) in the vector.

\subsection{(1+1)~EA and RLS}
Consider two weighted graphs $G = (V,E,W)$ and $G^* = (V^*,E^*,W^*)$, where $G^*$ is obtained by one of the two graph-editing operations mentioned above on $G$. We study the expected runtime (i.e., the expected number of fitness evaluations) of the (1+1) EA and RLS, given in Algorithm~\ref{alg:(1+1) EA} and~\ref{alg:RLS} respectively, to find an MFDS of $G^*$ starting with a given MFDS $Y_{\textup{orig}}$ of $G$ (not from scratch).
The two algorithms run in a similar way, except the mechanism selecting edges for mutation. The (1+1) EA selects each edge in $E^*$ with probability $1/m$ at each iteration, resulting in an edge-subset $I$ containing all the selected edges (see step 7 of Algorithm~\ref{alg:(1+1) EA}), and adjusts the LP values of the edges in $I$. The RLS differs from the (1+1)~EA by selecting exactly one edge in $E^*$ in each round.

The two algorithms share the same general idea: If $Y_{\textup{orig}}$ is also a feasible dual-solution of $G^*$, then they directly increase the LP values of the edges in $G^*$ until the LP value of any edge cannot be assigned with a larger value under the dual-LP constraint (i.e, an MFDS of $G^*$ is found if the claimed condition is met). 
Note that no infeasible dual-solution would be accepted during the process. 
If $Y_{\textup{orig}}$ is an infeasible dual-solution of $G^*$, then the two algorithms first decrease the LP values of the edges in $E_{G^*}(Y_{\textup{orig}})$ (because only the vertices in $V_{G^*}(Y_{\textup{orig}})$ violate the dual-LP constraint with respect to $Y_{\textup{orig}}$), aiming to get a feasible dual-solution $Y_t$ of $G^*$ as soon as possible, afterwards, increase the LP values of the edges in $G^*$ to get an MFDS based on $Y_t$. 

The general idea of the two algorithms shows that the feasibility of the maintained solution decides the adjustment directions of the LP values of the selected edges. Thus we give a sign function $s(Y)$ below, to judge whether or not the considered solution $Y$ is a feasible dual-solution of $G^*$. 
\begin{displaymath}
  s(Y) = \left\{
     \begin{array}{lr}
       -1 & \ \ $if$ \ V_{G^*}(Y) \neq \emptyset, \ \ \ $i.e.,$ \ Y \ $is$ \ $infeasible$\\
       1 & $feasible$\\
     \end{array}
   \right.
\end{displaymath}
It is necessary to point out that a mutation of the (1+1) EA may choose more than one edge, and the LP values of the chosen edges are required to be either all increased or all decreased. 
In addition to the sign function $s()$, we also present a function $f(Y',Y)$ to compare the fitness of $Y'$ and $Y$, where $Y'$ is the dual-solution obtained by a mutation $M$ on the dual-solution $Y$ maintained by the two algorithms. It is defined as follows: $f(Y',Y) \ge 0$ if $Y'$ is not worse than $Y$; $f(Y',Y) < 0$ otherwise. 
\begin{equation*}
f(Y',Y) = 
\begin{cases} 
s(Y') \cdot \sum_{e \in E^*} \big(Y'(e) - Y(e) \! \big) & \text{if $s(Y) = 1$} \\
\sum_{e \in E_{G^*}(Y)} \big(Y(e) - Y'(e) \! \big) - m \cdot W_{\textup{max}} \cdot \sum_{e \in E^* \setminus E_{G^*}(Y)} \big |Y(e) - Y'(e) \! \big| & \text{if $s(Y) = -1$}
\end{cases}
\end{equation*}

By the general idea of the two algorithms given above, if $Y$ is a feasible dual-solution of $G^*$, then the two algorithms increase the LP values of the edges, thus we always have that $\sum_{e \in E^*} Y'(e) \ge \sum_{e \in E^*} Y(e)$ for the obtained offspring $Y'$. 
If $Y'$ is infeasible, then $s(Y') = -1$ and $f(Y',Y) < 0$; otherwise, $s(Y') = 1$ and $f(Y',Y) \ge 0$.
If $Y$ is an infeasible dual-solution of $G^*$, then the two algorithms
decrease the LP values of the edges firstly, aiming to get a feasible dual-solution of $G^*$. 
Note that the LP values of the edges in $E^* \setminus E_{G^*}(Y)$ do not need to be decreased as they satisfy the dual-LP constraint with respect to $Y$.
If they are decreased during the process to get the first feasible dual-solution, then the algorithm may spend much extra time to make up the decrements on the LP values of the edges in $E^* \setminus E_{G^*}(Y)$ (i.e., spend extra time to make the edges in $E^* \setminus E_{G^*}(Y)$ be tight again). 
Thus the term of $f(Y',Y)$,  
$$-m \cdot W_{\textup{max}} \cdot \sum_{e \in E^* \setminus E_{G^*}(Y)} \big|Y(e) - Y'(e) \! \big| \ ,$$
penalizes the mutation that decreases the LP values of the edges in $E^* \setminus E_{G^*}(Y)$, which guides the mutation to decrease only the LP values of the edges in $E_{G^*}(Y)$.
More specifically, if the LP value of some edge in $E^* \setminus E_{G^*}(Y)$ is changed by the considered mutation (note that the increment or decrement on the LP value is always $\geq 1$), then 
\begin{eqnarray}
\label{eqn:fitness1}
-m \cdot W_{\textup{max}} \cdot \sum_{e \in E^* \setminus E_{G^*}(Y)} \big|Y(e) - Y'(e) \! \big| \leq - m \cdot W_{\textup{max}} \ .
\end{eqnarray} 

\begin{myAlgorithm}
  \caption{(1+1) EA}
   \label{alg:(1+1) EA}
    Initialize solution $Y$ and step size function $\sigma : E^* \rightarrow 1$ \; 
    \tcp{\small{$Y(e) = Y_{\textup{orig}}(e)$ for each $e \in E^* \cap E$, and $Y(e) = 0$ for each $e \in E^* \setminus E$}}
    Determine $s(Y)$ \;
    \While {the termination criteria not satisfied}
        {
        $Y' := Y$ and $I := \emptyset$  \tcp*{\small{set $I$ keeps all edges chosen by the mutation}}
        \For {each edge $e \in E^*$ with probability $1/m$}
            {    
                $Y'(e) := \max \{ Y(e) + s(Y) \cdot \sigma(e), 0\}$ \;
                $I := I \cup \{e\}$ \;
                    
            }
            Determine $s(Y')$ and $f(Y',Y)$ \;
        \eIf {$f(Y',Y) \ge 0$}
             {
                $Y := Y'$ \;
                $\sigma(e) := \min \{\alpha \cdot \sigma(e), \alpha^{\lceil \log_{\alpha} W_{\textup{max}} \rceil +1} \}$ for all $e \in I$ \; 
                \tcp{\small{$\alpha$ is the increasing/decreasing rate of the step size}}         
              }
             {
                \If {$s(Y) > 0$}
                {
                    Let $I'$ be the subset of $I$ such that each edge $e \in I'$ violates the dual-LP constraint with respect to $Y'$, but no other edge in $I$ shares the endpoint that violates the dual-LP constraint with $e$ \;
                    and $\sigma(e) := \max \{\sigma(e)/\alpha,1\}$ for all $e \in I'$ \;
                }
             }
        }
\end{myAlgorithm}

Now we consider the upper bound of the term $\sum_{e \in E_{G^*}(Y)} \big(Y(e) - Y'(e) \! \big)$ under the assumption that the LP value of some edge in $E^* \setminus E_{G^*}(Y)$ is decreased by the considered mutation.
Since all solutions obtained during the process from the initial solution $Y_{\textup{orig}}$ to $Y$ are infeasible (including $Y_{\textup{orig}}$), $Y(e) \le Y_{\textup{orig}}(e) \le W_{\textup{max}}$ for each edge $e \in E^*$, i.e., $0 \le Y(e) - Y'(e) \le W_{\textup{max}}$. 
Moreover, as $E^* \setminus E_{G^*}(Y)$ cannot be empty, $|E_{G^*}(Y)| < m$. Therefore,  
\begin{eqnarray}
\label{eqn:fitness2}
\sum_{e \in E_{G^*}(Y)} \big(Y(e) - Y'(e) \! \big) < m \cdot W_{\textup{max}} \ .
\end{eqnarray} 
Combining Inequalities~\ref{eqn:fitness1} and~\ref{eqn:fitness2}, $f(Y',Y) < 0$ no matter whether $Y'$ is feasible or infeasible, implying that the mutation would be rejected if it changes the LP value of some edge in $E^* \setminus E_{G^*}(Y)$.
For the case that no LP value of the edges in $E^* \setminus E_{G^*}(Y)$ is decreased by the considered mutation, it is easy to see that $f(Y',Y) \ge 0$.

\begin{myAlgorithm}
  \caption{RLS}
   \label{alg:RLS}

    Initialize solution $Y$ and step size function $\sigma : E^* \rightarrow 1$ \; 
    \tcp{\small{$Y(e) = Y_{\textup{orig}}(e)$ for each $e \in E^* \cap E$, and $Y(e) = 0$ for each $e \in E^* \setminus E$}}
    Determine $s(Y)$ \;
    \While {the termination criteria not satisfied}
        {
            $Y' := Y$ \;
            Choose an edge $e \in E^*$ uniformly at random \;
            $Y'(e) := \max \{Y(e) + \sigma(e) \cdot s(Y), 0\}$ \;
            Determine $s(Y')$ and $f(Y',Y)$ \;
        \eIf {$f(Y',Y) \ge 0$}
             {$Y := Y'$ and $\sigma(e) := \min \{\alpha \cdot \sigma(e), \alpha^{\lceil \log_{\alpha} W_{\textup{max}} \rceil +1} \}$ \;
             \tcp{\small{$\alpha$ is the increasing/decreasing rate of the step size}} 
             }
             {
                \If {$s(Y) > 0$}
                {
                    $\sigma(e) := \max \{\sigma(e)/\alpha,1\}$ \;
                }
             }
        }
\end{myAlgorithm}

To deal with the case that the weights on the vertices are exponentially large with respect to the size of the graph (the number $m$ of edges), the Step Size Adaption strategy~\cite{beyer2002evolution} is incorporated into the two algorithms
(see steps 9-15 of Algorithm~\ref{alg:(1+1) EA} and steps 8-12 of Algorithm~\ref{alg:RLS}): the increment (called {\it step size} in the following text) on the LP values of the edges can exponentially increase or decrease. Let $\sigma : E^* \rightarrow \mathds{N}^+$ be the step size function that keeps the step size for each edge in $E^*$, and let $\sigma$ be initialized as $\sigma : E^* \rightarrow 1$.  

Given a mutation of the RLS on $Y$, if it is accepted (i.e., $f(Y',Y) \geq 0$, where $Y'$ is the solution obtained by the mutation on $Y$), then the step size of the chosen edge $e$ is increased by a factor $\alpha$, where $\alpha$ is an integer between 2 and $W_{\textup{max}}$; otherwise, decreased by a factor $\alpha$ if $s(Y) > 0$.
W.l.o.g., we assume that the step size of each edge can be upper and lower bounded by $\alpha^{\lceil \log_{\alpha} W_{\textup{max}} \rceil +1}$ and 1, respectively.
Given a mutation of the (1+1) EA on $Y$ resulting $Y'$, if it is accepted, then the step size of each edge $e \in I$ is increased by a factor $\alpha$; otherwise, the step size of each edge $e \in I'$ is decreased by a factor $\alpha$ if $s(Y) > 0$, where $I'$ is the subset of $I$ such that each edge $e \in I'$ violates the dual-LP constraint with respect to $Y'$, but no other edge in $I$ shares the endpoint that violates the dual-LP constraint with $e$ (see step 14 of Algorithm~\ref{alg:(1+1) EA}). 
The reason why we define the subset $I'$ of $I$ is that we can ensure that the step size of each edge in $I'$ is unfit for $Y$. 
For an edge $e$ in $I \setminus I'$, there are two cases: (1) neither its two endpoints violates the dual-LP constraint with respect to the dual-solution $Y'$; (2) there is another edge $e' \in I \setminus \{e\}$ that has a common endpoint with $e$ such that the common endpoint of $e$ and $e'$ violates the dual-LP constraint with respect to the dual-solution $Y'$. For case (1), we should not decrease its step size. For case (2),
we cannot conclude that the step size of $e$ is unfit for the solution $Y$, because the step size of $e$ may be fit for $Y$ if it is considered independently. 
If the algorithms adopt a ``radical'' strategy that decreases the step sizes of all the edges in $I$ if the mutation is rejected, then they would spend much time on increasing the step sizes of the edges (in some extreme case, the step size cannot exponentially increase, resulting in an exponential waiting time to get an MFDS~\cite{pourhassan2017use}). Thus we adopt a ``conservative" strategy: Only decrease the step sizes of the edges in $I'$.

Note that for any mutation of the (1+1) EA or RLS that is rejected, the step sizes of the edges selected by the mutation are not decreased if $s(Y) < 0$, because the rejection of the mutation is caused by the selection of the edges, not the violation of the dual-LP constraint.

\subsection{(1+1)~EA with 1/5-th Rule and RLS with 1/5-th Rule}

To eliminate the artificial influences on the two algorithms given in the previous subsection, such as the adjustment direction (increasing or decreasing the LP values) controlled by the sign function $s()$, and the strict condition to decrease the step size of a specific edge given in the (1+1) EA (only the step sizes of the edges in $I'$ can be decreased if the mutation is rejected and $s() > 0$), we incorporate the 1/5-th (success) rule, and present two algorithms, the (1+1) EA with 1/5-th Rule and RLS with 1/5-th Rule, given in Algorithm~\ref{alg:(1+1) EA with 1/5-th Rule} and~\ref{alg:RLS with 1/5-th Rule} respectively.  
The two algorithms follow the fitness comparing function $f(Y',Y)$ defined in the previous subsection. 

\begin{myAlgorithm}
  \caption{(1+1) EA with 1/5-th Rule}
   \label{alg:(1+1) EA with 1/5-th Rule}

    Initialize solution $Y$ and step size function $\sigma : E^* \rightarrow 1$ \; 
    \tcp{\small{$Y(e) = Y_{\textup{orig}}(e)$ for each $e \in E^* \cap E$, and $Y(e) = 0$ for each $e \in E^* \setminus E$}}
    \While {the termination criteria not satisfied}
        {
        $Y' := Y$ and $I := \emptyset$ \tcp*{\small{set $I$ keeps all edges chosen by the mutation}}
        Choose $b \in \{-1,1\}$ uniformly at random \;
        \For {each edge $e \in E^*$ with probability $1/m$}
            {
                $Y'(e) := \max \{ Y(e) + b \cdot \sigma(e), 0\}$ \;
                $I := I \cup \{e\}$ \;
            }
            Determine $f(Y',Y)$ \;
        \eIf {$f(Y',Y) \ge 0$}
             {
                $Y := Y'$ \;
                $\sigma(e) := \min \{\alpha \cdot \sigma(e), \alpha^{\lceil \log_{\alpha} W_{\textup{max}} \rceil +1} \}$ for all $e \in I$ \;
                \tcp{\small{$\alpha$ is the increasing/decreasing rate of the step size}} 
              }
             {
                $\sigma(e) := \max \{\alpha^{-1/4} \cdot \sigma(e),1\}$ for all $e \in I$ \;
             }
        }
\end{myAlgorithm}

The general idea of the two algorithms is: no matter whether or not the current maintained dual-solution is feasible, they either increase or decrease the LP values of the edges selected by the mutation of the algorithms with the same probability 1/2 (depend on the value of $b$, see step 4 of Algorithm~\ref{alg:(1+1) EA with 1/5-th Rule} and step 5 of Algorithm~\ref{alg:RLS with 1/5-th Rule}). If the mutation is accepted, then the dual-solution is updated, and the step sizes of these chosen edges are increased by a factor $\alpha$; otherwise, the step sizes of these chosen edges are decreased by a factor $\alpha^{1/4}$. It is necessary to point out that for a mutation of the (1+1) EA with 1/5-th Rule, we still require that the LP values of the edges selected by the mutation either all increase or all decrease.

The previous subsection analyzed the cases of the fitness comparing function $f(Y',Y)$ when the LP values of the edges are increased if $s(Y) =1$, or the LP values of the edges are decreased if $s(Y) =-1$. 
Here we supplement the analysis of the case that the LP values of the edges are decreased if $s(Y) =1$ , and the case that the LP values of the edges are increased if $s(Y) =-1$. 
If $s(Y) =1$ and the LP values of the edges are decreased, then obviously $s(Y') =1$, and 
$$f(Y',Y) = \sum_{e \in E^*} \left(Y'(e) - Y(e) \! \right) < 0 \ .$$ 
If $s(Y) = -1$ and the LP values of some edges are increased, then  
$$f(Y',Y) = \sum_{e \in E_{G^*}(Y)} \left(Y(e) - Y'(e) \! \right) - m \cdot W_{\textup{max}} \cdot \sum_{e \in E^* \setminus E_{G^*}(Y)} \left|Y(e) - Y'(e) \! \right| < 0 \ .$$

\begin{myAlgorithm}
  \caption{RLS with 1/5-th Rule}
   \label{alg:RLS with 1/5-th Rule}

    Initialize solution $Y$ and step size function $\sigma : E^* \rightarrow 1$ \; 
    \tcp{\small{$Y(e) = Y_{\textup{orig}}(e)$ for each $e \in E^* \cap E$, and $Y(e) = 0$ for each $e \in E^* \setminus E$}}
    \While {the termination criteria not satisfied}
        {
            $Y' := Y$ \;
            Choose an edge $e \in E^*$ uniformly at random \;
            Choose $b \in \{-1,1\}$ uniformly at random \;
            $Y'(e) := \max \{Y(e) + b \cdot \sigma(e), 0\}$ \;
            Determine $f(Y',Y)$ \;
        \eIf {$f(Y',Y) \ge 0$}
             {$Y := Y'$ and $\sigma(e) := \min \{\alpha \cdot \sigma(e), \alpha^{\lceil \log_{\alpha} W_{\textup{max}} \rceil +1} \}$ \;
             \tcp{\small{$\alpha$ is the increasing/decreasing rate of the step size}} 
             }
             {
                    $\sigma(e) := \max \{\alpha^{-1/4} \cdot \sigma(e),1\}$ \;
             }
        }
\end{myAlgorithm}

\subsection{Observations based on Fitness Comparing Function}

The selection mechanism of the four algorithms given above implies the following two lemmata.

\begin{lemma}
\label{lem:sign function}
Given two dual-solutions $Y$ and $Y'$, where $Y'$ is obtained by a mutation of the four algorithms on $Y$, if $Y'$ is accepted then $s(Y') \geq s(Y)$.
\end{lemma}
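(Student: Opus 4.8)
The plan is to exploit the fact that the sign function $s(\cdot)$ takes only the two values $-1$ and $1$. Consequently the inequality $s(Y') \geq s(Y)$ can fail only in the single configuration $s(Y) = 1$ and $s(Y') = -1$; whenever $s(Y) = -1$ the claim holds trivially, since $s(Y') \in \{-1,1\}$ is always $\geq -1$. So the whole lemma reduces to one implication: if $Y$ is feasible (i.e. $s(Y) = 1$) and the mutation producing $Y'$ is accepted, then $Y'$ is feasible as well. I would therefore fix $s(Y) = 1$ and argue by contradiction, supposing the accepted offspring $Y'$ is infeasible, so that $s(Y') = -1$.

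Next I would split on the direction in which the mutation moves the LP values. For the (1+1)~EA and RLS (Algorithms~\ref{alg:(1+1) EA} and~\ref{alg:RLS}) the step taken when $s(Y)=1$ is $Y'(e) = \max\{Y(e) + \sigma(e), 0\} = Y(e) + \sigma(e)$, so every selected edge is strictly increased by $\sigma(e) \geq 1$; the identical increasing step occurs in the two $1/5$-th-rule algorithms when the random bit is $b = 1$. In this increasing case the relevant branch of the fitness function gives $f(Y',Y) = s(Y') \cdot \sum_{e \in E^*}\big(Y'(e) - Y(e)\big)$. Because the feasible $Y$ can become infeasible only if at least one edge was actually increased (an empty selected set leaves $Y' = Y$ feasible), the sum $\sum_{e \in E^*}\big(Y'(e) - Y(e)\big) = \sum_{e \in I}\sigma(e)$ is strictly positive. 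Under the contradiction hypothesis $s(Y') = -1$, hence $f(Y',Y) < 0$, so the mutation is rejected, contradicting acceptance.

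The remaining case is a decreasing step, which arises only for the $1/5$-th-rule algorithms when $b = -1$. Here I would invoke monotonicity of the dual-LP constraints: lowering edge values can only reduce each sum $\sum_{e \in E_{G^*}(v)} Y(e)$, so a feasible $Y$ necessarily yields a feasible $Y'$ and $s(Y') = 1$ automatically, irrespective of acceptance. This already establishes $s(Y') \geq s(Y)$ for that case; moreover the supplementary fitness computation recorded in the preceding subsection, namely $f(Y',Y) = \sum_{e \in E^*}\big(Y'(e) - Y(e)\big) < 0$ for a decreasing feasible move, shows such a mutation is accepted only in the degenerate situation $Y' = Y$, where $s(Y') = s(Y) = 1$ trivially.

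The main obstacle I expect is the bookkeeping around the clamping $\max\{\cdot,0\}$ and the null-move possibilities rather than any deep difficulty: I must verify that whenever $Y'$ is genuinely infeasible after an increasing step the total change is \emph{strictly} positive, so that $f(Y',Y) < 0$ is strict and rejection is forced, while correctly handling the degenerate mutations — an empty selected set $I$, or the decrease of edges already at value $0$ — which leave $Y' = Y$, yield $f(Y',Y) = 0 \geq 0$, and are accepted but harmlessly satisfy $s(Y') = s(Y)$.
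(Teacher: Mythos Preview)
Your proposal is correct and matches the paper's approach. The paper does not give a standalone proof of this lemma; it states Lemmata~\ref{lem:sign function} and~\ref{lem:infeasible to MFDS} as direct consequences of the fitness-function discussion in Subsections~3.1 and~3.2, and your case split (trivial when $s(Y)=-1$; for $s(Y)=1$ use $f(Y',Y)=s(Y')\cdot\sum_{e}(Y'(e)-Y(e))<0$ on an increasing move to an infeasible $Y'$; monotonicity of the constraints on a decreasing move) is exactly the reasoning laid out there.
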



\begin{lemma}
\label{lem:infeasible to MFDS}
Given two dual-solutions $Y$ and $Y'$, where $Y'$ is obtained by a mutation of the four algorithms on $Y$,
if $Y$ is infeasible and $Y'$ is accepted, then the mutation only decreases the LP values of the edges in $E_{G^*}(Y)$.
\end{lemma}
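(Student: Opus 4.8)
The plan is to split the argument along two axes: first to show that an accepted mutation out of an infeasible $Y$ must \emph{decrease} (rather than increase) LP values, and then to show that the only edges whose values actually change lie in $E_{G^*}(Y)$. Throughout, $Y$ infeasible means $V_{G^*}(Y)\neq\emptyset$, hence $s(Y)=-1$, and the acceptance hypothesis is $f(Y',Y)\ge 0$.

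First I would establish the direction of change. For the (1+1)~EA and RLS the update rule sets $Y'(e)=\max\{Y(e)+s(Y)\cdot\sigma(e),0\}$, so with $s(Y)=-1$ every selected edge has its value decreased by construction, and there is nothing to prove about the direction. For the two algorithms with the 1/5-th rule the direction $b$ is drawn uniformly from $\{-1,1\}$, so I must rule out the increasing case. This is exactly the supplementary computation recorded above the statement: if $s(Y)=-1$ and the selected values are increased, then every term of $\sum_{e\in E_{G^*}(Y)}(Y(e)-Y'(e))$ is $\le 0$ and the penalty term is $\le 0$, giving $f(Y',Y)<0$, so the mutation is rejected. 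Since $Y'$ is assumed accepted, we must have $b=-1$, i.e. the mutation decreases the LP values of its selected edges. Thus for all four algorithms an accepted mutation out of an infeasible $Y$ is a decrease.

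Next I would pin down \emph{which} edges change, arguing by contradiction. Suppose some edge $e_0\in E^*\setminus E_{G^*}(Y)$ has its value decreased; in particular $E^*\setminus E_{G^*}(Y)\neq\emptyset$, so $|E_{G^*}(Y)|<m$. Here I would simply invoke the two bounds already derived for the infeasible branch of $f$: Inequality~\ref{eqn:fitness1} gives $-m\cdot W_{\textup{max}}\cdot\sum_{e\in E^*\setminus E_{G^*}(Y)}\big|Y(e)-Y'(e)\big|\le -m\cdot W_{\textup{max}}$, using that any nonzero change of an LP value is at least $1$, while Inequality~\ref{eqn:fitness2} gives $\sum_{e\in E_{G^*}(Y)}(Y(e)-Y'(e))<m\cdot W_{\textup{max}}$, using $0\le Y(e)-Y'(e)\le W_{\textup{max}}$ together with $|E_{G^*}(Y)|<m$. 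Adding the two bounds yields $f(Y',Y)<m\cdot W_{\textup{max}}-m\cdot W_{\textup{max}}=0$, contradicting acceptance. Hence no edge of $E^*\setminus E_{G^*}(Y)$ can change, and combined with the first step the accepted mutation decreases only LP values of edges in $E_{G^*}(Y)$. The degenerate case $E^*\setminus E_{G^*}(Y)=\emptyset$ is immediate, since then every edge already lies in $E_{G^*}(Y)$.

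Because the two key inequalities have been carried out in the text preceding the statement, the proof is essentially a bookkeeping exercise combining them with the acceptance condition $f(Y',Y)\ge 0$. The only genuinely new piece is handling the randomized direction $b$ for the 1/5-th-rule variants, which is the place I would be most careful; everything else is a direct appeal to Inequalities~\ref{eqn:fitness1} and~\ref{eqn:fitness2}.
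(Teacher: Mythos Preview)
Your proposal is correct and follows the same argument the paper itself lays out. The paper does not give a separate proof for this lemma; it merely states that the two lemmata are implied by the selection mechanism, having already derived Inequalities~\ref{eqn:fitness1} and~\ref{eqn:fitness2} and the supplementary computation for the 1/5-th-rule variants in the preceding text. Your two-axis decomposition (first rule out increases when $s(Y)=-1$, then rule out touching edges outside $E_{G^*}(Y)$ via the penalty bound) is exactly how those pieces fit together, so there is nothing substantively different between your write-up and the paper's implicit argument.
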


\section{Runtime Analysis for the (1+1)~EA and RLS}

We start the section with a notion related to mutation, which plays an important role in the following discussion.
Given an edge $e$ in the weighted graph $G^*$, a mutation of the (1+1) EA or RLS is a {\it valid mutation on $e$} if it results in an increment or decrement on the LP value of $e$, or on the step size $\sigma(e)$ of $e$. Note that if the mutation is of the (1+1) EA, then it may choose some other edges in addition to $e$.
The two lemmata given below study the behaviors of the (1+1) EA and RLS on a specific edge $e^* = [v_1,v_2]$ in $G^*$.

\begin{lemma}
\label{lem:(1+1) EA for increment of G1}
Consider a feasible dual-solution $Y^{\dagger}$ of $G^*$, and an initial value $\sigma_1$ of the step size of the edge $e^*$. For a feasible dual-solution $Y^{\ddagger}$ obtained by the \mbox{\textup{(1+1) EA}} (or \mbox{\textup{RLS}}) starting with $Y^{\dagger}$, where $Y^{\ddagger}(e^*) - Y^{\dagger}(e^*) \geq \sigma_1$, the algorithm takes expected runtime $\Or \! \left(\alpha m \log_{\alpha} \left(Y^{\ddagger}(e^*) - Y^{\dagger}(e^*) \! \right) \! \right)$ to increase the LP value of $e^*$ from $Y^{\dagger}(e^*)$ to $Y^{\ddagger}(e^*)$. 
\end{lemma}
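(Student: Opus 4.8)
The plan is to reduce the whole analysis to the step-size dynamics of the single edge $e^*=[v_1,v_2]$ and to show that acceptance of a valid mutation on $e^*$ is governed purely by the comparison between the current step size $\sigma(e^*)$ and the remaining increment $\Delta_t := Y^{\ddagger}(e^*) - Y_t(e^*)$ still to be realised at time $t$. First I would observe that, since $Y^{\dagger}$ and $Y^{\ddagger}$ are both feasible and the run connecting them accepts only feasible offspring, Lemma~\ref{lem:sign function} forces $s(Y)=1$ throughout, so every accepted mutation merely increases LP values. Hence $Y_t(e)$ is non-decreasing in $t$ for every edge, and $Y_t \le Y^{\ddagger}$ holds componentwise along the trajectory that reaches $Y^{\ddagger}$. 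From $Y_t\le Y^{\ddagger}$ and the feasibility of $Y^{\ddagger}$ at $v_1,v_2$ I would deduce that the slack $W(v_i)-\sum_{e\in E_{G^*}(v_i)}Y_t(e)$ at each endpoint of $e^*$ is at least $\Delta_t$; combined with the no-overshoot property (the value $Y(e^*)$ is non-decreasing and equals $Y^{\ddagger}(e^*)$ only at the end, so it never exceeds it earlier) this yields the clean characterisation for RLS: a valid mutation on $e^*$ with step $\sigma$ is accepted iff $\sigma\le\Delta_t$, whereupon $Y(e^*)$ grows by $\sigma$ and $\sigma\mapsto\alpha\sigma$, and is rejected iff $\sigma>\Delta_t$, whereupon $\sigma\mapsto\sigma/\alpha$.

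With this reduction the problem becomes a deterministic counting question. Because every accepted step uses a size $\sigma\le\Delta_t\le\Delta\le W_{\textup{max}}$, all accepts occur strictly below the cap $\alpha^{\lceil\log_\alpha W_{\textup{max}}\rceil+1}$, and because a rejection requires $\sigma>\Delta_t\ge 1$ the floor $1$ is never active before $e^*$ reaches its target; thus each accept multiplies $\sigma$ by $\alpha$ and each reject divides it by $\alpha$ without truncation. I would bound the accepts by amortisation over the $\Or(\log_\alpha\Delta)$ distinct step-size levels $\alpha^i\sigma_1\in[1,\Delta]$: the initial climb contributes at most one accept per level, and once $\sigma$ settles into oscillation at a level $s$ one has $\Delta_t<\alpha s$, so at most $\alpha$ further accepts can occur there before $\Delta_t$ drops below $s$ and that level is abandoned for good. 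Hence at most $\alpha+1$ accepts per level and $a=\Or(\alpha\log_\alpha\Delta)$ accepts in total. The rejects I would control through the potential $\phi=\log_\alpha\sigma$, which rises by one on each accept and falls by one on each reject (no truncation), so that $r\le a+\log_\alpha\sigma_1=\Or(\alpha\log_\alpha\Delta)$; the total number of valid mutations on $e^*$ is therefore $\Or(\alpha\log_\alpha\Delta)$.

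Finally I would translate the count of valid mutations into an expected number of iterations. For RLS every iteration selects $e^*$ with probability $1/m$ and any such iteration is itself a valid mutation on $e^*$ (it either changes $Y(e^*)$ or, on rejection with $s(Y)=1$, decreases $\sigma(e^*)$), so the expected waiting time per valid mutation is $m$ and the expected runtime is $\Or(\alpha m\log_\alpha\Delta)$. For the \mbox{\textup{(1+1) EA}} the same bound on the number of valid mutations on $e^*$ survives: an accepted mutation still forces $\sigma\le\Delta_t$ on $e^*$, and a rejected mutation decreases $\sigma(e^*)$ only when $e^*\in I'$, i.e. when $e^*$ itself violates a constraint, which again requires $\sigma>\Delta_t$, whereas rejections caused by a co-selected edge leave both $Y(e^*)$ and $\sigma(e^*)$ untouched and are not valid mutations on $e^*$ at all. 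Restricting attention to the iterations that select only $e^*$ (probability $\ge \tfrac1m(1-\tfrac1m)^{m-1}\ge \tfrac1{em}$), which behave exactly as in RLS, then delivers the same $\Or(\alpha m\log_\alpha\Delta)$ bound.

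The main obstacle is the structural step of the first paragraph: turning the genuine acceptance rule (feasibility of the full offspring) into the one-dimensional rule $\sigma\le\Delta_t$. This rests on the monotonicity of the LP values in the feasible phase, on the no-overshoot property, and on the slack-at-least-remaining estimate at $v_1,v_2$; without it the acceptance of a step would appear to depend on the values of all neighbouring edges. The amortised ``at most $\alpha$ accepts per step-size level'' bound is the second delicate point, since it is precisely where the factor $\alpha$ in the runtime originates, and for the \mbox{\textup{(1+1) EA}} one must additionally verify that edges co-selected with $e^*$ can never corrupt the step-size ledger of $e^*$.
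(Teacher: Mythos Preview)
Your proof is correct and follows essentially the same approach as the paper: reduce to the one-dimensional step-size dynamics of $e^*$ by showing that a valid mutation on $e^*$ is accepted iff $\sigma(e^*)\le Y^{\ddagger}(e^*)-Y_t(e^*)$ (using the slack-at-least-$\Delta_t$ estimate at the endpoints and, for the \mbox{\textup{(1+1) EA}}, the definition of $I'$), count $\Or(\alpha\log_\alpha\Delta)$ valid mutations, and multiply by the $\Or(m)$ expected waiting time per valid mutation. The only cosmetic difference is the bookkeeping of that count---the paper splits the process into a monotone Phase~(I) and an oscillating Phase~(II) and bounds the number $t_i$ of valid mutations at each step-size level $\alpha^i$ directly, whereas you separate accepts (amortised per level) from rejects (via the telescoping potential $\log_\alpha\sigma$); the two tallies yield the same bound.
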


\begin{proof}
We start with the analysis for the (1+1) EA. 
Since $Y^{\dagger}$ is a feasible dual-solution of $G^*$, by Lemma~\ref{lem:sign function}, the sign function $s()$ remains at 1 during the process from $Y^{\dagger}$ to $Y^{\ddagger}$, indicating that the LP value of $e^*$ is monotonically increased from $Y^{\dagger}(e^*)$ to $Y^{\ddagger}(e^*)$.
Let $Y$ be an arbitrary accepted solution of the (1+1) EA during the process, $M$ be a mutation of the (1+1) EA on $Y$, and $Y'$ be the offspring obtained by $M$ on $Y$. 
In the following discussion, we first analyze the impact of the mutation $M$ on the step size $\sigma(e^*)$ of $e^*$, where the notation $\sigma(e^*)$ here denotes the step size of $e^*$ before the generation of $M$.
Observe that $M$ cannot influence $\sigma(e^*)$ if $e^* \notin I$, where $I$ denotes the set containing all the edges selected by $M$ (see step 7 of Algorithm~\ref{alg:(1+1) EA}). Thus in the following discussion, we assume that $e^* \in I$.

Case (1). $\sigma(e^*) \leq Y^{\ddagger}(e^*) - Y(e^*)$.  
If $M$ is accepted by the (1+1) EA, then the step size of $e^*$ is increased from $\sigma(e^*)$ to $\alpha \cdot \sigma(e^*)$;
otherwise, the analysis on $M$ is divided into the two subcases given below.

Case (1.1). An endpoint $v_1$ of $e^*$ violates the dual-LP constraint with respect to $Y'$. Since $\sigma(e^*) \leq Y^{\ddagger}(e^*) - Y(e^*)$, the edge-subset $(E_{G^*}(v_1) \cap I) \setminus \{e^*\}$ cannot be empty, and the increments on the LP values of the edges in $E_{G^*}(v_1) \cap I$ results in the dual-LP constraint violation on $v_1$ with respect to $Y'$.
According to the definition of the edge-set $I'$ (see step 14 of Algorithm~\ref{alg:(1+1) EA}), we have that $e^* \notin I'$, and $M$ cannot influence $\sigma(e^*)$.

Case (1.2). No endpoint of $e^*$ violates the dual-LP constraint with respect to $Y'$. According to the definition of the edge-set $I'$, we also have that $e^* \notin I'$, and $M$ cannot influence $\sigma(e^*)$.

By the above analysis, any mutation of the (1+1) EA cannot cause an decrement on the step size of $e^*$ under Case (1).
If the mutation $M$ only selects the edge $e^*$, then it is a valid mutation on $e^*$, and can be accepted by the algorithm.
The (1+1) EA generates such a valid mutation on $e^*$ with probability $\Om (1 / m)$. Thus under Case (1), the algorithm takes expected runtime $\Or (m)$ to increase the LP value of edge $e^*$ from $Y(e^*)$ to $Y(e^*) + \sigma(e^*)$, and increase the step size of $e^*$ from $\sigma(e^*)$ to $\alpha \cdot \sigma(e^*)$.

Case (2). $\sigma(e^*) > Y^{\ddagger}(e^*) - Y(e^*)$. For the case, the mutation $M$ would be rejected by the (1+1) EA as $e^* \in I$.
The analysis on $M$ can be divided into the following two subcases.

Case (2.1). There is no edge in $I \setminus \{e^*\}$ sharing the endpoint of $e^*$ that violates the dual-LP constraint with respect to $Y'$.  For this subcase, $e^* \in I'$, and the step size of $e^*$ is decreased from $\sigma(e^*)$ to $\sigma(e^*) / \alpha$.

Case (2.2). There is an edge $e^*_1 \in I \setminus \{e^*\}$ sharing the endpoint of $e^*$ that violates the dual-LP constraint with respect to $Y'$. Because of the existence of $e^*_1$, $e^* \notin I'$ and $M$ does not influence the step size of $e^*$.

If the mutation $M$ only selects the edge $e^*$, then it is valid mutation on $e^*$, and belongs to Case (2.1).
The (1+1) EA generates such a valid mutation on $e^*$ with probability $\Om(1 / m)$. 
Thus under Case (2), the (1+1) EA takes expected runtime $\Or (m)$ to decrease the step size of $e^*$ from  $\sigma(e^*)$ to $\sigma(e^*)/ \alpha$.

Now we are ready to analyze the expected runtime of the (1+1) EA to increase the LP value of $e^*$ from $Y^{\dagger}(e^*)$ to $Y^{\ddagger}(e^*)$, using the above obtained results. Since $Y^{\ddagger}(e^*) - Y^{\dagger}(e^*) \geq \sigma_1$, the whole process can be divided into Phase (I) and Phase (II).
Phase (I) contains all steps of the algorithm until the step size of $e^*$ is decreased for the first time, i.e., the step size of $e^*$ can only increase during the phase. More specifically, the condition of Case (1) is always met with respect to the maintained solution $Y$ during Phase (I). 
Phase (II) follows Phase (I), during which the step size of $e^*$ may increase or decrease, but the general trend is decreasing. 
W.l.o.g., assume that the initial value $\sigma_1$ of the step size of $e^*$ is equal to $\alpha^p$, where $p \geq 0$ is an integer not less than 0.

{\bf Phase (I)}. 
Let $q$ be the integer such that 
$$\sum_{i=p}^{q} {\alpha}^i \leq Y^{\ddagger}(e^*) - Y^{\dagger}(e^*) \ \ \textrm{and} \ \ \sum_{i=p}^{q+1} {\alpha}^i > Y^{\ddagger}(e^*) - Y^{\dagger}(e^*) \ .$$
Now it is easy to see that the step size of $e^*$ can be increased from $\alpha^p$ to ${\alpha}^{q+1}$ during the phase. Thus the number of valid mutations on $e^*$ required during Phase (I) is $q-p+1$, where
$$q-p+1 = \left\lfloor \log_{\alpha} \left(\frac{\big( Y^{\ddagger}(e^*) - Y^{\dagger}(e^*) \! \big) \left(\alpha-1 \right)}{\alpha^p} +1 \right) \right\rfloor \ .$$
Combining the expected runtime of the algorithm to generate a valid mutation on $e^*$, 
Phase (I) takes expected runtime $\Or \! \left(m \log_{\alpha} \left(Y^{\ddagger}(e^*) - Y^{\dagger}(e^*) \! \right) \! \right)$ (because $p$ may be 0).

{\bf Phase (II)}. During the phase, the LP value of $e^*$ is increased from $Y^{\dagger}(e^*) + \sum_{i=p}^{q} {\alpha}^i$ to $Y^{\ddagger}(e^*)$, and the step size of $e^*$ is decreased from ${\alpha}^{q+1}$ to 1. Similar to the analysis for Phase (I), we analyze the number $T$ of valid mutations on $e^*$ during Phase (II).
However, to simplify the analysis, we separately consider the number $t_i$ of valid mutations on $e^*$ with step size $\alpha^i$ among the $T$ valid mutations on $e^*$, where $0 \leq i \leq q+1$ (since the step size of $e^*$ can increase or decrease during Phase (II), there may be more than one valid mutation on $e^*$ with step size $\alpha^i$). Obviously $T = \sum_{i=0}^{q+1} t_i$.

We start with the analysis for $t_{q+1}$. Since the valid mutation on $e^*$ with step size $\alpha^{q+1}$ cannot be accepted, the step size will be decreased to $\alpha^{q}$. However, if a valid mutation on $e^*$ with step size $\alpha^{q}$ is accepted, then the step size will be increased to $\alpha^{q+1}$ again. Thus $t_{q+1} \leq 1 + (\alpha - 1) = \alpha$, because there are at most $\alpha - 1$ valid mutations on $e^*$ with step size $\alpha^{q}$ among the $T$ valid mutations on $e^*$ that can be accepted by the algorithm.
Now we consider $t_{i}$ for any $1 \leq i \leq q$, under the assumption that the mutation on $e^*$ with step size $\alpha^{i+1}$ cannot be accepted.
Using the reasoning similar to that given above for the mutation on $e^*$ with step size $\alpha^{q+1}$, we can get that there are at most $\alpha$ valid mutations on $e^*$ with step size $\alpha^{i}$ that can be rejected among the $T$ valid mutations on $e^*$. 
Combining it with the observation that there are at most $\alpha - 1$ valid mutations on $e^*$ with step size $\alpha^{i}$ that can be accepted, we can derive that $t_i \leq 2 \alpha -1$.
Once the step size of $e^*$ is decreased to 1, then the LP value of $e^*$ is between $Y^{\ddagger}(e^*) - \alpha + 1$ and $Y^{\ddagger}(e^*)$. If the LP value of $e^*$ equals $Y^{\ddagger}(e^*)$, then Phase (II) is over, and $t_0 = 0$. If the LP value of $e^*$ is between $Y^{\ddagger}(e^*) - \alpha + 1$ and $Y^{\ddagger}(e^*) -1$, then $t_0 \leq \alpha - 1$. The above analysis gives 
$$T = \sum_{i=0}^{q+1} t_i \leq (2 \alpha -1) \cdot (q+1) \ .$$  
By the analysis for Case (1-2), Phase (II) takes expected runtime $\Or \! \left(\alpha m \log_{\alpha} \left(Y^{\ddagger} \left(e^* \right) - Y^{\dagger}\left(e^* \right) \! \right) \! \right)$.

Summarizing the above analysis for the two phases, there are at most $2 \alpha (q+1)$ valid mutations on $e^*$ during the process from $Y^{\dagger}$ to $Y^{\ddagger}$, for which the (1+1) EA takes expected runtime
$\Or \! \left(\alpha m \log_{\alpha} \left(Y^{\ddagger} \left(e^* \right) - Y^{\dagger} \left(e^* \right) \! \right) \! \right)$.
Since the RLS chooses exactly one edge in each iteration, any mutation of the RLS on $e^*$ is valid. Using the reasoning similar to that given above, we can get the same expected runtime for the RLS.
\end{proof}

Now we analyze the expected runtime of the two algorithms to make the edge $e^*$ satisfy the dual-LP constraint, if they start with an infeasible dual-solution with respect to which $e^*$ violates the dual-LP constraint.

\begin{lemma}
\label{lem:(1+1) EA for decrement of G1}
Consider an infeasible dual-solution $Y^{\dagger}$ of $G^*$, with respect to which the edge $e^*$ violates the dual-LP constraint. For the first feasible dual-solution $Y^{\ddagger}$ obtained by the \mbox{\textup{(1+1) EA}} (or \mbox{\textup{RLS}}) starting with $Y^{\dagger}$, the algorithm takes expected runtime 
$\Or \! \left(m \log_{\alpha} \left(Y^{\dagger}(e^*) - Y^{\ddagger}(e^*) \! \right) \! \right)$ to decrease the LP value of $e^*$ from $Y^{\dagger}(e^*)$ to $Y^{\ddagger}(e^*)$. 
\end{lemma}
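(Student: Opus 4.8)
The plan is to reuse the machinery of Lemma~\ref{lem:(1+1) EA for increment of G1}, while exploiting a structural simplification special to the infeasible regime that removes the factor $\alpha$ appearing there. First I would fix the regime: since $Y^{\dagger}$ is infeasible and $Y^{\ddagger}$ is the \emph{first} feasible dual-solution produced, Lemma~\ref{lem:sign function} guarantees $s(Y) = -1$ for every accepted solution $Y$ on the trajectory from $Y^{\dagger}$ to $Y^{\ddagger}$, and by Lemma~\ref{lem:infeasible to MFDS} every accepted mutation only decreases the LP values of edges in $E_{G^*}(Y)$. In particular, while $e^*$ still violates its dual-LP constraint (so $e^* \in E_{G^*}(Y)$), a mutation selecting exactly $e^*$ and decreasing its LP value gives $f(Y',Y) = Y(e^*) - Y'(e^*) \ge 0$ and is accepted.

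The crucial observation --- and the reason no factor $\alpha$ appears here --- is that when $s(Y) = -1$ the algorithms never shrink a step size upon rejection, since the step-size reduction is guarded by $s(Y) > 0$ in both Algorithm~\ref{alg:(1+1) EA} and Algorithm~\ref{alg:RLS} (see also the remark that rejected mutations do not decrease step sizes when $s(Y) < 0$). Hence $\sigma(e^*)$ is monotonically non-decreasing throughout the phase, and there is no analogue of the oscillating Phase~(II) that forced the extra $\alpha$ in Lemma~\ref{lem:(1+1) EA for increment of G1}. Consequently each accepted valid mutation on $e^*$ decreases $Y(e^*)$ by the current step size (capped at $0$) and multiplies $\sigma(e^*)$ by $\alpha$, so the successive decrements form a prefix of a geometric progression with ratio $\alpha$ starting from the initial step size $\sigma_1 \ge 1$.

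I would then bound the number of valid mutations on $e^*$. Writing $\Delta = Y^{\dagger}(e^*) - Y^{\ddagger}(e^*)$ and restricting attention to mutations selecting only $e^*$ --- which occur with probability $\Om(1/m)$ in both the RLS and the (1+1) EA and are accepted as above --- the $j$-th such decrement is at least $\alpha^{j-1}\sigma_1$. Since the total decrease of $Y(e^*)$ is exactly $\Delta$, the number $k$ of these mutations obeys $\alpha^{k-1}\sigma_1 \le \Delta$, i.e. $k = \Or(\log_{\alpha}\Delta)$; any extra decrements contributed by multi-edge mutations of the (1+1) EA only enlarge $\sigma(e^*)$ faster and can only decrease $k$. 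Multiplying the $\Or(\log_{\alpha}\Delta)$ required valid mutations by the $\Or(m)$ expected waiting time for each gives the claimed $\Or\!\big(m \log_{\alpha}(Y^{\dagger}(e^*) - Y^{\ddagger}(e^*))\big)$.

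The step I expect to be most delicate is pinning down the end of the phase, because $Y^{\ddagger}(e^*)$ is the value of $e^*$ in the first \emph{globally} feasible solution rather than the moment $e^*$'s own endpoints become satisfied, and the final decrement may overshoot. I would argue that once both endpoints of $e^*$ satisfy their constraints we have $e^* \notin E_{G^*}(Y)$, so any later mutation touching $e^*$ incurs the penalty of Inequality~\ref{eqn:fitness1} and is rejected, while decreases of other edges only relax the constraints at $e^*$'s endpoints; thus $Y(e^*)$ is frozen from that point and equals $Y^{\ddagger}(e^*)$, making $\Delta$ the true total decrement and keeping the geometric-sum count valid even under an overshoot on the last step. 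The extension from the RLS to the (1+1) EA is routine once the single-edge mutation is isolated, exactly as in Lemma~\ref{lem:(1+1) EA for increment of G1}.
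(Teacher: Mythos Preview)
Your proposal is correct and follows essentially the same approach as the paper: both proofs hinge on the observation that while $s(Y)=-1$ the step-size decrease branch is never taken, so $\sigma(e^*)$ is monotone and the decrements on $e^*$ form a geometric progression requiring $\Or(\log_\alpha \Delta)$ valid mutations, each obtainable in expected $\Or(m)$ steps via the single-edge mutation. The paper makes the count explicit through the identity $\sum_{i=p}^{q}\alpha^i = Y^{\dagger}(e^*)-Y^{\ddagger}(e^*)$ (splitting into the cases $Y^{\ddagger}(e^*)>0$ and $Y^{\ddagger}(e^*)=0$), while you bound it directly and handle the freezing of $Y(e^*)$ once $e^*$ leaves $E_{G^*}(Y)$ a bit more carefully than the paper's implicit treatment; one small slip is that ``total decrease $=\Delta$'' does not literally force $\alpha^{k-1}\sigma_1\le\Delta$, but the intended bound follows cleanly from the fact that the first $k-1$ uncapped decrements sum to less than $\Delta$.
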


\begin{proof}
We start with the analysis for the (1+1) EA. Since $Y^{\ddagger}$ is the first feasible dual-solution obtained by the (1+1) EA starting with $Y^{\dagger}$, the LP value of $e^*$ is monotonically decreased from $Y^{\dagger}(e^*)$ to $Y^{\ddagger}(e^*)$.

Assume that the step size of $e^*$ is initialized as $\alpha^p$, where $p \geq 0$ is an integer not less than 0. Observe that the step size of $e^*$ cannot decrease during the process from $Y^{\dagger}$ to $Y^{\ddagger}$ because the sign function  remains at $-1$. 
Hence if $Y^{\ddagger}(e^*) > 0$, then there exists an integer $q$ such that $\sum_{i=p}^{q} \alpha^i = Y^{\dagger}(e^*) - Y^{\ddagger}(e^*)$, and the step size of $e^*$ is increased from $\alpha^p$ to $\alpha^{q+1}$ during the process.
Consequently, the process contains $q-p+1$ valid mutations on $e^*$, where
$$q-p+1 = \log_{\alpha} \left(\frac{\big(Y^{\dagger}(e^*) - Y^{\ddagger}(e^*) \big) \cdot \left(\alpha -1 \right)}{\alpha^p} +1 \right) \ .$$
If $Y^{\ddagger}(e^*) = 0$, then there exists an integer $q$ such that $\sum_{i=p}^{q-1} \alpha^i < Y^{\dagger}(e^*) $, $\sum_{i=p}^{q} \alpha^i \ge Y^{\dagger}(e^*)$, and the step size of $e^*$ is increased from $\alpha^p$ to $\alpha^{q+1}$ during the process.
Similarly, the process contains $q-p+1$ valid mutations on $e^*$, where
$$q-p+1 = \left\lceil \log_{\alpha} \left(\frac{\left(\alpha -1 \right) \cdot Y^{\dagger}(e^*) }{\alpha^p} +1 \right) \right\rceil \ .$$

The mutation that only selects the edge $e^*$ is a valid mutation on $e^*$, which can be generated by the (1+1) EA with probability $\Om(1 / m)$. Thus the (1+1) EA takes expected runtime 
$\Or \big(m(q+1) \! \big) = \Or \! \left(m \log_{\alpha} \left(Y^{\dagger}(e^*) - Y^{\ddagger}(e^*)\! \right) \! \right)$ to get $Y^{\ddagger}$ (because $p$ may be 0). The above conclusions for the (1+1) EA also apply to the RLS. 
\end{proof}

\subsection{Analysis for DWVC with Edge Modification}

We start the subsection with the analysis of the algorithms (1+1) EA and RLS for the two special variants of DWVC-E, namely, DWVC-E$^+$ and DWVC-E$^-$. 
Denote by $E^+ = E^* \setminus E$ the set containing all the new added edges, and by $E^- = E \setminus E^*$ the set containing all the removed edges. 
The variant DWVC-E$^+$ considers the case that $E^- = \emptyset$, and DWVC-E$^-$ considers the case that $E^+ = \emptyset$.

The following theorem analyzes the performances of the two algorithms for DWVC-E$^+$, from two different views.
We remark that for an instance $\{G = (V,E,W),Y_{\textup{orig}},E^+\}$ of \mbox{\textup{DWVC-E}}$^+$, $|E^+| = D$, and for each edge $e \in E^+$, $Y_{\textup{orig}}(e)$ and $\sigma(e)$ are initialized as 0 and 1, respectively.

\begin{theorem}
\label{theo:(1+1) EA for DWVC-E+}
The expected runtime of the \mbox{\textup{(1+1) EA}} (or \mbox{\textup{RLS}}) for \mbox{\textup{DWVC-E}}$^+$ is 
$\Or \big(\alpha m \log_{\alpha} W_{\textup{max}} \cdot \min\{D,\log (\alpha D \cdot \log_{\alpha} W_{\textup{max}})\} \! \big)$.
\end{theorem}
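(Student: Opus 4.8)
The plan is to first reduce the problem to processing the $D$ newly added edges and then to bound the work in two independent ways, matching the two arguments of the $\min$. Since \mbox{\textup{DWVC-E}}$^+$ assumes $E^-=\emptyset$, we have $E\subseteq E^*$, and extending $Y_{\textup{orig}}$ by $0$ on every edge of $E^+=E^*\setminus E$ leaves every vertex sum $\sum_{e\in E_{G^*}(v)}Y(e)$ unchanged; hence this starting solution is feasible for $G^*$, so $s(Y)=1$ holds initially and, by Lemma~\ref{lem:sign function}, throughout the run, meaning the algorithm never decreases an LP value. Because $Y_{\textup{orig}}$ is maximal on $G$, every old edge already has a tight endpoint and can never be increased; therefore every valid mutation that alters an LP value acts on one of the $D$ edges of $E^+$, each of which increases monotonically to a final value bounded by the endpoint slack, i.e.\ by $W_{\textup{max}}$. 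Lemma~\ref{lem:(1+1) EA for increment of G1} then bounds the number of valid mutations charged to a single new edge over the whole run by $\Or(\alpha\log_\alpha W_{\textup{max}})$, each costing $\Or(m)$ expected iterations to generate, since selecting that edge alone has probability $\Om(1/m)$.

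For the first view I would simply sum over edges. The total number of valid mutations needed is $\Or(\alpha D\log_\alpha W_{\textup{max}})$, and since in any iteration with an active edge the probability of producing a useful single-edge valid mutation is $\Om(1/m)$, linearity of expectation gives total expected runtime $\Or(\alpha m\log_\alpha W_{\textup{max}}\cdot D)$, the first term of the $\min$.

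For the second view I would exploit parallelism. When $k$ new edges are still active, an iteration advances at least one of them with probability $\Om(k/m)$, because selecting exactly one active edge has probability at least $\tfrac{k}{m}(1-\tfrac1m)^{m-1}$. Organising the run into sweeps, each sweep giving one further valid mutation to every still-active edge, a coupon-collector argument bounds a single sweep by $\Or(m\log D)$ iterations, and at most $\Or(\alpha\log_\alpha W_{\textup{max}})$ sweeps suffice since each edge needs that many valid mutations. As $\log D\le\log(\alpha D\log_\alpha W_{\textup{max}})$, this yields $\Or(\alpha m\log_\alpha W_{\textup{max}}\cdot\log(\alpha D\log_\alpha W_{\textup{max}}))$, the second term of the $\min$. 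Taking the smaller of the two bounds proves the claim; the RLS case is identical, since its single-edge selection makes every mutation valid on the chosen edge with selection probability exactly $1/m$.

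The main obstacle is making the per-edge accounting robust under the interleaved, parallel execution assumed in both views. Two phenomena must be controlled: the step-size decrease phase of Lemma~\ref{lem:(1+1) EA for increment of G1} (an edge whose step size has grown may overshoot near its final value and must shrink it back), and shared vertices (increasing one new edge can make a common endpoint tight and permanently block an adjacent new edge, possibly while the latter's step size is already large). I would argue that neither creates extra work beyond the $\Or(\alpha\log_\alpha W_{\textup{max}})$ valid mutations already charged by Lemma~\ref{lem:(1+1) EA for increment of G1}, using that LP values and tightness are monotone while $s(Y)=1$, so a blocked edge stays blocked, and that the run halts precisely when all new edges are blocked, at which point the maintained solution is an MFDS of $G^*$.
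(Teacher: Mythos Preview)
Your reduction and first view match the paper exactly: feasibility of the initial point, monotonicity via Lemma~\ref{lem:sign function}, and the observation that only edges in $E^+$ can ever move, followed by summing the per-edge cost from Lemma~\ref{lem:(1+1) EA for increment of G1} over $D$ edges.

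For the second term of the $\min$ you take a genuinely different route. The paper defines a potential $g(Y)=\sum_{e\in E_\Delta}\beta(e)$ equal to the total number of remaining valid mutations, uses the per-edge bound $\beta(e)\le 2\alpha(\log_\alpha W_{\textup{max}}+1)$ from the proof of Lemma~\ref{lem:(1+1) EA for increment of G1} to get $|E_\Delta|\ge g(Y)/\bigl(2\alpha(\log_\alpha W_{\textup{max}}+1)\bigr)$, and then applies the Multiplicative Drift Theorem with drift $\Om\!\bigl(g(Y)/(\alpha m\log_\alpha W_{\textup{max}})\bigr)$ and initial value $\Or(\alpha D\log_\alpha W_{\textup{max}})$. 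Your sweep/coupon-collector argument is correct and more elementary: the same deterministic per-edge bound of $\Or(\alpha\log_\alpha W_{\textup{max}})$ valid mutations caps the number of sweeps, and coupon collecting over at most $D$ active edges bounds each sweep by $\Or(m\log D)$. In fact your route yields $\Or(\alpha m\log_\alpha W_{\textup{max}}\cdot\log D)$, slightly sharper than the paper's bound before you relax it. The crucial point you flag in your last paragraph---that shared endpoints among new edges do not invalidate the per-edge count---is exactly what makes both arguments go through, and your monotonicity justification is the right one: since $Y(e)\le Y^*(e)$ for every edge throughout the run, the case analysis inside Lemma~\ref{lem:(1+1) EA for increment of G1} (in particular the reason $e^*\notin I'$ in Case~(1)) remains valid even when neighbouring edges move concurrently. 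The paper leaves this implicit in its appeal to Lemma~\ref{lem:(1+1) EA for increment of G1}; you make it explicit, which is a plus.
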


\begin{proof}
We first consider the expected runtime of the (1+1) EA to obtain an MFDS of $G^* = (V,E \cup E^+,W)$, starting with the given MFDS $Y_{\textup{orig}}$ of $G = (V,E,W)$.
Observe that $Y_{\textup{orig}}$ is a feasible dual-solution of $G^*$. Thus combining Lemma 1 and the general idea of the algorithm, we have that all  mutations accepted by the algorithm increase the LP values of the edges in $G^*$. 
If $Y_{\textup{orig}}$ is an MFDS of $G^*$, then any mutation on $Y_{\textup{orig}}$ results in an infeasible solution that would be rejected, i.e., the algorithm keeps the dual-solution $Y_{\textup{orig}}$ forever.

In the following discussion, we assume that $Y_{\textup{orig}}$ is not an MFDS of $G^*$. Observe that any increment on the LP values of the edges in $E$ would result in an infeasible solution that cannot be accepted by the algorithm. Thus we have that $Y^*(e) = Y_{\textup{orig}}(e)$ for each edge $e \in E$, and $Y^*(e) \geq Y_{\textup{orig}}(e)$ for each edge $e \in E^+$, where $Y^*$ is an MFDS of $G^*$ obtained by the (1+1) EA starting with $Y_{\textup{orig}}$.
To study the expected runtime of the (1+1) EA to get $Y^*$, two analytical ways from different views are given below: One considers the edges in $E^+$ sequentially; the other one considers that in an interleaved way.

We start with the analysis from the view that considers the edges in $E^+$ sequentially.
Let $e^* = [v_1,v_2]$ be an arbitrary edge in $E^+$ with $Y^*(e^*) - Y_{\textup{orig}}(e^*) > 0$. Since $Y^*(e^*) - Y_{\textup{orig}}(e^*) \leq W_{\textup{max}}$ and the fact that the step size of $e^*$ is initialized with value 1, Lemma~\ref{lem:(1+1) EA for increment of G1} gives that the (1+1) EA takes expected runtime 
$$\Or \! \left(\alpha m \log_{\alpha} \left(Y^*(e^*) - Y_{\textup{orig}}(e^*)\! \right) \! \right) = \Or (\alpha m \log_{\alpha} W_{\textup{max}})$$ 
to increase the LP value of $e^*$ from $Y_{\textup{orig}}(e^*)$ to $Y^*(e^*)$.
Combining the fact that the number of edges in $E^+$ is bounded by $D$, we have that the (1+1) EA takes expected runtime $\Or (\alpha m D \log_{\alpha} W_{\textup{max}})$ to get $Y^*$.

Now we analyze the expected runtime of the (1+1) EA to get $Y^*$ from the other view that considers the edges in $E^+$ as a whole.
For each edge $e \in E^+$, denote $Y^*(e) - Y_{\textup{orig}}(e)$ by $\Delta(e)$, and denote by $\beta(e)$ the number of valid mutations on $e$ that the algorithm requires to increase the LP value of $e$ from $Y_{\textup{orig}}(e)$ to $Y^*(e)$. 
Let $E_{\Delta} = \{e \in E^+ | \Delta(e) \neq 0\}$, and let the potential of the dual-solution $Y_{\textup{orig}}$ be 
\begin{eqnarray}
\label{eqn:61}
g(Y_{\textup{orig}}) = \sum_{e \in E_{\Delta}} \beta(e) \ .
\end{eqnarray}
Observe that $E_{\Delta} \subset E^+$. 
Since there may exist a mutation that is not only a valid mutation on $e_1 \in E^+$, but also a valid mutation on $e_2 \in E^+ \setminus \{e_1\}$, $g(Y_{\textup{orig}})$ is the upper bound of the number of valid mutations on the edges in $E_{\Delta}$ that the algorithm requires to get $Y^*$ starting from $Y_{\textup{orig}}$.
Moreover, the analysis of Lemma~\ref{lem:(1+1) EA for increment of G1} gives that any mutation on $Y_{\textup{orig}}$ cannot increase its potential.

To obtain the expected drift of $g$, we first consider the relation between $|E_{\Delta}|$ and $g(Y_{\textup{orig}})$.
For each edge $e \in E_{\Delta}$, Lemma~\ref{lem:(1+1) EA for increment of G1} gives that
\begin{eqnarray}
\label{eqn:62}
\beta(e) \leq 2 \alpha \left\lfloor \log_{\alpha} \big( \! (\alpha-1) \cdot \Delta(e) +1 \big)\right\rfloor
\leq 2 \alpha  \log_{\alpha} \big(\alpha \cdot \Delta(e) \! \big)
\leq 2 \alpha \left(\log_{\alpha} W_{\textup{max}} + 1 \right) \ .
\end{eqnarray}
By Equations~\ref{eqn:61} and~\ref{eqn:62}, we have that
$$|E_{\Delta}| \ge  \frac{g(Y_{\textup{orig}})}{2 \alpha \left(\log_{\alpha} W_{\textup{max}} + 1 \right)} \ \ \textup{and} \ \ g(Y_{\textup{orig}}) \le D \cdot 2 \alpha  \left(\log_{\alpha} W_{\textup{max}} + 1 \right) \ .$$
A valid mutation that chooses exactly one of the edge in $E_{\Delta}$ can be generated by the algorithm with probability $\Om(|E_{\Delta}|/(e \cdot m))$, which results in a new solution $Y'$ with $g(Y') = g(Y_{\textup{orig}}) -1$.
Thus the expected drift of $g$ can be bounded by 
$$\frac{|E_{\Delta}|}{e \cdot m} \ge \frac{g(Y_{\textup{orig}})}{e \cdot 2 \alpha m \cdot  \left(\log_{\alpha} W_{\textup{max}} + 1 \right)} \ .$$

As mentioned above, the maximum value that $g(Y_{\textup{orig}})$ can take is $2 \alpha D \left(\log_{\alpha} W_{\textup{max}} + 1 \right)$.
Combining it, the obvious minimum value 1 that $g(Y_{\textup{orig}})$ can take, and the expected drift of $g$, the Multiplicative Drift Theorem~\cite{algorithmica/DoerrJW12} gives that the (1+1) EA takes expected runtime 
$\Or \big(\alpha m \log_{\alpha} W_{\textup{max}} \cdot \log (\alpha D \cdot \log_{\alpha} W_{\textup{max}}) \! \big)$ to get $Y^*$.

Summarizing the above analysis, we can conclude that the (1+1) EA takes expected runtime $\Or \big(\alpha m \log_{\alpha} W_{\textup{max}} \cdot \min\{D,\log (\alpha D \cdot \log_{\alpha} W_{\textup{max}})\} \! \big)$ to find an MFDS of $G^*$ starting with $Y_{\textup{orig}}$.
Since we only consider the mutations selecting exactly one edge in the analysis for the (1+1) EA, the above conclusions also apply to the RLS.  
\end{proof}

Given an instance $\{G = (V,E,W),Y_{\textup{orig}},E^-\}$ of \mbox{\textup{DWVC-E}}$^-$, the following theorem considers the expected runtime of the (1+1) EA and RLS to obtain an \mbox{\textup{MFDS}} of $G^* = (V,E \setminus E^-,W)$, starting with the MFDS $Y_{\textup{orig}}$ of $G$. 
Note that the domain of definition for $Y_{\textup{orig}}$ and the weight function $W$ are modified as $E \setminus E^-$ after the removal, and $|E^-| = D$.

\begin{theorem}
\label{theo:(1+1) EA for DWVC-E-}
The expected runtime of the \mbox{\textup{(1+1) EA}} (or \mbox{\textup{RLS}}) for \mbox{\textup{DWVC-E}}$^-$ is 
$\Or \big(\alpha m \log_{\alpha} W_{\textup{max}} \cdot \log (\max\{\alpha m, \alpha D \cdot W_{\textup{max}} \}) \! \big)$.
\end{theorem}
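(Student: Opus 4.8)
The plan is to reproduce the multiplicative-drift argument of Theorem~\ref{theo:(1+1) EA for DWVC-E+} essentially verbatim, since deleting edges leaves the starting solution feasible and the whole run is monotone-increasing; the only genuinely new ingredient will be a bound on the \emph{total} increment of the LP values, which plays the role that the trivial ``at most $D$ changed edges'' observation played for DWVC-E$^+$.

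First I would check that removing edges preserves feasibility: for every vertex $v$ we have $E_{G^*}(v)\subseteq E_G(v)$, so $\sum_{e\in E_{G^*}(v)}Y_{\textup{orig}}(e)\le\sum_{e\in E_G(v)}Y_{\textup{orig}}(e)\le W(v)$. Hence $Y_{\textup{orig}}$ is feasible for $G^*$, $s(\cdot)$ stays at $1$ throughout by Lemma~\ref{lem:sign function}, and every accepted mutation only raises LP values; in particular $Y^{*}(e)\ge Y_{\textup{orig}}(e)$ for each $e\in E^*$, where $Y^{*}$ is the MFDS eventually reached. Writing $\Delta(e)=Y^{*}(e)-Y_{\textup{orig}}(e)$ and $E_\Delta=\{e\mid\Delta(e)\ne0\}$, I reuse the potential $g(Y)=\sum_{e\in E_\Delta}\beta(e)$ of Theorem~\ref{theo:(1+1) EA for DWVC-E+} together with the two facts proved there: $\beta(e)\le 2\alpha(\log_\alpha W_{\textup{max}}+1)$, whence $|E_\Delta|\ge g/(2\alpha(\log_\alpha W_{\textup{max}}+1))$; and a mutation choosing exactly one edge of $E_\Delta$ lowers $g$ by one with probability $\Om(|E_\Delta|/(em))$. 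These give a multiplicative drift of rate $\delta=\Om\!\big(1/(\alpha m\log_\alpha W_{\textup{max}})\big)$, exactly as before.

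The new step is to bound $g(Y_{\textup{orig}})$, for which I bound the total increment $\sum_{e\in E^*}\Delta(e)$. Deleting the $D$ edges of $E^-$ creates slack only at their at most $2D$ endpoints (the \emph{freed} vertices); every other vertex keeps its $G$-tightness, and since $Y_{\textup{orig}}$ is maximal on $G$, any edge both of whose endpoints are unfreed still has a tight endpoint and can never be increased. Because increasing edge values only raises loads (never frees slack), no cascade occurs, so every unit of increase can be charged to a freed endpoint; as the load increase at a freed vertex $u$ is bounded by its $G^*$-slack and hence by $W(u)\le W_{\textup{max}}$, I obtain $\sum_{e}\Delta(e)\le 2D\cdot W_{\textup{max}}$. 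Combining $\beta(e)\le 2\alpha\log_\alpha(\alpha\Delta(e))$ with $\log_\alpha\Delta(e)\le\Delta(e)$ and $|E_\Delta|\le m$ then yields
$$g(Y_{\textup{orig}})\le 2\alpha\!\!\sum_{e\in E_\Delta}\!\!\log_\alpha(\alpha\Delta(e))=2\alpha\Big(|E_\Delta|+\!\!\sum_{e\in E_\Delta}\!\!\log_\alpha\Delta(e)\Big)\le 2\alpha\big(m+2D\cdot W_{\textup{max}}\big).$$

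Finally I feed $g_{\max}=\Or(\alpha(m+D W_{\textup{max}}))$ and $\delta=\Om\!\big(1/(\alpha m\log_\alpha W_{\textup{max}})\big)$ into the Multiplicative Drift Theorem~\cite{algorithmica/DoerrJW12}, obtaining expected runtime $\Or\!\big(\alpha m\log_\alpha W_{\textup{max}}\cdot\log(\alpha(m+D W_{\textup{max}}))\big)$; since $m+D W_{\textup{max}}=\Theta(\max\{m,D W_{\textup{max}}\})$ this equals the claimed $\Or\!\big(\alpha m\log_\alpha W_{\textup{max}}\cdot\log(\max\{\alpha m,\alpha D W_{\textup{max}}\})\big)$. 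As in Theorem~\ref{theo:(1+1) EA for DWVC-E+}, only single-edge mutations are used in the drift, so the same bound transfers to the RLS. I expect the charging argument for $\sum_{e}\Delta(e)\le 2D W_{\textup{max}}$ to be the main obstacle: one must argue that pre-existing slack at unfreed vertices is never consumed, which is exactly where the maximality of $Y_{\textup{orig}}$ on $G$ and the monotonicity of the loads are indispensable.
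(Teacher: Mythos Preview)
Your proposal is correct and follows the same overall multiplicative-drift strategy as the paper: both show that $Y_{\textup{orig}}$ stays feasible for $G^*$, both use the potential $g=\sum_{e\in E_\Delta}\beta(e)$, both obtain the drift rate $\Om\!\big(1/(\alpha m\log_\alpha W_{\textup{max}})\big)$ from $\beta(e)\le 2\alpha(\log_\alpha W_{\textup{max}}+1)$, and both bound the total increment by $\sum_e\Delta(e)\le 2D\,W_{\textup{max}}$ via the at-most-$2D$ endpoints of $E^-$.

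The one genuine difference is how $g(Y_{\textup{orig}})$ is upper-bounded. The paper splits into two cases according to whether $\sum_{e\in E_\Delta}\Delta(e)<\alpha|E_\Delta|$ or not, and in the second case uses the AM--GM inequality together with the optimisation of $x\mapsto x\log_\alpha(2DW_{\textup{max}}/x)$ to arrive at $g\le\max\{4\alpha m,\,8\alpha D\,W_{\textup{max}}\}$. You avoid all of that by the crude but perfectly valid estimate $\log_\alpha\Delta(e)\le\Delta(e)$, which immediately gives $g\le 2\alpha(|E_\Delta|+\sum_e\Delta(e))\le 2\alpha(m+2D\,W_{\textup{max}})$. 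Your route is shorter and more elementary; the paper's route would give a slightly sharper constant and, in settings where $\sum_e\Delta(e)$ is much smaller than $D\,W_{\textup{max}}$, could be pushed to a tighter bound, but for the stated theorem both yield the same $\Or$-expression. Your charging argument (no cascade because loads only rise and unfreed tight vertices stay tight) is exactly the justification the paper leaves implicit when it asserts $Y^*(e)=Y_{\textup{orig}}(e)$ for $e\notin E_G(E^-)$.
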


\begin{proof}
Observe that $Y_{\textup{orig}}$ is a feasible dual-solution of $G^*$, and the endpoints of the edges in $E^-$ may not be tight with respect to $Y_{\textup{orig}}$ once the edges in $E^-$ are removed. 
Thus the LP values of the edges in $E_{G}(E^-)$ may have the room to be increased.
If $Y_{\textup{orig}}$ is an MFDS of $G^*$, then any mutation of the \mbox{\textup{(1+1) EA}} (or \mbox{\textup{RLS}}) on $Y_{\textup{orig}}$ would be rejected, and the algorithm keeps the dual-solution $Y_{\textup{orig}}$ forever.
In the following discussion, we assume that $Y_{\textup{orig}}$ is not an MFDS of $G^*$.

Let $Y^*$ be an arbitrary MFDS of $G^*$ obtained by the (1+1) EA (or RLS) starting with $Y_{\textup{orig}}$.
The above analysis gives that $Y^*(e) = Y_{\textup{orig}}(e)$ for each edge $e \in E \setminus \big( E^- \cup E_{G}(E^-) \! \big)$, and $Y^*(e) \geq Y_{\textup{orig}}(e)$ for each edge $e \in E_{G}(E^-)$.
Observe that all the edges in $E_{G}(E^-)$ are incident to the endpoints of the edges in $E^-$, and the number of endpoints of the edges in $E^-$ is upper bounded by $2D$. Combining the observation with the fact that the sum of the LP values of the edges sharing an endpoint cannot be larger than the weight of the endpoint under the dual-LP constraint, we have that $\sum_{e \in E_{G}(E^-)} Y^*(e)$ can be upper bounded by $2 D \cdot W_{\textup{max}}$.

For each edge $e \in E_{G}(E^-)$, denote $Y^*(e) - Y_{\textup{orig}}(e)$ by $\Delta(e)$, and denote by $\beta(e)$ the number of valid mutations on $e$ that the algorithm requires to increase the LP value of $e$ from $Y_{\textup{orig}}(e)$ to $Y^*(e)$.
Let $E_{\Delta} = \{e \in E_{G}(E^-) | \Delta(e) \neq 0\}$. Then we have 
$$\sum_{e \in E_{\Delta}} \Delta(e) \ = \sum_{e \in E_{G}(E^-)}\left(Y^*(e) - Y_{\textup{orig}}(e) \! \right) \ \leq \sum_{e \in E_{G}(E^-)} Y^*(e) \le 2 D \cdot W_{\textup{max}} \ .$$
Let the potential of the solution $Y_{\textup{orig}}$ be 
$$g(Y_{\textup{orig}}) = \sum_{e \in E_{\Delta}} \beta(e) \ .$$
Similar to the analysis given in Theorem~\ref{theo:(1+1) EA for DWVC-E+}, we have that $g(Y_{\textup{orig}})$ is the upper bound of the number of valid mutations on the edges in $E_{\Delta}$ that the algorithm requires to get $Y^*$ starting from $Y_{\textup{orig}}$, and any mutation on $Y_{\textup{orig}}$ cannot increase its potential.
The analysis for the expected drift of $g$ is divided into two cases, based on the value of $\sum_{e \in E_{\Delta}} \Delta(e)/|E_{\Delta}|$.

Case (1). $\sum_{e \in E_{\Delta}} \Delta(e) < \alpha \cdot |E_{\Delta}|$.
Lemma~\ref{lem:(1+1) EA for increment of G1} gives that $
\beta(e) \leq 2 \alpha (\log_{\alpha} \Delta(e) + 1)$ for each edge $e \in E_{\Delta}$. Thus we have
\begin{eqnarray*}
g(Y_{\textup{orig}}) = \sum_{e \in E_{\Delta}} \beta(e) 
&\le& 2 \alpha \cdot |E_{\Delta}|  + 2 \alpha \cdot \log_{\alpha} \left(\prod_{e \in E_{\Delta}} \Delta(e) \right) \\
&\le& 2 \alpha \cdot |E_{\Delta}|  + 2 \alpha \cdot |E_{\Delta}| \cdot \log_{\alpha} \frac{\sum_{e \in E_{\Delta}} \Delta(e)}{|E_{\Delta}|} \le 4 \alpha \cdot |E_{\Delta}| \le 4 \alpha m \ ,
\end{eqnarray*} 
implying that $|E_{\Delta}| \ge g(Y_{\textup{orig}}) / (4 \alpha)$.
A valid mutation that chooses exactly one of the edges in $E_{\Delta}$ can be generated by the algorithm with probability $\Om(|E_{\Delta}|/(e \cdot m))$, which results in a new solution $Y'$ with $g(Y') = g(Y_{\textup{orig}}) -1$.
Thus the expected drift of $g$ can be bounded by 
$$\frac{|E_{\Delta}|}{e \cdot m} \ge \frac{g(Y_{\textup{orig}})}{e \cdot 4 \alpha m} \ .$$

Case (2). $\sum_{e \in E_{\Delta}} \Delta(e) \ge \alpha \cdot |E_{\Delta}|$. By Lemma~\ref{lem:(1+1) EA for increment of G1}, we can get that 
\begin{eqnarray*}
g(Y_{\textup{orig}}) = \sum_{e \in E_{\Delta}} \beta(e) \le \sum_{e \in E_{\Delta}} 2 \alpha (\log_{\alpha} \Delta(e) + 1) 
\le |E_{\Delta}| \cdot 2 \alpha (\log_{\alpha} W_{\textup{max}} +1) \ ,
\end{eqnarray*} 
implying that
$|E_{\Delta}| \ge  \frac{g(Y_{\textup{orig}})}{2 \alpha \left(\log_{\alpha} W_{\textup{max}} + 1 \right)}$.
Using the reasoning similar to that given for Case (1), we have that the expected drift of $g$ can be bounded by 
$$\frac{|E_{\Delta}|}{e \cdot m} \ge \frac{g(Y_{\textup{orig}})}{e \cdot 2 \alpha m \cdot  \left( \log_{\alpha} W_{\textup{max}} + 1 \right)} \ .$$
Now we consider the maximum value that $g(Y_{\textup{orig}})$ can take,
\begin{eqnarray}
\label{ee1}
g(Y_{\textup{orig}}) = \sum_{e \in E_{\Delta}} \beta(e) 
&\le& 2 \alpha \cdot |E_{\Delta}|  + 2 \alpha \cdot \log_{\alpha} \left(\prod_{e \in E_{\Delta}} \Delta(e) \right) \\
\label{ee2}
&\le& 2 \alpha \cdot |E_{\Delta}|  + 2 \alpha \cdot |E_{\Delta}| \cdot \log_{\alpha} \frac{\sum_{e \in E_{\Delta}} \Delta(e)}{|E_{\Delta}|} \\
\label{ee3}
&\le& 4 \alpha \cdot |E_{\Delta}| \cdot \log_{\alpha} \frac{\sum_{e \in E_{\Delta}} \Delta(e)}{|E_{\Delta}|} \\
\label{ee4}
&\le& 4 \alpha \cdot |E_{\Delta}| \cdot \log_{\alpha} \frac{2D \cdot W_{\textup{max}}}{|E_{\Delta}|} \\
\label{ee5}
&\le& 4 \alpha \cdot \frac{2D \cdot W_{\textup{max}}}{e} \cdot \log_{\alpha} e \\
\label{ee6}
&\le& 8 \alpha D \cdot W_{\textup{max}} \ ,
\end{eqnarray} 
where the factor $\log_{\alpha} e$ is not greater than $e$ as $\alpha \in [2, W_{\textup{max}}]$, and Inequality~\ref{ee5} can be derived by the observation that $f(x) = x \cdot \log_{\alpha} (2D \cdot W_{\textup{max}}/x)$ ($x > 0$) gets its maximum value when $x = 2D \cdot W_{\textup{max}}/e$.

Summarizing the analysis for Cases (1-2), we have that the expected drift of $g$ can be bounded by 
$$\frac{|E_{\Delta}|}{e \cdot m} \ge \frac{g(Y_{\textup{orig}})}{e \cdot 2 \alpha m \cdot  \max\{2, \log_{\alpha} W_{\textup{max}} + 1\} } = \frac{g(Y_{\textup{orig}})}{e \cdot 2 \alpha m \cdot (\log_{\alpha} W_{\textup{max}} + 1 )} \ ,$$
and the maximum value of $g(Y_{\textup{orig}})$ can be bounded by 
$$\max \{4 \alpha m, 8 \alpha D \cdot W_{\textup{max}} \} \ .$$

The Multiplicative Drift Theorem~\cite{algorithmica/DoerrJW12} implies that the (1+1) EA takes expected runtime
$\Or \big(\alpha m \log_{\alpha} W_{\textup{max}} \cdot \log (\max\{\alpha m, \alpha D \cdot W_{\textup{max}} \}) \! \big)$ to find an MFDS of $G^*$ starting with $Y_{\textup{orig}}$.
Since we only consider the mutations selecting exactly one edge in the analysis for the (1+1) EA, the above conclusions also apply to the RLS.   
\end{proof}

Consider an instance $\{G = (V,E,W),Y_{\textup{orig}},E^*\}$ of \mbox{\textup{DWVC-E}}. By the analysis for \mbox{\textup{DWVC-E}}$^+$ and \mbox{\textup{DWVC-E}}$^-$, we have that $Y_{\textup{orig}}$ is a feasible dual-solution of $G^* = (V,E^*,W)$, and the LP values of the edges in $E^+ \cup E_{G}(E^-)$ may have the room to be increased, where
$E^+ = E^* \setminus E$ and $E^- = E \setminus E^*$. 
Since $|E^+ \cup E^-|$ is bounded by $D$, we can derive the following theorem for \mbox{\textup{DWVC-E}} using the reasoning similar to that for Theorems~\ref{theo:(1+1) EA for DWVC-E+} and~\ref{theo:(1+1) EA for DWVC-E-}.

\begin{theorem}
\label{theo:(1+1) EA for DWVC-E}
The expected runtime of the \mbox{\textup{(1+1) EA}} (or \mbox{\textup{RLS}}) for \mbox{\textup{DWVC-E}} is 
$\Or \big(\alpha m \log_{\alpha} W_{\textup{max}} \cdot \log (\max\{\alpha m, \alpha D \cdot W_{\textup{max}} \}) \! \big)$.
\end{theorem}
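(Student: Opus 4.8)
The plan is to reduce the general edge-modification case to the two analyses already established for DWVC-E$^+$ and DWVC-E$^-$. First I would verify that $Y_{\textup{orig}}$ remains a feasible dual-solution of $G^* = (V,E^*,W)$. The newly added edges in $E^+ = E^* \setminus E$ are initialized with LP value $0$, hence they contribute nothing to any vertex's dual-LP sum; deleting the edges in $E^- = E \setminus E^*$ only removes terms from those sums. Consequently no vertex can become violated, so $s(Y_{\textup{orig}}) = 1$. By Lemma~\ref{lem:sign function} and the general idea of the algorithm, $s()$ then stays at $1$ throughout and every accepted mutation only increases LP values; this collapses the whole run to a single increase phase, exactly as in Theorems~\ref{theo:(1+1) EA for DWVC-E+} and~\ref{theo:(1+1) EA for DWVC-E-}.

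Next I would identify which edges can actually change. As in the two preceding theorems, any increment on an edge whose both endpoints remain tight is rejected, so for any MFDS $Y^*$ of $G^*$ produced by the algorithm we have $Y^*(e) = Y_{\textup{orig}}(e)$ outside $E^+ \cup E_{G}(E^-)$, and $Y^*(e) \ge Y_{\textup{orig}}(e)$ on $E^+ \cup E_{G}(E^-)$. The crucial budget bound is that the total increment is $\Or(D \cdot W_{\textup{max}})$: the edges of $E^+$ number at most $D$, and the edges of $E_{G}(E^-)$ are all incident to the at most $2D$ endpoints of the removed edges, so the dual-LP constraints cap $\sum_{e \in E^+ \cup E_{G}(E^-)} Y^*(e)$ at $\Or(D \cdot W_{\textup{max}})$. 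This merges the $E^+$ contribution (as in Theorem~\ref{theo:(1+1) EA for DWVC-E+}) with the $E_{G}(E^-)$ contribution (as in Theorem~\ref{theo:(1+1) EA for DWVC-E-}) into a single $\Or(D)$-many active endpoints.

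From here the argument is verbatim that of Theorem~\ref{theo:(1+1) EA for DWVC-E-}. With $E_{\Delta} = \{e \in E^+ \cup E_{G}(E^-) \mid Y^*(e) \neq Y_{\textup{orig}}(e)\}$ and potential $g(Y_{\textup{orig}}) = \sum_{e \in E_{\Delta}} \beta(e)$, where $\beta(e)$ is bounded through Lemma~\ref{lem:(1+1) EA for increment of G1}, I would split on whether $\sum_{e\in E_{\Delta}}\Delta(e)$ is below or above $\alpha \cdot |E_{\Delta}|$. The first case yields $g(Y_{\textup{orig}}) = \Or(\alpha m)$ and the second $g(Y_{\textup{orig}}) = \Or(\alpha D \cdot W_{\textup{max}})$, while in both cases a mutation selecting a single edge of $E_{\Delta}$ gives expected drift $\Om\big(|E_{\Delta}|/m\big) \ge \Om\big(g(Y_{\textup{orig}})/(\alpha m (\log_{\alpha} W_{\textup{max}}+1))\big)$. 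The Multiplicative Drift Theorem then delivers the claimed $\Or\big(\alpha m \log_{\alpha} W_{\textup{max}} \cdot \log(\max\{\alpha m, \alpha D \cdot W_{\textup{max}}\})\big)$ bound, and the RLS inherits it since only single-edge mutations are used in the analysis.

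The main obstacle is the pair of conceptual observations at the start: confirming feasibility of $Y_{\textup{orig}}$ for $G^*$ under \emph{simultaneous} additions and deletions, and establishing the combined budget bound $\sum_{e \in E_{\Delta}}\Delta(e) = \Or(D \cdot W_{\textup{max}})$. Once feasibility forces $s()$ to remain $+1$, the problem is a pure increase phase, and once the increment budget is bounded through the $\Or(D)$ active endpoints, the drift computation is identical to the deletion-only case, so no new technical machinery beyond Lemmas~\ref{lem:sign function} and~\ref{lem:(1+1) EA for increment of G1} and the Multiplicative Drift Theorem is required.
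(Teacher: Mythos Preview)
Your proposal is correct and follows essentially the same approach as the paper: the paper's argument (given in the paragraph preceding the theorem) is precisely to observe that $Y_{\textup{orig}}$ remains feasible for $G^*$, that only the edges in $E^+ \cup E_G(E^-)$ can increase, and that since $|E^+ \cup E^-| \le D$ the reasoning of Theorems~\ref{theo:(1+1) EA for DWVC-E+} and~\ref{theo:(1+1) EA for DWVC-E-} carries over directly. Your write-up simply supplies the details the paper leaves implicit, including the explicit $\Or(D\cdot W_{\textup{max}})$ increment budget and the multiplicative drift computation.
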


\subsection{Analysis for DWVC with Weight Modification}

We start the subsection with the analysis of the (1+1) EA and RLS for the two special variants of DWVC-W, namely, DWVC-W$^+$ and DWVC-W$^-$. 
Denote by $V^+$ the set containing all the vertices $v$ with $W^*(v) > W(v)$, and by $V^-$ the set containing all the vertices $v$ with $W^*(v) < W(v)$. 
The variant DWVC-W$^+$ considers the case that $V^- = \emptyset$, and DWVC-W$^-$ considers the case that $V^+ = \emptyset$.

Consider an instance $\{G = (V,E,W),Y_{\textup{orig}},W^+,V^+\}$ of \mbox{\textup{DWVC-W}}$^+$. 
Observe that $Y_{\textup{orig}}$ is an obviously feasible dual-solution of $G^* = (V,E,W^+)$, and the LP values of the edges in $E_{G^*}(V^+)$ may have the room to be increased if $Y_{\textup{orig}}$ is not an MFDS of $G^*$.
The following lemma shows that the sum of the feasible LP value increments on the edges in $E_{G^*}(V^+)$ can be upper bounded, as these edges are all incident to the vertices in $V^+$.

\begin{lemma}
\label{lem:weight bound}
For any \mbox{\textup{MFDS}} $Y^*$ obtained by the \mbox{\textup{(1+1) EA}} (or \mbox{\textup{RLS}}) for the instance $\{G = (V,E,W),Y_{\textup{orig}},W^+,V^+\}$ of \mbox{\textup{DWVC-W}}$^+$,
\begin{equation*}
    \sum_{e \in E} \left( Y^*(e) - Y_{\textup{orig}}(e) \! \right) \leq \sum_{v \in V^+} \left ( W^+(v) - W(v) \! \right) \leq D \cdot W_{\textup{max}} \ .
\end{equation*}
\end{lemma}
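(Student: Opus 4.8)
The plan is to first reduce the sum to edges incident to $V^+$ and then bound it by a double-counting argument over the vertex capacities. Since the weights only increase, $Y_{\textup{orig}}$ stays feasible for $G^* = (V,E,W^+)$, and by the general idea of the algorithm together with Lemma~\ref{lem:sign function} the run never accepts an infeasible solution while it only raises edge values; hence $Y^*(e) \ge Y_{\textup{orig}}(e)$ for every $e \in E$, so each term $\delta(e) := Y^*(e) - Y_{\textup{orig}}(e)$ is nonnegative. The rightmost inequality is then immediate: $|V^+| \le D$ and $W^+(v) \le W_{\textup{max}}$, so $\sum_{v\in V^+}(W^+(v) - W(v)) \le D \cdot W_{\textup{max}}$.

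First I would show that $\delta(e) = 0$ whenever neither endpoint of $e$ lies in $V^+$, so that $\sum_{e\in E}\delta(e) = \sum_{e\in E_{G^*}(V^+)}\delta(e)$. Indeed, since $Y_{\textup{orig}}$ is an MFDS of $G$, such an edge is tight under $Y_{\textup{orig}}$, i.e.\ one of its endpoints $u$ satisfies $\sum_{e'\in E_G(u)}Y_{\textup{orig}}(e') = W(u)$; as $u \notin V^+$ we have $W^+(u) = W(u)$, so $u$ is already tight in $G^*$ and, because edge values only grow, stays tight throughout, blocking every increase of any edge incident to $u$. Thus only edges touching $V^+$ can change.

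For the main inequality I would sum the per-vertex load increases: writing $L_Y(v) = \sum_{e\in E_{G^*}(v)}Y(e)$, every edge is counted at both endpoints, so $2\sum_{e\in E}\delta(e) = \sum_{v\in V}\big(L_{Y^*}(v) - L_{Y_{\textup{orig}}}(v)\big)$. At each $v$ the increase is capped by the available room $W^+(v) - L_{Y_{\textup{orig}}}(v)$, which splits into the \emph{old slack} $W(v) - L_{Y_{\textup{orig}}}(v) \ge 0$ and the \emph{new budget} $W^+(v) - W(v)$ (nonzero only on $V^+$, summing to at most $\sum_{v\in V^+}(W^+(v) - W(v))$). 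The delicate point, and the main obstacle, is that this old slack may be positive even on vertices of $V^+$, so a naive per-vertex bound overshoots the target. I would remove it using the structural fact that, because $Y_{\textup{orig}}$ is an MFDS of $G$, every edge is tight under $Y_{\textup{orig}}$ and hence \emph{no edge can have two non-tight endpoints}. Consequently, whenever the load increase at a vertex draws on its old slack through some increased edge $[v,w]$, the opposite endpoint $w$ is tight under $Y_{\textup{orig}}$, carries no slack, and can admit the increase only if $w \in V^+$, spending its new budget; pairing each consumed unit of slack with the matching consumed unit of budget on such an opposite endpoint yields total consumed slack $\le$ total consumed budget $\le \sum_{v\in V^+}(W^+(v) - W(v))$. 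Since the double count equals consumed slack plus consumed budget, this gives $2\sum_{e}\delta(e) \le 2\sum_{v\in V^+}(W^+(v) - W(v))$, which is the claim.
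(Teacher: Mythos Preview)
Your argument is correct and rests on the same key observation as the paper --- every edge whose value increases has an endpoint that is tight under $Y_{\textup{orig}}$, and that endpoint must lie in $V^+$ --- but you reach the conclusion by a longer route. The paper avoids the double-count entirely: it simply assigns each increased edge $e$ to one tight endpoint $\tau(e)\in V^+$, and then observes that since $\tau(e)$ is tight under $Y_{\textup{orig}}$ (so its load already equals $W(\tau(e))$) while its load under $Y^*$ is at most $W^+(\tau(e))$, the total increase over all edges with $\tau(e)=v$ is at most $W^+(v)-W(v)$; summing over $v\in V^+$ gives the bound in one line.

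Your double-count manufactures the apparent obstacle of ``old slack'' that you then have to argue away, and the final pairing step (``each consumed unit of slack with the matching consumed unit of budget on such an opposite endpoint'') is somewhat informal as written, since consumed slack at a vertex is a scalar not naturally attributed to individual edges. It becomes clean once you note directly that at every tight vertex $w$ the consumed budget equals the full load increase $\sum_{e\ni w}\delta(e)$, so total consumed budget is already at least $\sum_e \delta(e)$ (every increased edge has a tight endpoint); combined with total consumed budget $\le \sum_{v\in V^+}(W^+(v)-W(v))$ this yields the claim without ever comparing slack to budget. Both approaches work; the paper's direct charging to $\tau(e)$ is just shorter and sidesteps the detour.
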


\begin{proof}
Since $Y_{\textup{orig}}$ is a feasible dual-solution of $G^*$, by Lemma~\ref{lem:sign function}, $Y^*(e) \geq Y_{\textup{orig}}(e)$ for each edge $e \in E$. Let $E_{W^+}$ be the set containing all the edges $e \in E$ with $Y^*(e) > Y_{\textup{orig}}(e)$. Then we have
\begin{equation}
\label{e1}
\sum_{e \in E} \left( Y^*(e) - Y_{\textup{orig}}(e) \! \right) = \sum_{e \in E_{W^+}} \left(Y^*(e) - Y_{\textup{orig}}(e) \! \right) \ .
\end{equation}
Note that the LP values of the edges in $E \setminus E_{G^*}(V^+)$ cannot be increased, thus $E_{W^+} \subseteq E_{G^*}(V^+)$.

For each edge $e \in E_{W^+}$, let $\tau(e)$ be the endpoint of $e$ that is tight with respect to $Y_{\textup{orig}}$ (if both endpoints of $e$ are tight, then arbitrarily choose one as $\tau(e)$). Observe that $\tau(e) \in V^+$ for each edge $e \in E_{W^+}$; otherwise, the LP value of the edge cannot be increased under the dual-LP constraint. Thus for any vertex $v \in V^+$, we have
\begin{equation}
\label{e2}
\sum_{e \in E_{W^+}| \tau(e) = v} \left( Y^*(e) - Y_{\textup{orig}}(e) \! \right) \leq W^+(v) - W(v) \ .
\end{equation}
Then summarizing Inequality~(\ref{e2}) over all vertices in $V^+$, we can get
\begin{equation}
\label{e33}
\sum_{e \in E_{W^+}} \left( Y^*(e) - Y_{\textup{orig}}(e) \! \right) \leq \sum_{v \in V^+} \left ( W^+(v) - W(v) \! \right) \leq D \cdot W_{\textup{max}} \ .
\end{equation}

Combining Equality~(\ref{e1}) and Inequality~(\ref{e33}) gives the claimed inequality.  
\end{proof}

Using the reasoning similar to that for Theorem~\ref{theo:(1+1) EA for DWVC-E-} and the upper bound given by Lemma~\ref{lem:weight bound}, we can get the following theorem for \mbox{\textup{DWVC-W}}$^+$.

\begin{theorem}
\label{theo:(1+1) EA for DWVC-W+}
The expected runtime of the \mbox{\textup{(1+1) EA}} (or \mbox{\textup{RLS}}) for \mbox{\textup{DWVC-W}}$^+$ is $\Or \big(\alpha m \log_{\alpha} W_{\textup{max}} \cdot \log (\max\{\alpha m, \alpha D \cdot W_{\textup{max}} \}) \! \big)$.
\end{theorem}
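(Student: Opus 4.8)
The plan is to replay the drift argument of Theorem~\ref{theo:(1+1) EA for DWVC-E-} almost line for line, with the one problem-specific ingredient -- a bound on the total LP increment the algorithm must realise -- now furnished by Lemma~\ref{lem:weight bound} instead of the endpoint-counting estimate used there. First I would note that since $W^+(v) \ge W(v)$ for every vertex, the given $Y_{\textup{orig}}$ is still a feasible dual-solution of $G^* = (V,E,W^+)$, so by Lemma~\ref{lem:sign function} the sign function stays at $1$ and every accepted mutation can only raise LP values. If $Y_{\textup{orig}}$ is already an MFDS of $G^*$ the algorithm keeps it forever and there is nothing to prove, so I would assume otherwise and fix an arbitrary MFDS $Y^*$ reachable from $Y_{\textup{orig}}$. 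The editable edges are exactly those incident to a weight-increased vertex: any edge off $E_{G^*}(V^+)$ has a tight endpoint in $G$ whose weight is unchanged, so it stays tight and cannot be increased. Hence $Y^*(e) = Y_{\textup{orig}}(e)$ outside $E_{G^*}(V^+)$ and $Y^*(e) \ge Y_{\textup{orig}}(e)$ on it.

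Next I would import the bookkeeping of Theorem~\ref{theo:(1+1) EA for DWVC-E-}: set $\Delta(e) = Y^*(e) - Y_{\textup{orig}}(e)$, let $\beta(e)$ count the valid mutations on $e$ needed to realise this increment (controlled through Lemma~\ref{lem:(1+1) EA for increment of G1}), put $E_\Delta = \{ e \mid \Delta(e) \neq 0 \}$, and use the potential $g(Y) = \sum_{e \in E_\Delta} \beta(e)$, which is non-increasing under any mutation. The only place the DWVC-E$^-$ proof invoked problem-specific structure was the bound $\sum_e \Delta(e) \le 2 D \cdot W_{\textup{max}}$; here Lemma~\ref{lem:weight bound} supplies the sharper $\sum_{e \in E_\Delta} \Delta(e) \le \sum_{v \in V^+}(W^+(v) - W(v)) \le D \cdot W_{\textup{max}}$, which slots into exactly the same spot.

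With that single substitution the two-case drift estimate carries over unchanged. In the low-average case $\sum_e \Delta(e) < \alpha |E_\Delta|$ I would again obtain $g \le 4\alpha m$, hence $|E_\Delta| \ge g/(4\alpha)$; in the high-average case I would rerun the concavity estimate (maximising $x \log_\alpha(D \cdot W_{\textup{max}}/x)$ at $x = D \cdot W_{\textup{max}}/e$) to bound the maximum of $g$ by $\Or(\alpha D \cdot W_{\textup{max}})$ and to show $|E_\Delta| \ge g/(2\alpha(\log_\alpha W_{\textup{max}}+1))$. A single-edge valid mutation on $E_\Delta$ occurs with probability $\Om(|E_\Delta|/m)$ and lowers $g$ by one, so the expected multiplicative drift is $\Om\!\big(g/(\alpha m (\log_\alpha W_{\textup{max}}+1))\big)$ with $g$ initially at most $\Or(\alpha \max\{m, D \cdot W_{\textup{max}}\})$ and at least $1$. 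Feeding these into the Multiplicative Drift Theorem yields the claimed $\Or\big(\alpha m \log_\alpha W_{\textup{max}} \cdot \log(\max\{\alpha m, \alpha D \cdot W_{\textup{max}}\})\big)$, and since only single-edge mutations are used throughout, the same bound holds for the RLS.

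The only genuinely new point -- and hence the main obstacle -- is the structural reduction of the first paragraph: that the editable edges lie in $E_{G^*}(V^+)$ and that their total realisable increment is governed by $\sum_{v \in V^+}(W^+(v)-W(v))$ rather than by the raw edge count. This is precisely what Lemma~\ref{lem:weight bound} certifies, so once that lemma is in hand the remainder is a faithful transcription of the earlier drift computation, with the constant $2$ in the $D \cdot W_{\textup{max}}$ bound simply absorbed into the $\Or$-notation.
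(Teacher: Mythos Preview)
Your proposal is correct and follows essentially the same approach as the paper: the paper's proof is a one-line reference stating that the result follows from the reasoning of Theorem~\ref{theo:(1+1) EA for DWVC-E-} together with the bound of Lemma~\ref{lem:weight bound}, and you have faithfully unpacked exactly that argument. The only difference is that you spell out the structural observation (feasibility of $Y_{\textup{orig}}$, restriction of editable edges to $E_{G^*}(V^+)$) and the drift computation in full, whereas the paper leaves these implicit in the cross-reference.
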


Given an instance $\{G = (V,E,W),Y_{\textup{orig}},W^-,V^-\}$ of \mbox{\textup{DWVC-W}}$^-$, if $Y_{\textup{orig}}$ is a feasible dual-solution of $G^* = (V,E,W^-)$, then $Y_{\textup{orig}}$ is still an MFDS of $G^*$.
Otherwise, we have to first decrease the LP values of the edges violating the dual-LP constraint with respect to the maintained dual-solution, as the general idea of the algorithms given in Section~\ref{sec:algorithms}, to get the first feasible dual-solution as soon as possible. The remaining analysis to get an MFDS of $G^*$ based on the first feasible dual-solution is similar to that given for DWVC-E$^+$.

\begin{theorem}
\label{theo:(1+1) EA for DWVC-W-}
The expected runtime of the \mbox{\textup{(1+1) EA}} (or \mbox{\textup{RLS}}) for \mbox{\textup{DWVC-W}}$^-$ is 
$\Or \big(\alpha m \log_{\alpha} W_{\textup{max}} \cdot \log (\max\{\alpha m, \alpha D \cdot W_{\textup{max}} \}) \! \big)$.
\end{theorem}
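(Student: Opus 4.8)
The plan is to follow the general idea of the algorithms from Section~\ref{sec:algorithms}: the run naturally decomposes into a decreasing phase that restores feasibility and an increasing phase that restores maximality. First I would dispose of the trivial case. If $Y_{\textup{orig}}$ is already a feasible dual-solution of $G^* = (V,E,W^-)$, then it is in fact an MFDS of $G^*$: since $G$ and $G^*$ share the edge-set $E$, every edge $[v,v']$ has an endpoint, say $v$, tight with respect to $Y_{\textup{orig}}$ in $G$, and feasibility in $G^*$ gives $W(v) = \sum_{e \in E_{G^*}(v)} Y_{\textup{orig}}(e) \le W^-(v) \le W(v)$, forcing $W^-(v) = W(v)$; hence $v$ stays tight and no incident edge can be increased, so the algorithm keeps $Y_{\textup{orig}}$. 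Assume now $Y_{\textup{orig}}$ is infeasible. By Lemma~\ref{lem:sign function} the sign function is non-decreasing on accepted steps, so the run consists of a Phase (I), in which $s(Y) = -1$ and, by Lemma~\ref{lem:infeasible to MFDS}, every accepted mutation only decreases LP values of edges in $E_{G^*}(Y)$, until the first feasible dual-solution $Y_t$ is reached; followed by a Phase (II), in which $s(Y)=1$ and every accepted mutation only increases LP values, until an MFDS $Y^*$ is obtained. The two phases do not interleave.

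For Phase (I) I would first bound the total decrement. The violating vertices satisfy $V_{G^*}(Y_{\textup{orig}}) \subseteq V^-$, and for each such $v$ the excess $\sum_{e \in E_{G^*}(v)} Y_{\textup{orig}}(e) - W^-(v) \le W(v) - W^-(v)$ because $Y_{\textup{orig}}$ is feasible for $G$; summing over the at most $D$ vertices of $V^-$ bounds the total required decrement by $\Or(D \cdot W_{\textup{max}})$. Lemma~\ref{lem:(1+1) EA for decrement of G1} controls the cost of decreasing a single edge, and aggregating over the decreased edges by the same potential-function/drift argument used in Theorem~\ref{theo:(1+1) EA for DWVC-E-} shows Phase (I) costs at most $\Or(\alpha m \log_\alpha W_{\textup{max}} \cdot \log(\max\{\alpha m, \alpha D \cdot W_{\textup{max}}\}))$; indeed it costs less, since the decrement case of Lemma~\ref{lem:(1+1) EA for decrement of G1} carries no extra factor $\alpha$.

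For Phase (II), starting from $Y_t$, I would set $\Delta(e) = Y^*(e) - Y_t(e)$ and bound $\sum_{e \in E} \Delta(e)$. The only slack available to be filled is that created by the weight decreases and by the decrements of Phase (I); since each unit of decrement on an edge $[v,v']$ frees at most one unit of slack at each of $v$ and $v'$ and the total decrement is $\Or(D \cdot W_{\textup{max}})$, we get $\sum_{e \in E} \Delta(e) = \Or(D \cdot W_{\textup{max}})$, possibly spread over as many as $m$ edges. This is exactly the regime of the increment analysis of Theorem~\ref{theo:(1+1) EA for DWVC-E-}: with potential $g(Y_t) = \sum_{e : \Delta(e) \neq 0} \beta(e)$, the per-edge bound $\beta(e) \le 2\alpha(\log_\alpha \Delta(e) + 1)$ from Lemma~\ref{lem:(1+1) EA for increment of G1}, the two-case estimate of the drift, and the Multiplicative Drift Theorem, one obtains a maximum potential $\Or(\max\{\alpha m, \alpha D \cdot W_{\textup{max}}\})$ and drift $\Om(g/(\alpha m (\log_\alpha W_{\textup{max}} + 1)))$, hence the claimed bound $\Or(\alpha m \log_\alpha W_{\textup{max}} \cdot \log(\max\{\alpha m, \alpha D \cdot W_{\textup{max}}\}))$, which dominates Phase (I).

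The main obstacle is precisely this Phase (II) increment bound. Unlike DWVC-E$^+$, where the edges to be raised are the at most $D$ new edges, here the decrements of Phase (I) can cascade slack onto neighbours of $V^-$, so one must verify carefully that the total slack — and therefore $\sum_e \Delta(e)$ — stays $\Or(D \cdot W_{\textup{max}})$ rather than growing with $m$; this is what places the analysis in the DWVC-E$^-$ regime and produces the $\max\{\alpha m, \alpha D \cdot W_{\textup{max}}\}$ term. Finally, because every step of the argument appeals only to mutations selecting a single edge, the same bound transfers verbatim to the RLS.
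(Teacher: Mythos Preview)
Your two-phase decomposition, the trivial-case argument, the Phase~(I) drift analysis, and the direct slack-counting for Phase~(II) all match the paper's approach closely (the paper phrases Phase~(II) as a reduction to an auxiliary \mbox{\textup{DWVC-W}}$^+$ instance with a modified weight function $W_t$, but this is equivalent to your direct bound $\sum_e \Delta(e) = \Or(D\cdot W_{\textup{max}})$).

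There is, however, a genuine gap at the transition from Phase~(I) to Phase~(II). You invoke the per-edge bound $\beta(e) \le 2\alpha(\log_{\alpha}\Delta(e)+1)$ from Lemma~\ref{lem:(1+1) EA for increment of G1}, but that lemma carries the hypothesis $Y^{\ddagger}(e^*)-Y^{\dagger}(e^*) \ge \sigma_1$, i.e.\ the initial step size must not exceed the required increment. At the end of Phase~(I) the step size of an edge $e$ that was decreased can be as large as $\alpha\cdot\big(Y_{\textup{orig}}(e)-Y_t(e)\big)$, which may be much larger than the Phase~(II) increment $Y^*(e)-Y_t(e)$; in that regime Lemma~\ref{lem:(1+1) EA for increment of G1} simply does not apply, and your $\beta(e)$ bound is unjustified. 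The paper flags this explicitly and argues separately that bringing these over-large step sizes back down to feasible values costs an additional $\Or\big(m\log_{\alpha}W_{\textup{max}}\cdot\log(\max\{m, D\cdot W_{\textup{max}}\})\big)$, by a drift argument analogous to the one in Phase~(I). You need to insert this step (or otherwise control the step sizes at the phase boundary) before the Phase~(II) analysis goes through.
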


\begin{proof}
We first analyze the expected runtime of the (1+1) EA to obtain an \mbox{\textup{MFDS}} $Y^*$ of $G^* = (V,E,W^-)$, starting with the MFDS $Y_{\textup{orig}}$ of $G$. For the soundness and completeness of the proof, we assume that $Y_{\textup{orig}}$ is an infeasible dual-solution of $G^*$.
Then the whole process can be divided into Phase (I) and Phase (II). Phase (I) contains all steps of the algorithm until it finds the first feasible dual-solution $Y_t$ of $G^*$; Phase (II) follows Phase (I), which contains all steps of the algorithm until it obtains the MFDS $Y^*$ of $G^*$. 

{\bf Phase (I)}.
By Lemma~\ref{lem:infeasible to MFDS}, to get the first feasible dual-solution $Y_t$ of $G^*$, the (1+1) EA only can decrease the LP values of the edges in $E_{G^*}(Y_{\textup{orig}})$, where $E_{G^*}(Y_{\textup{orig}}) \subseteq E_{G^*}(V^-)$. Thus $Y_t(e) \leq Y_{\textup{orig}}(e)$ for each edge $e \in E_{G^*}(Y_{\textup{orig}})$, and $Y_t(e) = Y_{\textup{orig}}(e)$ for each edge $e \in E \setminus E_{G^*}(Y_{\textup{orig}})$.
Denote $Y_{\textup{orig}}(e) - Y_t(e)$ by $\Delta(e)$ for each edge $e \in E_{G^*}(Y_{\textup{orig}})$, and denote by $\beta(e)$ the number of valid mutations on $e$ that the (1+1) EA requires to decrease the LP value of $e$ from $Y_{\textup{orig}}(e)$ to $Y_t(e)$. Let $E_{\Delta} = \{e \in E_{G^*}(Y_{\textup{orig}}) \ | \ \Delta(e) \neq 0\}$. Since each edge in $E_{\Delta}$ has an endpoint that is in $V^-$, 
$\sum_{e \in E_{\Delta}} \Delta(e) \le \sum_{e \in E_{\Delta}} Y_{\textup{orig}}(e) \le D \cdot W_{\textup{max}}$. 
Let the potential of the solution $Y_{\textup{orig}}(e)$ be 
$$g\left(Y_{\textup{orig}}(e) \! \right) = \sum_{e \in E_{\Delta}} \beta(e) \ .$$
Similar to the analysis given in Theorem~\ref{theo:(1+1) EA for DWVC-E+}, we have that $g(Y_{\textup{orig}})$ is the upper bound of the number of valid mutations on the edges in $E_{\Delta}$ that the algorithm requires to get $Y_t$ starting from $Y_{\textup{orig}}$, and any mutation on $Y_{\textup{orig}}$ cannot increase its potential.
The analysis for the expected drift of $g$ is divided into two cases, based on the value of $\sum_{e \in E_{\Delta}} \Delta(e)/|E_{\Delta}|$.

Case (1). $\sum_{e \in E_{\Delta}} \Delta(e) < \alpha \cdot |E_{\Delta}|$.
By Lemma~\ref{lem:(1+1) EA for decrement of G1}, we have 
$$\beta(e) \leq \lceil \log_{\alpha} \big( (\alpha-1) \Delta(e) +1 \big) \rceil
\leq \lceil \log_{\alpha} \big( \alpha \Delta(e) \! \big) \rceil
\leq \lceil \log_{\alpha} \Delta(e) + 1 \rceil 
\leq \log_{\alpha} \Delta(e) + 2$$
for each edge $e \in E_{\Delta}$. Thus
\begin{eqnarray*}
g(Y_{\textup{orig}}) = \sum_{e \in E_{\Delta}} \beta(e) 
&\le& 2 |E_{\Delta}| + \log_{\alpha} \left(\prod_{e \in E_{\Delta}} \Delta(e) \right) \\
&\le& 2 |E_{\Delta}|  + |E_{\Delta}| \cdot \log_{\alpha} \frac{\sum_{e \in E_{\Delta}} \Delta(e)}{|E_{\Delta}|} \le 3 |E_{\Delta}| \ ,
\end{eqnarray*} 
implying that $|E_{\Delta}| \ge g(Y_{\textup{orig}}) / 3$, and the maximum value that $g(Y_{\textup{orig}})$ can take is $3 m$ (as $|E_{\Delta}| \le m$).
A valid mutation that chooses exactly one of the edge in $E_{\Delta}$ can be generated by the algorithm with probability $\Om(|E_{\Delta}|/(e \cdot m))$, which results in a new solution $Y'$ with $g(Y') = g(Y_{\textup{orig}}) -1$.
Consequently, the expected drift of $g$ can be bounded by 
$$\frac{|E_{\Delta}|}{e \cdot m} \ge \frac{g(Y_{\textup{orig}})}{3e \cdot m} \ .$$

Case (2). $\sum_{e \in E_{\Delta}} \Delta(e) \ge \alpha \cdot |E_{\Delta}|$. 
By Lemma~\ref{lem:(1+1) EA for decrement of G1}, we have that 
\begin{eqnarray*}
g(Y_{\textup{orig}}) = \sum_{e \in E_{\Delta}} \beta(e) \le \sum_{e \in E_{\Delta}} (\log_{\alpha} \Delta(e) + 2)
\le |E_{\Delta}| \cdot (\log_{\alpha} W_{\textup{max}} +2) \ ,
\end{eqnarray*} 
implying that
$|E_{\Delta}| \ge  \frac{g(Y_{\textup{orig}})}{\log_{\alpha} W_{\textup{max}} + 2}$.
Using the reasoning similar to that given for Case (1), we have that the expected drift of $g$ can be bounded by 
$$\frac{|E_{\Delta}|}{e \cdot m} \ge \frac{g(Y_{\textup{orig}})}{e \cdot m \cdot  \left( \log_{\alpha} W_{\textup{max}} + 2 \right)} \ .$$
Now we consider the maximum value that $g(Y_{\textup{orig}})$ can take,
\begin{eqnarray}
\label{ee7}
g(Y_{\textup{orig}}) = \sum_{e \in E_{\Delta}} \beta(e) 
&\le& 2 |E_{\Delta}|  + \log_{\alpha} \left(\prod_{e \in E_{\Delta}} \Delta(e) \right) \\
\label{ee8}
&\le& 2 |E_{\Delta}|  + |E_{\Delta}| \cdot \log_{\alpha} \frac{\sum_{e \in E_{\Delta}} \Delta(e)}{|E_{\Delta}|} \\
\label{ee9}
&\le& 3 |E_{\Delta}| \cdot \log_{\alpha} \frac{\sum_{e \in E_{\Delta}} \Delta(e)}{|E_{\Delta}|} \\
\label{ee10}
&\le& 3 |E_{\Delta}| \cdot \log_{\alpha} \frac{D \cdot W_{\textup{max}}}{|E_{\Delta}|} \\
\label{ee11}
&\le& 3 \cdot \frac{D \cdot W_{\textup{max}}}{e} \cdot \log_{\alpha} e \\ 
\label{ee12}
&\le& 3 D \cdot W_{\textup{max}} \ ,
\end{eqnarray} 
where the factor $\log_{\alpha} e$ is not greater than $e$ as $\alpha \in [2, W_{\textup{max}}]$, and Inequality~\ref{ee11} can be derived by the observation that $f(x) = x \cdot \log_{\alpha} (D \cdot W_{\textup{max}}/x)$ ($x > 0$) gets its maximum value when $x = D \cdot W_{\textup{max}}/e$. 

Summarizing the analysis for Cases (1-2) and the fact that the value of $\alpha$ is between $2$ and $W_{\textup{max}}$, we have that the expected drift of $g$ can be lower bounded by 
$$\frac{|E_{\Delta}|}{e \cdot m} \ge \frac{g(Y_{\textup{orig}})}{e \cdot m \cdot  \max\{3, \log_{\alpha} W_{\textup{max}} + 2 \} } \ge \frac{g(Y_{\textup{orig}})}{e \cdot m \cdot (\log_{\alpha} W_{\textup{max}} + 2)} \ ,$$
and the maximum value that $g(Y_{\textup{orig}})$ can take is upper bounded by
$$\max\{3m, 3D \cdot W_{\textup{max}}\}\ .$$
The Multiplicative Drift Theorem~\cite{algorithmica/DoerrJW12} implies that the (1+1) EA takes expected runtime
$\Or \big(m \log_{\alpha} W_{\textup{max}} \cdot \log (\max\{m, D \cdot W_{\textup{max}} \}) \! \big)$ to obtain the first feasible dual-solution $Y_t$ starting with $Y_{\textup{orig}}$.

{\bf Phase (II)}.
Obviously $Y_t$ may not be an MFDS of $G^* = (V,E,W^-)$. Thus we also need to consider the process of the (1+1) EA to get an MFDS of $G^*$ starting with $Y_t$. To simplify the analysis, we intend to transform Phase (II) as an execution of the (1+1) EA for an instance $\{G_t = (V,E,W_t), Y_t, W^-, V_t\}$ of DWVC-W$^+$. 
Thus in the following discussion, we first give the setting way of the weight function $W_t$ such that $Y_t$ is an MFDS of $G_t$, and $W_t(v) \le W^-(v)$ for each vertex $v \in V$.

Let $V_{\Delta}$ contain all endpoints of the edges in $E_{\Delta}$. For each vertex $v \in V \setminus V_{\Delta}$, let $W'(v) = W(v)$, and for each vertex $v \in V_{\Delta}$, let
$$W'(v) = W(v) - \sum_{e \in E_{\Delta}| e \cap v \neq \emptyset}\Delta(e) \ .$$
As $Y_{\textup{orig}}$ is an MFDS of $G$, $Y_t$ is an obvious MFDS of $G' = (V,E,W')$.
Note that there may exist some vertex $v \in V$ with $W'(v) > W^-(v)$. 
Thus for each vertex $v \in V$, we let 
\begin{eqnarray}
\label{eqn:11-1}
W_t(v) = \min\{W'(v), W^-(v)\} \ .
\end{eqnarray}

Because of Equality~\ref{eqn:11-1} and the fact that $Y_t$ is a feasible dual-solution of both $G'$ and $G^*$, $Y_t$ is a feasible dual-solution of $G_t$. Furthermore, since $Y_t$ is an MFDS of $G'$, $Y_t$ is an MFDS of $G_t$.
Now let $V_t$ contain all vertices $v \in V$ with $W_t(v) < W^-(v)$. 
Then the instance $\{G_t = (V,E,W_t), Y_t, W^-, V_t\}$ of DWVC-W$^+$ is completely constructed.

It is necessary to remark that for an edge $e$ in $E_{\Delta}$, the step size of $e$ may be larger than $Y^*(e) - Y_t(e)$ at the beginning of Phase (II), then Lemma~\ref{lem:(1+1) EA for increment of G1} is invalid under the situation. Fortunately, the step size of $e$ is at most $\alpha \cdot (Y_{\textup{orig}}(e) - Y_t(e)) = \alpha \cdot \Delta(e)$. Thus using the multiplicative drift analysis similar to that given above, we can get that the expected runtime of the (1+1) EA to decrease the step sizes of the edges in $E_{\Delta}$ to the feasible values is bounded by $\Or \big(m \log_{\alpha} W_{\textup{max}} \cdot \log (\max\{m, D \cdot W_{\textup{max}} \}) \! \big)$.

Now we assume that Lemma~\ref{lem:(1+1) EA for increment of G1} is valid for each edge in $E_{\Delta}$.  
Similar to Lemma~\ref{lem:weight bound}, we consider the upper bound on the sum of the feasible LP value increments on the edges with respect to $W^-$, where
\begin{eqnarray*}
\sum_{v \in V_t} \big( W^-(v) - W_t(v) \! \big) &\leq& \sum_{v \in V_t} \big( W(v) - W_t(v) \! \big) \\
&\leq& \sum_{v \in V} \Big( \! \left(W(v) - W^-(v) \! \right) + \left(W(v) - W'(v) \right) \! \Big) \\
&\leq&  \sum_{v \in V} \left( \left(W(v) - W^-(v) \! \right) + \sum_{e \in E_{\Delta}| e \cap v \neq \emptyset}\Delta(e) \right) \\ 
&\leq& \sum_{v \in V} \left(W(v) - W^-(v) \! \right) + 2 \sum_{e \in E_{\Delta}} \Delta(e) \\
&\leq& D \cdot W_{\textup{max}} + 2 D \cdot W_{\textup{max}} = 3 D \cdot W_{\textup{max}} \ .
\end{eqnarray*}
By Lemma~\ref{lem:weight bound} and the reasoning similar to that for Theorem~\ref{theo:(1+1) EA for DWVC-E-}, we have that the (1+1) EA takes expected runtime $\Or \big(\alpha m \log_{\alpha} W_{\textup{max}} \cdot \log (\max\{\alpha m, \alpha D \cdot W_{\textup{max}} \}) \! \big)$ to get an MFDS of $G^*$ starting with $Y_t$.

Summarizing the above discussion, the (1+1) EA takes expected runtime $\Or \big(\alpha m \log_{\alpha} W_{\textup{max}} \cdot \log (\max\{\alpha m, \alpha D \cdot W_{\textup{max}} \}) \! \big)$ to get an MFDS of $G^*$. The above expected runtime also applies to the RLS.   
\end{proof}

\begin{theorem}
\label{theo:(1+1) EA for DWVC-E}
The expected runtime of the \mbox{\textup{(1+1) EA}} (or \mbox{\textup{RLS}}) for \mbox{\textup{DWVC-W}} is 
$\Or \big(\alpha m \log_{\alpha} W_{\textup{max}} \cdot \log (\max\{\alpha m, \alpha D \cdot W_{\textup{max}} \}) \! \big)$.
\end{theorem}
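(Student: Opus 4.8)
The plan is to reduce the general \textup{DWVC-W} to a combination of the two special variants already analyzed, reusing verbatim the two-phase structure of Theorem~\ref{theo:(1+1) EA for DWVC-W-}. In a general instance $\{G=(V,E,W), Y_{\textup{orig}}, W^*\}$ both $V^+$ and $V^-$ may be nonempty. Since raising the weight of a vertex can never turn a satisfied dual-LP constraint into a violated one, the only vertices at which $Y_{\textup{orig}}$ can violate feasibility on $G^*=(V,E,W^*)$ are those in $V^-$; conversely the vertices in $V^+$ are exactly where $Y_{\textup{orig}}$ acquires new slack. First I would split the run into Phase (I), ending when the algorithm reaches its first feasible dual-solution $Y_t$ of $G^*$, and Phase (II), ending when an \textup{MFDS} $Y^*$ is produced; the former mirrors \textup{DWVC-W}$^-$ and the latter mirrors \textup{DWVC-W}$^+$.

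For Phase (I), Lemma~\ref{lem:infeasible to MFDS} guarantees that every accepted mutation merely decreases the LP values of edges in $E_{G^*}(Y_{\textup{orig}})\subseteq E_{G^*}(V^-)$, so the total decrement is at most $\sum_{v\in V^-}\!\big(W(v)-W^*(v)\big)\le D\cdot W_{\textup{max}}$. Defining the potential $g$ as the number of outstanding valid mutations (exactly as in Theorem~\ref{theo:(1+1) EA for DWVC-W-}) and invoking Lemma~\ref{lem:(1+1) EA for decrement of G1} to bound $\beta(e)$, the same two-case computation bounds $g(Y_{\textup{orig}})$ by $\max\{3m, 3D\cdot W_{\textup{max}}\}$ and lower-bounds the drift by $g/(e\cdot m\cdot(\log_\alpha W_{\textup{max}}+2))$, so the Multiplicative Drift Theorem gives Phase (I) an expected cost of $\Or\big(m\log_\alpha W_{\textup{max}}\cdot\log(\max\{m, D\cdot W_{\textup{max}}\})\big)$.

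For Phase (II), I would reuse the reduction that rewrites the remaining run as an execution on a \textup{DWVC-W}$^+$ instance $\{G_t=(V,E,W_t), Y_t, W^*, V_t\}$, choosing $W_t(v)=\min\{W'(v),W^*(v)\}$ with $W'$ discounted by the Phase (I) decrements so that $Y_t$ is an \textup{MFDS} of $G_t$ and $W_t\le W^*$; the transient issue that a step size may temporarily exceed the remaining room for an edge in $E_\Delta$ is absorbed by the bound $\alpha\cdot\Delta(e)$ exactly as before. The one genuinely new bookkeeping step is the room estimate, in which the total feasible increment now collects both the authentic increases at $V^+$ and the slack reopened by Phase (I) near $V^-$:
\begin{equation*}
\sum_{v\in V_t}\big(W^*(v)-W_t(v)\big)\le \sum_{v\in V^+}\big(W^*(v)-W(v)\big)+2\sum_{e\in E_\Delta}\Delta(e)\le D\cdot W_{\textup{max}}+2D\cdot W_{\textup{max}}=3D\cdot W_{\textup{max}}.
\end{equation*}
With this $\Or(D\cdot W_{\textup{max}})$ budget in hand, Lemma~\ref{lem:weight bound} and the drift argument of Theorem~\ref{theo:(1+1) EA for DWVC-E-} give Phase (II) an expected cost of $\Or\big(\alpha m\log_\alpha W_{\textup{max}}\cdot\log(\max\{\alpha m,\alpha D\cdot W_{\textup{max}}\})\big)$, which dominates Phase (I) and hence yields the claimed total runtime; the RLS bound follows because only single-edge mutations are used throughout. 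The main obstacle I expect is precisely this combined room estimate for Phase (II): I must check that the slack created while repairing feasibility at $V^-$ superposes additively with the pre-existing slack at $V^+$ without inflating the $\Or(D\cdot W_{\textup{max}})$ budget, and that no edge in $E_\Delta$ is double-counted across the two endpoint sums.
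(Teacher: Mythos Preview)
Your proposal is correct and follows essentially the same approach as the paper: a two-phase decomposition mirroring Theorem~\ref{theo:(1+1) EA for DWVC-W-}, with Phase~(I) decrements bounded by $D\cdot W_{\textup{max}}$ and the combined Phase~(II) room bounded by $3D\cdot W_{\textup{max}}$ (the paper states exactly these two budgets and then defers to the reasoning of Theorems~\ref{theo:(1+1) EA for DWVC-E-} and~\ref{theo:(1+1) EA for DWVC-W-}). One small slip: the Phase~(I) decrement is not literally bounded by $\sum_{v\in V^-}\big(W(v)-W^*(v)\big)$ since step sizes can overshoot, but the correct bound $\sum_{e\in E_\Delta}\Delta(e)\le\sum_{e\in E_\Delta}Y_{\textup{orig}}(e)\le |V^-|\cdot W_{\textup{max}}\le D\cdot W_{\textup{max}}$ from Theorem~\ref{theo:(1+1) EA for DWVC-W-} gives the same conclusion.
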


\begin{proof}
Consider an instance $\{G = (V,E,W),Y_{\textup{orig}},W^*,V^+,V^-\}$ of \mbox{\textup{DWVC-W}}. By the discussion for \mbox{\textup{DWVC-W}}$^+$ and \mbox{\textup{DWVC-W}}$^-$, if $Y_{\textup{orig}}$ is a feasible dual-solution with respect to $G^*$, then only the LP values of the edges in $E_{G^*}(V^+)$ may have the room to be increased, and the sum of the increments can be upper bounded by $D \cdot W_{\textup{max}}$.
If $Y_{\textup{orig}}$ is an infeasible dual-solution with respect to $G^*$, then we have that the sum of the decrements on the LP values of the edges in $E_{G^*}(V^-)$ and the sum of the increments on the LP values of the edges incident to the vertices in $N_{G^*}(V^-) \cup V^-  \cup V^+$ can be upper bounded by $D \cdot W_{\textup{max}}$ and $3 D \cdot W_{\textup{max}}$, respectively. 
Using the reasoning similar to that for Theorems~\ref{theo:(1+1) EA for DWVC-E-} and~\ref{theo:(1+1) EA for DWVC-W-}, we have the theorem for \mbox{\textup{DWVC-W}}.
\end{proof}

\section{Runtime Analysis for the \mbox{\textup{RLS with 1/5-th Rule}} and \mbox{\textup{(1+1) EA with 1/5-th Rule}}}

Given an MFDS $Y^*$ that is obtained by the \mbox{\textup{RLS with 1/5-th Rule}} starting with a feasible dual-solution $Y^{\dagger}$ of $G^*$, the following two lemmata consider the behavior of the algorithm on a specific edge $e^*$ in $G^*$ during the process from $Y^{\dagger}$ to $Y^*$. 
Denote $Y^*(e^*) - Y^{\dagger}(e^*)$ by $\Delta$. 

\begin{lemma}
\label{lem:RLS with 1/5-th Rule increase weight partly}
If the step size of the edge $e^*$ has an initial value $\sigma_1 > 0$, then the \mbox{\textup{RLS with 1/5-th Rule}} takes expected runtime $\Or \left(m \left(\log_{\alpha} \sigma_1 + \log_{\alpha} \Delta \right) \! \right)$ to find a feasible dual-solution $Y^{\ddagger}$ such that $Y^{\ddagger}(e^*) - Y^{\dagger}(e^*) \geq \Delta /(\alpha + 1)$, during the process from $Y^{\dagger}$ to $Y^*$.
\end{lemma}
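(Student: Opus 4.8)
The plan is to follow the two quantities that govern how the \mbox{\textup{RLS with 1/5-th Rule}} treats $e^*$: the cumulative increment $c$ already accumulated on $e^*$ (so $c = Y(e^*) - Y^{\dagger}(e^*)$ for the current solution $Y$), and the step-size exponent $\ell = \log_{\alpha}\sigma(e^*)$. Since $Y^{\dagger}$ is feasible, Lemma~\ref{lem:sign function} guarantees that every accepted mutation keeps the solution feasible, so along the whole process from $Y^{\dagger}$ to $Y^*$ each edge value is monotonically non-decreasing and $Y(e')\le Y^*(e')$ for all $e'$. In this feasible regime only increasing moves on $e^*$ can be accepted: a decreasing move ($b=-1$) is rejected whenever $Y(e^*)>0$, and both a rejected increase and a rejected decrease shrink the step size by $\alpha^{1/4}$, i.e. $\ell\mapsto\max\{\ell-1/4,0\}$, whereas an accepted increase raises $\ell$ by $1$ and adds the current step size to $c$.

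The structural heart of the argument is a lower bound on the feasible room of $e^*$. Write $\Delta' = \Delta/(\alpha+1)$ for the target increment. For either endpoint $v$ of $e^*$, feasibility of $Y^*$ together with $Y\le Y^*$ componentwise gives
\[
W^*(v) - \sum_{e'\in E_{G^*}(v)} Y(e') \;\ge\; \sum_{e'\in E_{G^*}(v)}\big(Y^*(e')-Y(e')\big) \;\ge\; Y^*(e^*)-Y(e^*) \;=\; \Delta - c .
\]
Hence, as long as $c<\Delta'$, the room at each endpoint exceeds $\Delta-\Delta' = \alpha\Delta/(\alpha+1)$; set $\theta = \alpha\Delta/(\alpha+1)$. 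Thus every increasing move of step size at most $\theta$ is feasible, hence accepted. Conversely, any accepted increase of step size larger than $\Delta'$ already pushes $c$ past $\Delta'$ and finishes the task, so before the goal is reached every accepted increase has pre-step-size at most $\Delta'$, the step size therefore never exceeds $\alpha\Delta'=\theta$, and consequently each increase attempt (chosen with probability $1/2$) is accepted.

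With these facts the runtime splits into two stages, counted in valid mutations on $e^*$, each of which the \mbox{\textup{RLS with 1/5-th Rule}} produces with probability $1/m$ per iteration (this is the source of the factor $m$). First stage: if $\sigma_1>\theta$, then any feasible increase of such a large step size would add more than $\Delta'$ and already finish the task, so until the goal is met every valid mutation is rejected and lowers $\ell$ by $1/4$; after at most $4\log_{\alpha}(\sigma_1/\theta)=\Or(\log_{\alpha}\sigma_1)$ of them the step size has dropped to at most $\theta$, with $c$ unchanged. Second stage: with $\sigma(e^*)\le\theta$, an increase is accepted with probability $1/2$ (raising $\ell$ by $1$) and a decrease is rejected with probability $1/2$ (lowering $\ell$ by $1/4$), so $\ell$ has constant positive drift $3/8$. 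By an additive drift argument $\ell$ reaches the level $\log_{\alpha}\Delta'$ within $\Or(\log_{\alpha}\Delta')=\Or(\log_{\alpha}\Delta)$ valid mutations in expectation, and once $\sigma(e^*)\ge\Delta'$ a single further accepted increase (expected $\Or(1)$ valid mutations) raises $c$ to at least $\Delta'$, yielding the desired $Y^{\ddagger}$. Summing the stages and multiplying by the expected $m$ iterations per valid mutation gives $\Or\!\big(m(\log_{\alpha}\sigma_1+\log_{\alpha}\Delta)\big)$.

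The main obstacle is the room bound of the second paragraph: it is precisely what produces the factor $1/(\alpha+1)$ and what certifies that increases remain feasible—hence accepted with probability exactly $1/2$—throughout the run, even though neighbouring edges are simultaneously being increased and eat into the room. Everything else is routine once feasibility is secured: the deterministic $\Or(\log_{\alpha}\sigma_1)$ shrinking of an oversized initial step size, and the additive-drift climb of $\ell$. The only points requiring care are the reflecting barrier $\sigma\ge1$ (so $\ell\ge0$) and the corner case $Y(e^*)=0$, where a decreasing move is vacuously accepted and merely raises $\ell$, which only helps the drift.
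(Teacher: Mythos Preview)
Your argument is correct and follows essentially the same two–phase strategy as the paper: first shrink an oversized step size, then use the positive additive drift $3/8$ on the exponent to climb back up, deriving the factor $1/(\alpha+1)$ from the interplay between the last accepted step size and the remaining gap. The decomposition differs only cosmetically—the paper lets Phase~(II) terminate when $\sigma(e^*)$ first exceeds the \emph{current} remaining gap $Y^*(e^*)-Y(e^*)$ and extracts $\Delta/(\alpha{+}1)$ a posteriori from the final accepted increment, whereas you fix the thresholds $\Delta'=\Delta/(\alpha{+}1)$ and $\theta=\alpha\Delta'$ up front and stop once $c\ge\Delta'$.

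Where your write-up is actually stronger is the room bound in the second paragraph: the paper simply asserts that an increase of $e^*$ is accepted during Phase~(II) without saying why this remains true while neighbouring edges are being raised in parallel; your inequality $W^*(v)-\sum_{e'}Y(e')\ge Y^*(e^*)-Y(e^*)$, obtained from $Y\le Y^*$ componentwise and feasibility of $Y^*$, is exactly the missing justification. One small point to tidy: after $\ell$ first reaches $\log_\alpha\Delta'$ it can drop back below before the next $b=+1$ occurs, so ``a single further accepted increase'' should be argued via a constant-drift restart (each return costs $O(1)$ expected mutations), which does not affect the bound.
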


\begin{proof}
Since $Y^{\dagger}$ is a feasible dual-solution of $G^*$, any mutation that decreases the LP values of the edges would be rejected.
Thus for any dual-solution $Y$ accepted by the algorithm during the process from $Y^{\dagger}$ to $Y^*$, there is $Y(e^*) \geq Y^{\dagger}(e^*)$.
The following analysis divides the process from $Y^{\dagger}$ to $Y^{\ddagger}$ into two phases: Phase (I) and Phase (II). 

As the initial value $\sigma_1$ of the step size of $e^*$ may be larger than $\Delta$, Phase (I) contains all steps of the algorithm until the step size of $e^*$ is not greater than $\Delta$, where $Y_1$ denotes the dual-solution maintained by the algorithm at that moment. 
We remark that for any dual-solution $Y$ (including $Y_1$) obtained during Phase (I), $Y(e^*) = Y^{\dagger}(e^*)$.
Phase (II) follows Phase (I), and ends if the step size of $e^*$ is greater than $Y^*(e^*) - Y_2(e^*)$, where $Y_2$ denotes the dual-solution maintained by the algorithm at the moment. 
We will show that $Y_2$ satisfies the claimed condition given for $Y^{\ddagger}$. 
If $\sigma_1 \leq \Delta$, then we are already at Phase (II). 
For the soundness and completeness of the proof, we assume that $\sigma_1 > \Delta$ in the following discussion. 

Let $\sigma_1 = \alpha^p$ and $\alpha^q \leq \Delta < \alpha^{q+1/4}$, where $p, q \in \{l/4 | l \in \mathds{N} \}$. 
Now we analyze the expected runtime that Phase (I) takes to decrease the step size of $e^*$ from $\alpha^p$ to $\alpha^q$. 
Since $\sigma(e^*) > Y^*(e^*) -Y(e^*)$ always holds for any maintained dual-solution $Y$ during Phase (I), and the fact that $Y^{\dagger}$ is feasible, any mutation on $e^*$ cannot be accepted, and decreases the step size of $e^*$ by a factor $\alpha^{1/4}$. 
Thus Phase (I) needs $\Or(p-q) = \Or(p) = \Or(\log_{\alpha} \sigma_1)$ mutations on $e^*$ to decrease the step size to $\alpha^q$.  
 
Now we assume that the step size of $e^*$ is decreased to $\alpha^q$, i.e., we are at Phase (II) now. 
If a mutation on $e^*$ increases its LP value, then the mutation would be accepted, and the exponent of the step size would be increased to $q+1$; otherwise, the mutation would be rejected, and the exponent of the step size would be decreased to $q -1/4$.
The mutation on $e^*$ increases or decreases its LP value with the same probability 1/2, hence the value of the exponent increases by 1 or decreases by 1/4 with the same probability 1/2. 
Observe that the drift on the exponent is $(1- 1/4)/2 = 3/8$. 
If the step size of $e^*$ is increased to over $\alpha^{\lceil \log_{\alpha} \Delta \rceil}$ (at most $\alpha^{\lceil \log_{\alpha} \Delta \rceil + 3/4}$), then Phase (II) obviously ends.
In fact, the step size may not be increased to over $\alpha^{\lceil \log_{\alpha} \Delta \rceil}$ during Phase (II).
Using the Additive Drift Theorem~\cite{he2004study}, the algorithm needs $\Or(\log_{\alpha} \Delta - q) = \Or(\log_{\alpha} \Delta)$ mutations on $e^*$ to increase the exponent to over $\lceil \log_{\alpha} \Delta \rceil$. 
Thus Phase (II) contains $\Or(\log_{\alpha} \Delta)$ mutations on $e^*$. 

For the dual-solution $Y_2$ obtained by Phase (II), $\sigma(e^*) > Y^*(e^*) - Y_2(e^*)$. 
Since Phase (II) contains at least one mutation increasing the LP value of $e^*$ that is accepted, the gap $\Delta = Y^*(e^*) - Y^{\dagger}(e^*)$ is decreased by at least $\sigma(e^*)/\alpha$, and we have 
$$Y^*(e^*) - Y_2(e^*) 
\leq \Delta - \sigma(e^*)/\alpha 
\leq  \Delta - \left(Y^*(e^*) - Y_2(e^*) \! \right)/\alpha \ .$$
By the above inequality, it is easy to get that $Y_2(e^*) - Y^{\dagger}(e^*) \geq \Delta / (\alpha +1)$.

Summarizing the above analysis, the algorithm takes expected runtime $\Or \left(m (\log_{\alpha} \sigma_1 + \log_{\alpha} \Delta) \! \right)$ to get a feasible dual-solution satisfying the claimed condition. 
\end{proof}

\begin{lemma}
\label{lem:RLS with 1/5-th Rule increase weight}
If the step size of the edge $e^*$ has an initial value not greater than $\Delta$, then the \mbox{\textup{RLS with 1/5-th Rule}} takes expected runtime $\Or \left(\alpha m \log_{\alpha} \Delta \cdot \log \Delta \right)$ to increase the LP value of $e^*$ from $Y^{\dagger}(e^*)$ to $Y^*(e^*)$, during the process from $Y^{\dagger}$ to $Y^*$.
\end{lemma}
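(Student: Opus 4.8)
The plan is to bootstrap Lemma~\ref{lem:RLS with 1/5-th Rule increase weight partly}, which closes at least a $1/(\alpha+1)$ fraction of the remaining gap in a single ``round'', and to iterate it until the gap vanishes. First I would note that since $Y^{\dagger}$ is feasible, Lemma~\ref{lem:sign function} forces every accepted mutation on the way from $Y^{\dagger}$ to $Y^*$ to increase LP values; hence the LP value of $e^*$ rises monotonically and the remaining gap only shrinks. This lets me partition the run into consecutive rounds, where round $i$ starts from a feasible dual-solution $Y^{(i)}$ with remaining gap $\Delta^{(i)} := Y^*(e^*) - Y^{(i)}(e^*)$, with $Y^{(1)} = Y^{\dagger}$ and $\Delta^{(1)} = \Delta$.

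Each round is one application of Lemma~\ref{lem:RLS with 1/5-th Rule increase weight partly} with $Y^{(i)}$ in the role of $Y^{\dagger}$ and gap $\Delta^{(i)}$: it yields a feasible $Y^{(i+1)}$ with $Y^{(i+1)}(e^*) - Y^{(i)}(e^*) \geq \Delta^{(i)}/(\alpha+1)$, so $\Delta^{(i+1)} \leq \frac{\alpha}{\alpha+1}\,\Delta^{(i)}$ deterministically. Iterating gives $\Delta^{(i)} \leq \bigl(\frac{\alpha}{\alpha+1}\bigr)^{i-1}\Delta$, and using $\ln(1+1/\alpha) \geq 1/(\alpha+1)$ the gap drops below $1$ after $k = \Or(\alpha \log \Delta)$ rounds. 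Here I would invoke a simple integrality-type observation: each accepted increment of $e^*$ equals its step size, which is always at least $1$, so the remaining gap $\Delta^{(i)}$ (a sum of still-pending increments) is either $0$ or at least $1$. Thus as soon as $\Delta^{(i)} < 1$ we actually have $\Delta^{(i)} = 0$ and $e^*$ has reached $Y^*(e^*)$; the number of rounds is the deterministic quantity $k = \Or(\alpha \log \Delta)$.

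It then remains to bound the expected cost of one round. Lemma~\ref{lem:RLS with 1/5-th Rule increase weight partly} charges round $i$ expected runtime $\Or\!\bigl(m(\log_{\alpha}\sigma_1^{(i)} + \log_{\alpha}\Delta^{(i)})\bigr)$, where $\sigma_1^{(i)}$ is the step size of $e^*$ at the start of round $i$. Since $\Delta^{(i)} \leq \Delta$, the second term is $\Or(\log_{\alpha}\Delta)$ throughout. For the first term I would reuse the internal structure of the proof of Lemma~\ref{lem:RLS with 1/5-th Rule increase weight partly}: its Phase~(II) terminates only once the step size of $e^*$ just exceeds the then-remaining gap, so the step size carried into round $i+1$ is at most a fixed power of $\alpha$ times $\Delta^{(i)}$, i.e.\ $\sigma_1^{(i+1)} = \Or(\alpha^{2}\Delta)$, whence $\log_{\alpha}\sigma_1^{(i)} = \Or(\log_{\alpha}\Delta)$ for every round (the base case $\sigma_1^{(1)} \leq \Delta$ is the hypothesis). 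Each round therefore costs $\Or(m\log_{\alpha}\Delta)$ in expectation, conditioned on its starting state.

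Finally I would assemble the pieces: the number of rounds $k$ is deterministic, and each round's conditional expected cost is $\Or(m\log_{\alpha}\Delta)$ uniformly over the reachable starting states, so by linearity of expectation the total expected runtime is $k\cdot\Or(m\log_{\alpha}\Delta) = \Or(\alpha m \log_{\alpha}\Delta \cdot \log \Delta)$, as claimed. The step I expect to be the main obstacle is the step-size bookkeeping across rounds: I must guarantee that the $\log_{\alpha}\sigma_1^{(i)}$ penalty from Lemma~\ref{lem:RLS with 1/5-th Rule increase weight partly} never inflates to $\log_{\alpha}W_{\textup{max}}$ (the global step-size cap), which forces me to extract the tighter ``step size just above the remaining gap'' control from that lemma's Phase~(II) rather than relying on the trivial bound on $\sigma$.
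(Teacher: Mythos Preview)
Your proposal is correct and follows essentially the same approach as the paper: iterate Lemma~\ref{lem:RLS with 1/5-th Rule increase weight partly} so that each round shrinks the remaining gap by a factor $\alpha/(\alpha+1)$, bound the step size so that each round costs $\Or(m\log_{\alpha}\Delta)$ in expectation, and combine to obtain $\Or(\alpha m\log_{\alpha}\Delta\cdot\log\Delta)$. The paper's write-up is slightly more economical in two places --- it argues once that the step size is globally bounded by $\alpha^{\lceil\log_{\alpha}\Delta\rceil+1}$ throughout the entire run (rather than your per-round bookkeeping of $\sigma_1^{(i)}$), and it invokes the Multiplicative Drift Theorem instead of unrolling the geometric contraction by hand --- but the underlying argument is the same.
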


\begin{proof}
Let $Y$ be an arbitrary dual-solution obtained during the process from $Y^{\dagger}$ to $Y^*$. In the following discussion, we first analyze the expected runtime of the \mbox{\textup{RLS with 1/5-th Rule}} to obtain a solution $Y'$ starting with $Y$ such that $Y'(e^*) - Y(e^*) \geq \left(Y^*(e^*) - Y(e^*) \! \right) /(\alpha + 1)$. 

As the initial value of the step size of $e^*$ is not greater than $\Delta$, we have the observation that the maximum value of the step size of $e^*$ during the process from $Y^{\dagger}$ to $Y^*$ is at most $\alpha^{p + 1}$, where $p = \lceil \log_{\alpha} \Delta \rceil$. 
For the moment that the algorithm maintains the dual-solution $Y$, the corresponding step size of $e^*$ may be greater than $Y^*(e^*) - Y(e^*)$, thus we have to consider Phase (I) (defined in the proof of Lemma~\ref{lem:RLS with 1/5-th Rule increase weight partly}). 
Since the step size of $e^*$ is bounded by $\alpha^{p + 1}$, Phase (I) needs $\Or(\log_{\alpha} \Delta)$ mutations on $e^*$. 
Furthermore, since $Y(e^*) \geq Y^{\dagger}(e^*)$, Phase (II) needs $\Or \left(\log_{\alpha} \left(Y^*(e^*) - Y(e^*) \! \right) \! \right) = \Or(\log_{\alpha} \Delta)$ mutations on $e^*$. Consequently, the \mbox{\textup{RLS with 1/5-th Rule}} takes expected runtime $\Or(m \log_{\alpha} \Delta)$ to get $Y'$ starting with $Y$. 

Using the above conclusion, the Multiplicative Drift Theorem~\cite{algorithmica/DoerrJW12} implies the claimed runtime.
\end{proof}

Now we analyze the expected runtime of the \mbox{\textup{RLS with 1/5-th Rule}} to make the edge $e^*$ satisfy the dual-LP constraint, if it starts with an infeasible dual-solution with respect to which $e^*$ violates the dual-LP constraint.

\begin{lemma}
\label{lem:RLS with 1/5-th Rule decrease weight}
Consider an infeasible dual-solution $Y^{\dagger}$ of $G^*$, with respect to which the edge $e^*$ violates the dual-LP constraint. For the first feasible dual-solution $Y^{\ddagger}$ obtained by the \mbox{\textup{RLS with 1/5-th Rule}} starting with $Y^{\dagger}$, the algorithm takes expected runtime $\Or \! \left(m \log_{\alpha} \left(Y^{\dagger}(e^*) - Y^{\ddagger}(e^*) \! \right) \! \right)$
 to decrease the LP value of $e^*$ from $Y^{\dagger}(e^*)$ to $Y^{\ddagger}(e^*)$. 
\end{lemma}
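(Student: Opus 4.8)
The plan is to adapt the decrement analysis of Lemma~\ref{lem:(1+1) EA for decrement of G1} to the randomised direction of the \mbox{\textup{RLS with 1/5-th Rule}}, the only real change being that $\sigma(e^*)$ no longer increases monotonically but instead performs a biased random walk. First I would fix the behaviour of a valid mutation on $e^*$ while the maintained solution $Y$ is infeasible, which it remains until $Y^{\ddagger}$ is reached, so $s(Y)=-1$ throughout. If $e^*$ is chosen with $b=+1$, then raising $Y(e^*)$ gives $f(Y',Y)<0$ (the $s(Y)=-1$ branch of $f$ computed in Section~\ref{sec:algorithms}), so the mutation is rejected and $\sigma(e^*)$ shrinks by a factor $\alpha^{1/4}$; if $e^*$ is chosen with $b=-1$ and $e^*\in E_{G^*}(Y)$, then, since the algorithm touches only $e^*$, the penalty term vanishes and $f(Y',Y)=Y(e^*)-Y'(e^*)\ge 0$, so the mutation is accepted, $Y(e^*)$ drops by $\sigma(e^*)$ (clamped at $0$), and $\sigma(e^*)$ grows by a factor $\alpha$. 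Using Lemma~\ref{lem:infeasible to MFDS} I would then observe that $Y(e^*)$ never increases, and that once $e^*$ ceases to violate it cannot violate again, so the whole decrement $\Delta := Y^{\dagger}(e^*)-Y^{\ddagger}(e^*)$ is produced under exactly this accept/reject rule.

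Next I would analyse the exponent of $\sigma(e^*)$, which rises by $1$ on each accepted (probability $1/2$) and falls by $1/4$ on each rejected (probability $1/2$) valid mutation on $e^*$, giving an upward drift of $3/8$ --- precisely the random walk treated in Phase~(II) of Lemma~\ref{lem:RLS with 1/5-th Rule increase weight partly}. A key simplification relative to Lemma~\ref{lem:RLS with 1/5-th Rule increase weight partly} is that here a decrease is accepted no matter how large $\sigma(e^*)$ is, as long as $e^*$ still violates, so an oversized step size only helps rather than triggering a rejection phase; this is exactly why no $\log_{\alpha}\sigma_1$ term appears in the bound. Applying the Additive Drift Theorem~\cite{he2004study} to the exponent, after $\Or(\log_{\alpha}\Delta)$ valid mutations on $e^*$ the step size reaches order $\Delta$, at which point the next accepted decrease realises the full decrement. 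Since each iteration selects $e^*$ with probability $1/m$, a valid mutation on $e^*$ occurs once every $\Or(m)$ iterations in expectation, and multiplying the two estimates yields the claimed $\Or\!\left(m\log_{\alpha}\!\left(Y^{\dagger}(e^*)-Y^{\ddagger}(e^*)\right)\right)$.

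The main obstacle is the bookkeeping around overshoot: because decreases are accepted irrespective of step size, $Y(e^*)$ may drop below the eventual $Y^{\ddagger}(e^*)$ and be clamped at $0$. I would absorb this exactly as in Lemma~\ref{lem:(1+1) EA for decrement of G1}, splitting off the case $Y^{\ddagger}(e^*)=0$ and noting that any overshoot is already counted inside $\Delta$, on whose logarithm the bound is stated. A secondary point is to confirm that the reflecting lower bound $1$ and the upper cap $\alpha^{\lceil \log_{\alpha} W_{\textup{max}}\rceil+1}$ on the step size do not disturb the drift estimate; since $\Delta\le W_{\textup{max}}$ keeps the relevant exponents inside $[0,\lceil \log_{\alpha} W_{\textup{max}}\rceil]$, neither boundary becomes active before the decrement is complete, so the drift bound from Lemma~\ref{lem:RLS with 1/5-th Rule increase weight partly} transfers directly.
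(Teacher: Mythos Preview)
Your proposal is correct and follows essentially the same approach as the paper: both arguments track the exponent of $\sigma(e^*)$ as a random walk with drift $3/8$ (accepted decreases raise it by $1$, rejected increases lower it by $1/4$, each with probability $1/2$), invoke the Additive Drift Theorem~\cite{he2004study} to bound the number of valid mutations on $e^*$ by $\Or(\log_{\alpha}\Delta)$, and then multiply by the $\Or(m)$ waiting time for such a mutation. Your treatment is in fact slightly more careful than the paper's in two places --- you make explicit the condition $e^*\in E_{G^*}(Y)$ needed for acceptance of a decrease, and you discuss the overshoot/clamping and step-size cap boundary cases --- but these refinements do not change the structure of the argument.
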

\begin{proof} 
Assume that the step size of $e^*$ has an initial value $\alpha^q$, where $q \geq 0$. 
Since $Y^{\dagger}$ is an infeasible dual-solution of $G^*$, if a mutation on $e^*$ decreases its LP value, then the mutation is accepted, and the exponent of the step size of $e^*$ is increased by $1$; otherwise, the mutation is rejected, and the exponent of the step size is decreased by $1/4$.
Observe that the mutation on $e^*$ increases or decreases its LP value with the same probability 1/2. 
Thus we have that the drift on the exponent of the step size of $e^*$ is $(1- 1/4)/2 = 3/8$, during the process that decreases the LP value of $e^*$ from $Y^{\dagger}(e^*)$ to $Y^{\ddagger}(e^*)$. 

If the step size of $e^*$ is increased to not less than $\alpha^{\lceil \log_{\alpha} (Y^{\dagger}(e^*) - Y^{\ddagger}(e^*)) \rceil +1}$ during the process that decreases the LP value of $e^*$ from $Y^{\dagger}(e^*)$ to $Y^{\ddagger}(e^*)$, then the LP value of $e^*$ is decreased to 0. 
In fact, the step size may not be increased to over $\alpha^{\lceil \log_{\alpha} (Y^{\dagger}(e^*) - Y^{\ddagger}(e^*)) \rceil +1}$ during the process, because $\alpha^q$ may be greater than 0. 
Hence, the maximum value of the step size of $e^*$ during the process is at most $\alpha^{\lceil \log_{\alpha} (Y^{\dagger}(e^*) - Y^{\ddagger}(e^*)) \rceil +1}$. 
Using the Additive Drift Theorem~\cite{he2004study} and the drift on the exponent of the step size of $e^*$ obtained above, the process that decreases the LP value of $e^*$ from $Y^{\dagger}(e^*)$ to $Y^{\ddagger}(e^*)$ needs at most 
$\Or \left(\log_{\alpha} \left(Y^{\dagger}(e^*) - Y^{\ddagger}(e^*) \! \right) - q \right) = \Or \left(\log_{\alpha} \left(Y^{\dagger}(e^*) - Y^{\ddagger}(e^*) \! \right) \! \right)$ mutations on $e^*$. That is, the process takes expected runtime $\Or \left(m \log_{\alpha} \left(Y^{\dagger}(e^*) - Y^{\ddagger}(e^*) \! \right) \! \right)$. 
\end{proof}

The following theorem can be derived based on the conclusions of Lemma~\ref{lem:RLS with 1/5-th Rule increase weight}. 

\begin{theorem}
\label{theo:RLS with 1/5-th Rule for DWVC-E+}
The expected runtime of the \mbox{\textup{RLS with 1/5-th Rule}} for \mbox{\textup{DWVC-E}}$^+$ is 
$\Or \big(\alpha m D \log_{\alpha} W_{\textup{max}} \cdot \log W_{\textup{max}} \! \big)$.
\end{theorem}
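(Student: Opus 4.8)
The plan is to reduce the claim to the single-edge estimate of Lemma~\ref{lem:RLS with 1/5-th Rule increase weight} and then sum the contributions over the at most $D$ newly added edges, paralleling the sequential view used in the proof of Theorem~\ref{theo:(1+1) EA for DWVC-E+}. First I would record the structural facts. For an instance $\{G=(V,E,W),Y_{\textup{orig}},E^+\}$ of \mbox{\textup{DWVC-E}}$^+$ each new edge $e\in E^+$ is initialized with $Y_{\textup{orig}}(e)=0$, so $Y_{\textup{orig}}$ remains a feasible dual-solution of $G^*=(V,E\cup E^+,W)$: the constraint of every vertex is unaffected because the new edges carry LP value $0$. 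By Lemma~\ref{lem:sign function} the sign function stays at $1$, so every accepted mutation only increases LP values; and since any increment on an edge of $E$ would violate a constraint that is tight in the MFDS $Y_{\textup{orig}}$ of $G$, such a mutation is always rejected. Hence, writing $Y^*$ for the MFDS eventually produced by the run, $Y^*(e)=Y_{\textup{orig}}(e)$ for all $e\in E$ and $Y^*(e)\ge Y_{\textup{orig}}(e)$ for all $e\in E^+$. If $Y_{\textup{orig}}$ is already an MFDS of $G^*$ there is nothing to prove, so assume otherwise.

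Next I would apply Lemma~\ref{lem:RLS with 1/5-th Rule increase weight} to each edge $e^*\in E^+$ with $\Delta(e^*):=Y^*(e^*)-Y_{\textup{orig}}(e^*)>0$. Since each accepted change of an LP value has magnitude at least $1$ (the minimum step size), $\Delta(e^*)\ge 1$, so the initial step size $1$ is not greater than $\Delta(e^*)$ and the hypothesis of the lemma holds. Because the LP value of any edge is bounded by the weight of its endpoints, $\Delta(e^*)\le W_{\textup{max}}$. The lemma then gives that the expected number of iterations until $e^*$ reaches $Y^*(e^*)$ is
\[
\Or\!\left(\alpha m \log_{\alpha}\Delta(e^*)\cdot \log \Delta(e^*)\right)=\Or\!\left(\alpha m \log_{\alpha} W_{\textup{max}}\cdot \log W_{\textup{max}}\right),
\]
and this count already includes all intermediate iterations that happen to select other edges.

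Finally I would combine the per-edge bounds. Let $T$ be the total number of iterations until the run reaches $Y^*$, and for each $e^*\in E^+$ let $T_{e^*}$ be the first iteration at which $e^*$ attains $Y^*(e^*)$. Because every LP value increases monotonically and stays at its target once reached, $T=\max_{e^*\in E^+}T_{e^*}\le\sum_{e^*\in E^+}T_{e^*}$. Since $|E^+|\le D$ and the expected value of each $T_{e^*}$ is $\Or(\alpha m \log_{\alpha} W_{\textup{max}}\cdot \log W_{\textup{max}})$ by the previous step, linearity of expectation bounds the expected value of $T$ by $\Or(\alpha m D \log_{\alpha} W_{\textup{max}}\cdot \log W_{\textup{max}})$, which is the claimed runtime.

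The main obstacle is justifying this summation, because the \mbox{\textup{RLS with 1/5-th Rule}} selects a single edge uniformly per iteration, so the $D$ edge-processes are interleaved rather than executed one after another. The step that makes the bound legitimate is that the single-edge estimate of Lemma~\ref{lem:RLS with 1/5-th Rule increase weight} already absorbs, through its factor $m$, every iteration in which a different edge is drawn; thus $T_{e^*}$ is genuinely a bound on the global time until $e^*$ is finished, and the crude inequality $\max\le\sum$ converts the $D$ individual bounds into the overall runtime. Along the way I would also check the minor point that edges of $E$ are never successfully modified---their tight constraints reject every increment and feasibility rejects every decrement---so the iterations spent on them are exactly the wasted iterations already subsumed in the factor $m$.
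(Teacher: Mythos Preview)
Your proposal is correct and follows essentially the same approach as the paper: apply Lemma~\ref{lem:RLS with 1/5-th Rule increase weight} to each edge of $E^+$ to obtain the per-edge bound $\Or(\alpha m \log_{\alpha} W_{\textup{max}}\cdot\log W_{\textup{max}})$, then sum over the at most $D$ such edges. Your write-up is in fact more careful than the paper's, since you explicitly verify the hypothesis of Lemma~\ref{lem:RLS with 1/5-th Rule increase weight} (initial step size $1\le\Delta(e^*)$) and justify the summation via $T=\max_{e^*}T_{e^*}\le\sum_{e^*}T_{e^*}$, whereas the paper simply asserts that combining the per-edge bound with $|E^+|=D$ ``directly gives'' the result.
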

\begin{proof}
We study the expected runtime of the \mbox{\textup{RLS with 1/5-th Rule}} to obtain an MFDS $Y^*$ of $G^*=(V,E \cup E^+,W)$ starting with the MFDS $Y_{\textup{orig}}$ of $G = (V,E,W)$.
Observe that only the LP values of the edges in $E^+$ may have the room to be increased.
Moreover, Lemma~\ref{lem:RLS with 1/5-th Rule increase weight} gives that for each edge $e \in E^+$ with $Y^*(e) > Y_{\textup{orig}}(e)$, the algorithm takes expected runtime $\Or \left(\alpha m \log_{\alpha} W_{\textup{max}} \cdot \log W_{\textup{max}} \! \right)$ to increase its LP value from $Y_{\textup{orig}}(e)$ to $Y^*(e)$.
Therefore, combining it with the fact that $|E^+| = D$ directly gives the expected runtime 
$\Or \left(\alpha m D \log_{\alpha} W_{\textup{max}} \cdot \log W_{\textup{max}} \! \right)$.
\end{proof}

\begin{theorem}
\label{theo:RLS with 1/5-th Rule for DWVC-E-}
The expected runtime of the \mbox{\textup{RLS with 1/5-th Rule}} for \mbox{\textup{DWVC-E}}$^-$ is 
$\Or \big(\alpha m \log_{\alpha} W_{\textup{max}} \cdot \min\{m \log W_{\textup{max}}, D \cdot W_{\textup{max}} \}  \! \big)$.
\end{theorem}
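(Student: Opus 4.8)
The plan is to mirror the structure of Theorem~\ref{theo:RLS with 1/5-th Rule for DWVC-E+}, applying the per-edge bound of Lemma~\ref{lem:RLS with 1/5-th Rule increase weight} to every edge whose LP value grows and then bounding the resulting sum in two different ways so that the minimum in the statement appears. First I would fix the setup exactly as in Theorem~\ref{theo:(1+1) EA for DWVC-E-}: since $G^*=(V,E\setminus E^-,W)$ is obtained by deleting edges, $Y_{\textup{orig}}$ is a feasible dual-solution of $G^*$, so by Lemma~\ref{lem:sign function} the sign function never drops to $-1$ and every accepted mutation only increases LP values. Consequently only the edges in $E_{G}(E^-)$ can grow, and writing $\Delta(e)=Y^*(e)-Y_{\textup{orig}}(e)$ and $E_\Delta=\{e\in E_{G}(E^-)\mid \Delta(e)\neq 0\}$, I would record the two facts I need: $|E_\Delta|\le m$, and (as in Theorem~\ref{theo:(1+1) EA for DWVC-E-}, because the edges of $E_{G}(E^-)$ touch at most $2D$ endpoints each carrying weight at most $W_{\textup{max}}$) $\sum_{e\in E_\Delta}\Delta(e)\le 2D\cdot W_{\textup{max}}$.

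Next I would argue $\Delta(e)\ge 1$ for every $e\in E_\Delta$: step sizes are floored at $1$, the first accepted increase on $e$ therefore raises its value by at least $1$, and no decrease is ever accepted while the maintained solution stays feasible; hence each edge that grows at all grows by at least $1$. This legitimises applying Lemma~\ref{lem:RLS with 1/5-th Rule increase weight} to each $e\in E_\Delta$ with initial step size $1\le\Delta(e)$, which yields expected runtime $\Or(\alpha m\log_\alpha\Delta(e)\cdot\log\Delta(e))$ to lift $Y(e)$ from $Y_{\textup{orig}}(e)$ to $Y^*(e)$. As in Theorem~\ref{theo:RLS with 1/5-th Rule for DWVC-E+}, I would then sum these per-edge costs by linearity to bound the total expected runtime to reach the MFDS $Y^*$.

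Finally I would bound $\sum_{e\in E_\Delta}\alpha m\log_\alpha\Delta(e)\log\Delta(e)=\alpha m\sum_{e\in E_\Delta}\log_\alpha\Delta(e)\log\Delta(e)$ in the two ways that produce the claimed minimum. Using $\Delta(e)\le W_{\textup{max}}$ termwise together with $|E_\Delta|\le m$ gives the first term $\Or(\alpha m\log_\alpha W_{\textup{max}}\cdot m\log W_{\textup{max}})$. For the second term I would rewrite $\log_\alpha\Delta(e)\log\Delta(e)=(\log\Delta(e))^2/\log\alpha$ and use $(\log\Delta(e))^2\le \log W_{\textup{max}}\cdot\log\Delta(e)\le\log W_{\textup{max}}\cdot\Delta(e)$, which is valid since $1\le\Delta(e)\le W_{\textup{max}}$; then $\sum_{e\in E_\Delta}(\log\Delta(e))^2\le \log W_{\textup{max}}\sum_{e\in E_\Delta}\Delta(e)\le 2D\cdot W_{\textup{max}}\log W_{\textup{max}}$, which turns the sum into $\Or(\alpha m\log_\alpha W_{\textup{max}}\cdot D\cdot W_{\textup{max}})$. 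Taking the minimum of the two bounds gives the stated runtime.

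The main obstacle I expect is the honest justification of the linearity step: the edges of $E_\Delta$ all share the endpoints of the removed edges, so their increases compete for the same vertex slack and their step sizes evolve in an interleaved run rather than in isolation. I would argue, exactly as Theorem~\ref{theo:RLS with 1/5-th Rule for DWVC-E+} does, that the per-edge completion times supplied by Lemma~\ref{lem:RLS with 1/5-th Rule increase weight} still upper-bound the work attributable to each edge, so summing them is legitimate; making this bookkeeping fully rigorous, rather than appealing to the DWVC-E$^+$ template, is the delicate part, while the two closing sum manipulations are routine.
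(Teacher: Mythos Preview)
Your proposal is correct and follows essentially the same structure as the paper's proof: fix the feasible starting point, apply Lemma~\ref{lem:RLS with 1/5-th Rule increase weight} per edge in $E_\Delta$, sum, and bound the sum in two ways to obtain the minimum. The only cosmetic difference is in the second bound: the paper first relaxes $\log_\alpha \Delta(e)\le \log_\alpha W_{\textup{max}}$ and then controls $\sum_{e\in E_\Delta}\log\Delta(e)$ via the AM--GM inequality and the maximisation $x\mapsto x\log(D\cdot W_{\textup{max}}/x)$, whereas you keep both logarithmic factors together and use the more elementary chain $(\log\Delta(e))^2\le \log W_{\textup{max}}\cdot\Delta(e)$; both routes yield $\Or(\alpha m\log_\alpha W_{\textup{max}}\cdot D\cdot W_{\textup{max}})$, and the paper handles the summing-over-edges step with the same informal appeal you flag as delicate.
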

\begin{proof}
We study the expected runtime of the \mbox{\textup{RLS with 1/5-th Rule}} to obtain an MFDS $Y^*$ of $G^*=(V,E \setminus E^-,W)$ starting with the given MFDS $Y_{\textup{orig}}$ of $G = (V,E,W)$.
Let $E_{\Delta}$ be the edges $e \in E^* = E \setminus E^-$ with $Y^*(e) > Y_{\textup{orig}}(e)$, and let $\Delta(e) = Y^*(e) - Y_{\textup{orig}}(e)$ for each edge $e \in E_{\Delta}$.
The reasoning given in Theorem~\ref{theo:(1+1) EA for DWVC-E-} shows that $\sum_{e \in E_{\Delta}} \Delta(e) \le D \cdot W_{\textup{max}}$.

Lemma~\ref{lem:RLS with 1/5-th Rule increase weight} gives that for each edge $e \in E_{\Delta}$, the algorithm takes expected runtime $\Or \left(\alpha m \log_{\alpha} W_{\textup{max}} \cdot \log \Delta(e) \! \right)$ to increase the LP value of $e$ from $Y_{\textup{orig}}(e)$ to $Y^*(e)$.
Summing the expected runtime over all the edges in $E_{\Delta}$ gives the expected runtime 
$\Or \left(\alpha m \log_{\alpha} W_{\textup{max}} \cdot \sum_{e \in E_{\Delta}}\log \Delta(e) \! \right)$.
For the upper bound of $\sum_{e \in E_{\Delta}}\log \Delta(e)$, we have
\begin{eqnarray}
\sum_{e \in E_{\Delta}} \log \Delta(e) &=& \log \left(\prod_{e \in E_{\Delta}} \Delta(e) \right) \\
&\le& |E_{\Delta}| \cdot \log \frac{\sum_{e \in E_{\Delta}} \Delta(e)}{|E_{\Delta}|} \\ 
&\le& |E_{\Delta}| \cdot \log \frac{D \cdot W_{\textup{max}}}{|E_{\Delta}|} \\ 
\label{ee13}
&\le& \frac{D \cdot W_{\textup{max}}}{e} \cdot \log e \ .
\end{eqnarray}
Inequality~\ref{ee13} can be obtained by the observation that $f(x) = x \cdot \log (D \cdot W_{\textup{max}}/x)$ ($x > 0$) gets its maximum value when $x = D \cdot W_{\textup{max}}/e$. 
Thus the expected runtime of the \mbox{\textup{RLS with 1/5-th Rule}} to obtain an MFDS $Y^*$ of $G^*$ starting with $Y_{\textup{orig}}$ can be bounded by $\Or \left(\alpha m D \log_{\alpha} W_{\textup{max}} \cdot W_{\textup{max}} \! \right)$.

Additionally, as $|E_{\Delta}| \le m$, and $\Delta(e) \le W_{\textup{max}}$ for each edge $e \in E_{\Delta}$, we have that the expected runtime of the algorithm can also be bounded by $\Or \left(\alpha m^2 \log_{\alpha} W_{\textup{max}} \cdot \log W_{\textup{max}} \! \right)$. Therefore, combining the two expected runtime given above, we have the claimed result.
\end{proof}


The following theorem can be derived using the conclusions obtained by Lemmata~\ref{lem:RLS with 1/5-th Rule increase weight} and~\ref{lem:RLS with 1/5-th Rule decrease weight}, and the reasoning similar to that given in Theorems~\ref{theo:RLS with 1/5-th Rule for DWVC-E-},~\ref{theo:(1+1) EA for DWVC-E-}, and~\ref{theo:(1+1) EA for DWVC-W-}.

\begin{theorem}
\label{theo:RLS with 1/5-th Rule for DWVC}
The expected runtime of the \mbox{\textup{RLS with 1/5-th Rule}} for \mbox{\textup{DWVC-X}}, where \textup{X} $\in \{E,W^+,W^-,W\}$, is $\Or \big(\alpha m \log_{\alpha} W_{\textup{max}} \cdot \min\{m \log W_{\textup{max}}, D \cdot W_{\textup{max}} \}  \! \big)$.
\end{theorem}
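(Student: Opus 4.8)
The plan is to treat all four variants \textup{X} $\in \{E,W^+,W^-,W\}$ through a single skeleton: reduce each to the task of increasing the LP values of a bounded collection of edges, exactly as was done for the \oneone in Theorems~\ref{theo:(1+1) EA for DWVC-E-} and~\ref{theo:(1+1) EA for DWVC-W-}, but now charging each per-edge increment with the \mbox{\textup{RLS with 1/5-th Rule}} cost supplied by Lemma~\ref{lem:RLS with 1/5-th Rule increase weight} in place of Lemma~\ref{lem:(1+1) EA for increment of G1}. Let $Y^*$ be the MFDS eventually reached, $E_{\Delta}$ the set of edges whose LP value strictly increases, and $\Delta(e) = Y^*(e) - Y_{\textup{orig}}(e)$ (measured from the first feasible solution in the DWVC-W$^-$ case). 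For every variant the preceding analysis already supplies the two structural facts I need: $|E_{\Delta}| \le m$, and $\sum_{e \in E_{\Delta}} \Delta(e) = \Or(D \cdot W_{\textup{max}})$ --- the latter following from Lemma~\ref{lem:weight bound} for DWVC-W$^+$, from the bound $\sum_{e \in E_{G}(E^-)} Y^*(e) \le 2 D \cdot W_{\textup{max}}$ used in Theorem~\ref{theo:(1+1) EA for DWVC-E-} for DWVC-E, and from the estimate $3 D \cdot W_{\textup{max}}$ derived in Theorem~\ref{theo:(1+1) EA for DWVC-W-} for DWVC-W$^-$.

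Given these, the core calculation repeats that of Theorem~\ref{theo:RLS with 1/5-th Rule for DWVC-E-}. Lemma~\ref{lem:RLS with 1/5-th Rule increase weight} charges $\Or(\alpha m \log_{\alpha} \Delta(e) \cdot \log \Delta(e))$ to each $e \in E_{\Delta}$; since $\log_{\alpha} \Delta(e) \le \log_{\alpha} W_{\textup{max}}$, factoring this out and summing reduces the bound to $\alpha m \log_{\alpha} W_{\textup{max}} \cdot \sum_{e \in E_{\Delta}} \log \Delta(e)$. I would bound the remaining sum two ways: the counting bound $|E_{\Delta}| \le m$ together with $\Delta(e) \le W_{\textup{max}}$ gives $\sum_{e \in E_{\Delta}} \log \Delta(e) \le m \log W_{\textup{max}}$, while the concavity of the logarithm (Jensen, with the maximizer of $x \mapsto x \log((D W_{\textup{max}})/x)$ at $x = D W_{\textup{max}}/e$, exactly as in Inequality~\ref{ee13}) gives $\sum_{e \in E_{\Delta}} \log \Delta(e) = \Or(D \cdot W_{\textup{max}})$. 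Taking the smaller of the two yields the claimed $\Or(\alpha m \log_{\alpha} W_{\textup{max}} \cdot \min\{m \log W_{\textup{max}}, D \cdot W_{\textup{max}}\})$ for DWVC-E, for DWVC-W$^+$, and for the increasing part of DWVC-W$^-$.

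For the variants whose initial solution may be infeasible --- DWVC-W$^-$ and the general DWVC-W --- I would first run the decreasing phase to reach a feasible $Y_t$. Lemma~\ref{lem:RLS with 1/5-th Rule decrease weight} bounds the cost of driving each violated edge to feasibility by $\Or(m \log_{\alpha}(\cdot))$ per edge, and the same two-bound summation (total decrement $\Or(D W_{\textup{max}})$, at most $m$ edges moving) shows this phase is dominated by the increasing phase and so leaves the asymptotics unchanged. The remaining reduction mirrors Theorem~\ref{theo:(1+1) EA for DWVC-W-}: after $Y_t$ I reinterpret the rest as a DWVC-W$^+$ instance on an auxiliary weight function $W_t$ for which $Y_t$ is an MFDS, and reuse the $3 D \cdot W_{\textup{max}}$ bound on the feasible increments. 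The general DWVC-W then combines the DWVC-W$^+$ increments on $E_{G^*}(V^+)$ with the DWVC-W$^-$ decrement/increment bookkeeping precisely as in the combination argument used to pass from the special variants to DWVC-W for the \oneone.

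The step I expect to be the main obstacle is the DWVC-W$^-$ transition between the two phases. As flagged in the proof of Theorem~\ref{theo:(1+1) EA for DWVC-W-}, an edge's step size at the end of the decreasing phase can exceed the increment it must next make --- it is at most $\alpha \cdot \Delta(e)$ --- so Lemma~\ref{lem:RLS with 1/5-th Rule increase weight}, which assumes an initial step size at most $\Delta(e)$, does not apply verbatim. The clean resolution is that the 1/5-th-Rule machinery already anticipates an oversized step size: Lemma~\ref{lem:RLS with 1/5-th Rule increase weight partly} absorbs a $\log_{\alpha} \sigma_1$ surcharge while shrinking the step size back below $\Delta(e)$, and since $\sigma_1 \le \alpha \Delta(e) \le \alpha W_{\textup{max}}$ this surcharge is only $\Or(\log_{\alpha} W_{\textup{max}} + 1)$ per edge and is swallowed by the dominant term. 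Verifying this absorption cleanly, rather than re-deriving the separate multiplicative-drift argument on the step sizes that the \oneone proof uses, is the one place the argument needs genuine care.
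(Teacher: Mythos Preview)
Your proposal is correct and follows essentially the same approach the paper itself takes: the paper's own argument for this theorem is a one-line pointer to Lemmata~\ref{lem:RLS with 1/5-th Rule increase weight} and~\ref{lem:RLS with 1/5-th Rule decrease weight} combined with ``reasoning similar to'' Theorems~\ref{theo:RLS with 1/5-th Rule for DWVC-E-},~\ref{theo:(1+1) EA for DWVC-E-}, and~\ref{theo:(1+1) EA for DWVC-W-}, and you have faithfully expanded that pointer. Your treatment of the phase-transition step-size issue via Lemma~\ref{lem:RLS with 1/5-th Rule increase weight partly} is in fact cleaner than the analogous fix in Theorem~\ref{theo:(1+1) EA for DWVC-W-}; just note that the $\Or(m\log_\alpha W_{\textup{max}})$ surcharge need not be dominated \emph{per edge} when $\Delta(e)$ is small, but summing it over $|E_\Delta|\le m$ edges gives $\Or(m^2\log_\alpha W_{\textup{max}})$, which is absorbed by the $m\log W_{\textup{max}}$ branch of the final bound.
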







In the following, we analyze the performance of the \mbox{\textup{(1+1) EA with 1/5-th Rule}} for DWVC.
Firstly, we give a specific graph $G_s$ (see Figure~\ref{fig:special graph}(a)) that is the same as the graph considered in~\cite{pourhassan2017use}. W.l.o.g., we assume that the maximum weight $W_{\textup{max}}$ that the vertices in $G_s$ have is $\alpha^m$. 
Then we show that in a special situation, the \mbox{\textup{(1+1) EA with 1/5-th Rule}} requires pseudo-polynomial runtime to obtain the unique MFDS $Y^*$ of $G_s$, where $Y^*(e_1) = W_{\textup{max}}$ and $Y^*(e_i) = 1$ for all $2 \leq i \leq m$. 
The Chernoff-Hoeffding Bound given below is used in the proof for the main result stated later.

\medskip

\noindent{\bf Chernoff-Hoeffding Bound}~\cite{phillips2012chernoff}. Let $x_1,\dots,x_n$ be independent random variables such that $a_i \leq x_i \leq b_i$ for all $1 \leq i \leq n$. Denote $X = \sum_{i=1}^{n} x_i$. Then for any $\delta \geq 0$, the following inequality holds.
$${\rm Prob}(X \geq E[X] + \delta) \leq e^{-2 \delta^2 / \sum_{i=1}^{n} (b_i - a_i)^2} \ .$$

\medskip

\begin{figure}
\begin{center}
\vspace*{.25cm}
\begin{picture}(300,70)

\put(-20,10){\begin{picture}(0,0)
 \put(2,50){$W_{\textup{max}}$}
 \put(2,-5){$W_{\textup{max}}$}
 \put(-1,22){$e_1$}
 \put(10,5){\circle*{3}}
 \put(10,5){\line(0,1){40}}
 \put(10,45){\circle*{3}}

 \put(32,49){$1$}
 \put(32,-6){$1$}
 \put(24,22){$e_2$}
 \put(35,5){\circle*{3}}
 \put(35,5){\line(0,1){40}}
 \put(35,45){\circle*{3}}

 \put(57,49){$1$}
 \put(57,-6){$1$}
  \put(49,22){$e_3$}
 \put(60,5){\circle*{3}}
 \put(60,5){\line(0,1){40}}
 \put(60,45){\circle*{3}}

  \put(80,22){$\ldots$}

 \put(107,49){$1$}
 \put(107,-6){$1$}
 \put(98,22){$e_m$}
 \put(110,5){\circle*{3}}
 \put(110,5){\line(0,1){40}}
 \put(110,45){\circle*{3}}
 \end{picture}}

 \put(35,-7){\small (a)}

 \put(210,10){\begin{picture}(0,0)
 \put(-18,50){$W_{\textup{max}}$}
 \put(2,-5){$W_{\textup{max}}$}
\put(-38,-5){$W_{\textup{max}}$}
 \put(4,22){$e_1$}
 \put(-32,22){$e'_1$}
 \put(10,5){\circle*{3}}
 \put(-30,5){\circle*{3}}
 
 \put(-10,45){\circle*{3}}

 \put(-10,45){\line(-1,-2){20}}
\put(-10,45){\line(1,-2){20}}

 \put(32,49){$1$}
 \put(32,-6){$1$}
 \put(24,22){$e_2$}
 \put(35,5){\circle*{3}}
 \put(35,5){\line(0,1){40}}
 \put(35,45){\circle*{3}}

 \put(57,49){$1$}
 \put(57,-6){$1$}
  \put(49,22){$e_3$}
 \put(60,5){\circle*{3}}
 \put(60,5){\line(0,1){40}}
 \put(60,45){\circle*{3}}

  \put(80,22){$\ldots$}

 \put(107,49){$1$}
 \put(107,-6){$1$}
 \put(98,22){$e_m$}
 \put(110,5){\circle*{3}}
 \put(110,5){\line(0,1){40}}
 \put(110,45){\circle*{3}}
 \end{picture}}
\put(265,-7){\small (b)}

\end{picture}
\end{center}
\vspace*{-1mm}
\caption{(a). The special graph $G_s$ contains $m$ edges, each of which is independent (i.e., each edge constitutes a connected component of $G_s$). Except the two endpoints of edge $e_1$ in $G_s$ that have weight $W_{\textup{max}}$, all the other vertices have weight 1. (b). The graph $G'_s$ is a variant of $G_s$ with an additional vertex and an additional edge $e'_1$.}
\label{fig:special graph}
\vspace*{-3mm}
\end{figure}
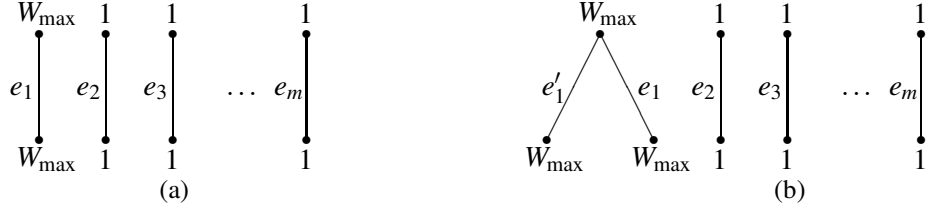

\begin{lemma}
\label{lem:(1+1) EA with 1/5-th Rule increase weight}
Consider a feasible dual-solution $Y^{\dagger}$ of $G_s$ with $Y^{\dagger}(e_i) = 1$ for all $1 \leq i \leq m$, and the step size of the edge $e_1$ with an initial value 1. 
The expected runtime of the \mbox{\textup{(1+1) EA with 1/5-th Rule}} to obtain the unique MFDS $Y^*$ of $G_s$ starting with $Y^{\dagger}$ is lower bounded by $2^{m^{\epsilon /2}}$ ($0 < \epsilon \leq 1/2$) with probability $1- e^{-{\rm \Omega}(m^{\epsilon})}$.
\end{lemma}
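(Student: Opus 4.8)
The plan is to reduce the whole run to a one-dimensional random walk on the exponent of the step size of the single ``heavy'' edge $e_1$, and to show that this walk is pinned near $0$ by a negative drift, so that the step size never becomes large enough to close the gap $W_{\textup{max}}-1=\alpha^m-1$ within $2^{m^{\epsilon/2}}$ iterations. First I would observe that every edge $e_i$ with $2\le i\le m$ is already tight at $Y^{\dagger}$ (its endpoints have weight $1$ and $Y^{\dagger}(e_i)=1$), so any mutation increasing such an edge is infeasible and any mutation decreasing an edge lowers the objective; both are rejected. Hence the only productive moves raise $Y(e_1)$, and both $Y(e_1)$ and $\sigma(e_1)$ change only in iterations whose selected set $I$ contains $e_1$ (the \emph{relevant} iterations). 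Writing $\sigma(e_1)=\alpha^{Z}$, the exponent $Z$ starts at $0$ and, in a relevant iteration, increases by $1$ exactly when the move is accepted, i.e. when $b=+1$, $I=\{e_1\}$, and there is room ($\sigma(e_1)\le W_{\textup{max}}-Y(e_1)$); in every other relevant iteration the move is rejected and $Z$ decreases by $1/4$, reflected at the lower boundary $0$.

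The central point is that the conditional acceptance probability lies below the $1/5$ threshold: conditioned on $e_1\in I$, an accepting (up) step has probability at most $p:=\tfrac12(1-1/m)^{m-1}\to\tfrac1{2e}<\tfrac15$, so for all large $m$ there is a constant gap $\tfrac15-p\ge c_0>0$. Thus $Z$ performs a walk with up-step $+1$ of probability $\le p$ and down-step $-1/4$, i.e. with negative drift $\tfrac{5p-1}{4}<0$. I would then fix a window of $N:=2^{m^{\epsilon/2}}$ iterations and a level $L:=c\,m^{\epsilon}$ and bound $\Pr[\max_{t\le N}Z_t\ge L]$. Using the Skorokhod reflection identity $Z_t=\max_{s\le t}(S_t-S_s)$, where $S_t$ is the unreflected partial sum of the $\pm$ increments, the event $\{\max_{t\le N}Z_t\ge L\}$ is contained in the union over the at most $N^2$ sub-windows $(s,t]$ of $\{S_t-S_s\ge L\}$. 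For a sub-window with $r$ relevant steps, $S_t-S_s\ge L$ forces the number $U$ of up-steps to satisfy $U\ge(r+4L)/5=pr+\lambda$ with $\lambda\ge\max\{r(\tfrac15-p),\tfrac{4L}{5}\}$; dominating $U$ stochastically by a $\mathrm{Bin}(r,p)$ variable (the history-dependent ``room'' condition only lowers the acceptance probability) and applying the Chernoff--Hoeffding bound gives $\Pr[S_t-S_s\ge L]\le e^{-2\lambda^2/r}\le e^{-\frac{8}{5}(\frac15-p)L}=e^{-\Omega(m^{\epsilon})}$, uniformly in $r$.

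Taking the union over the $\le N^2=e^{\Theta(m^{\epsilon/2})}$ sub-windows, the per-window bound $e^{-\Omega(m^{\epsilon})}$ dominates the union factor, so $\Pr[\max_{t\le N}Z_t\ge L]\le e^{-\Omega(m^{\epsilon})}$. On the complementary event the step size of $e_1$ stays below $\alpha^{L}$ throughout the window, hence each accepted move raises $Y(e_1)$ by at most $\alpha^{L}$, and over $N$ iterations the total rise is at most $N\alpha^{L}\le\alpha^{m^{\epsilon/2}+c\,m^{\epsilon}}<\alpha^{m}-1$ for all large $m$, because $\epsilon\le 1/2$ makes $m^{\epsilon/2}+c\,m^{\epsilon}=o(m)$. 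Therefore, with probability $1-e^{-\Omega(m^{\epsilon})}$, the value $Y(e_1)$ cannot reach $W_{\textup{max}}=\alpha^m$ within $N=2^{m^{\epsilon/2}}$ iterations, which yields the claimed lower bound on the runtime (and hence on the expected runtime).

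The step I expect to be the main obstacle is calibrating the level $L$ against the window length so that the union bound survives with the stated failure probability. Since there are $e^{\Theta(m^{\epsilon/2})}$ sub-windows, one is forced to take $L=\Theta(m^{\epsilon})$ rather than $\Theta(m^{\epsilon/2})$ to drive the per-window Chernoff bound below the union factor; one then must check that this still leaves room for the non-completion conclusion $N\alpha^{L}<\alpha^m$, which is precisely where the hypothesis $\epsilon\le 1/2$ (so that $m^{\epsilon}\ll m$) enters. The two accompanying delicate points are the correct treatment of the reflecting boundary at $0$ through the reflection identity, and the stochastic-domination argument that lets the Chernoff--Hoeffding bound apply despite the state-dependent ``room'' condition.
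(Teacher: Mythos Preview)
Your proof is correct and follows the same high-level strategy as the paper: both identify that, conditioned on $e_1\in I$, the acceptance probability is at most $\tfrac12(1-1/m)^{m-1}\to\tfrac{1}{2e}<\tfrac15$, so the exponent of $\sigma(e_1)$ has strictly negative drift; both then use Chernoff--Hoeffding plus a union bound over a window of $2^{m^{\epsilon/2}}$ steps to show the exponent stays below $\Theta(m^{\epsilon})$ with probability $1-e^{-\Omega(m^{\epsilon})}$, and conclude that the cumulative rise of $Y(e_1)$ is at most $2^{m^{\epsilon/2}}\cdot\alpha^{\Theta(m^{\epsilon})}<\alpha^m-1$.

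The only genuine difference is in how the reflected walk is controlled. The paper decomposes the run into \emph{excursions} (``non-trivial phases'' beginning when the exponent rises to $1$ and ending when it next returns to $0$), bounds the length of each excursion by $m^{\epsilon}$ via a single Chernoff application, and union-bounds over at most $2^{m^{\epsilon/2}}/5$ excursions. You instead invoke the Skorokhod reflection identity $Z_t=\max_{s\le t}(S_t-S_s)$ and union-bound over all $\le N^2$ sub-windows of the unreflected walk. Your route is slightly heavier in the union-bound count ($N^2$ versus $N/5$) but handles the reflecting boundary more cleanly and makes the stochastic-domination step explicit; the paper's excursion argument is more elementary but treats the boundary somewhat informally. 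Both arrive at exactly the same quantitative conclusion, and your identification of the calibration $L=\Theta(m^{\epsilon})$ against $N=2^{m^{\epsilon/2}}$ (and the role of $\epsilon\le 1/2$) matches the paper's.
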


\begin{proof}  
Let $M$ be a mutation of the \mbox{\textup{(1+1) EA with 1/5-th Rule}} that selects the edge $e_1$ (note that $M$ may also select some edges in addition to $e_1$).
If $M$ is accepted, then the exponent $q$ of the step size $\sigma(e_1) = \alpha^q$ ($q \ge 0$) of $e_1$ is increased by $1$; otherwise, decreased by $1/4$. 
Observe that $M$ can be accepted only if $M$ just selects the edge $e_1$ and increases its LP value. 
Thus the probability $P_{inc}$ that the mutation $M$ is accepted is $\leq \frac{1}{2e}$, and the probability $P_{dec}$ that the mutation $M$ is rejected is $\ge 1 - \frac{1}{2e}$. 
As the drift of the exponent $q$ is $1 \cdot P_{inc} + (-1/4) \cdot P_{dec} \leq (5-2e)/8e < 0$, $q$ will gradually decrease to 0 if $q > 0$, i.e., the step size of $e_1$ will gradually decrease to 1 if it is greater than 1. 

The step size of $e_1$ has an initial value 1, hence if it cannot increase to an enough large value during the whole process of $2^{m^{\epsilon /2}}$ steps with a high probability, then we can show that the LP value of $e_1$ after the $2^{m^{\epsilon /2}}$ steps cannot reach $W_{\textup{max}} = \alpha^{m}$ with a high probability. 
In the following discussion, we assume that the step size of $e_1$ is increased to $\alpha$ at some point, i.e., $q = 1$. 
Then we use Chernoff-Hoeffding Bound to show that $T_1$ is upper bounded by $m^{\epsilon}$ with probability $1 - e^{-{\rm \Omega}(m^{\epsilon})}$, where $T_1$ denotes the number of steps that the algorithm requires to decrease the step size of $e_1$ from $\alpha$ to 1. 
Observe that $x_i$ denotes the increment on the exponent $q$ of the step size of $e_1$, which equals $1$ or $-1/4$, $a_i = -1/4$, and $b_i = 1$ for all $1 \leq i \leq n$, where $n = m^{\epsilon}$ (the notations $x_i$, $b_i$, and $a_i$ follow the ones given in the definition of Chernoff-Hoeffding Bound). 
As the exponent $q$ has values 1 and 0 before and after $T_1$ steps, respectively, we have $X = \sum_{i = 1}^{n} x_i = -1$.
Furthermore, considering the equality $X = E[X] + \delta$, where $E[X] \leq \frac{5-2e}{8e} \cdot m^{\epsilon}$, Chernoff-Hoeffding Bound gives that the probability that $T_1 > m^{\epsilon}$ is upper bounded by 
$$e^{-2 \delta^2 / \sum_{i=1}^{n} (b_i - a_i)^2} = e ^{-2 [\frac{2e-5}{8e} \cdot m^{ \epsilon} - 1]^2 / (\frac{25}{16} m^{\epsilon})} = e^{-{\rm \Omega}(m^{\epsilon})} \ .$$ 
Meanwhile, we also have that the maximum value of the step size of $e_1$ during the $T_1$ steps is upper bounded by $\alpha^{m^{\epsilon}}$, with probability $1- e^{-{\rm \Omega}(m^{\epsilon})}$.

Now we consider the whole process of $2^{m^{\epsilon /2}}$ steps. A phase of the whole process is {\it non-trivial} if it starts with a point where the step size of $e_1$ is increased to $\alpha$, ends with a point where the step size of $e_1$ is decreased to 1 for the first time (i.e., the step sizes of $e_1$ at all internal points of the phase are greater than 1). 
Thus the whole process consists of $N_1$ non-trivial phases and $N_2$ steps where the step size of $e_1$ is 1 (both $N_1$ and $N_2$ are nonnegative integers).
For a non-trivial phase $P$, by the analysis given above, the number of steps in $P$ is upper bounded by $m^{\epsilon}$ with probability $1 - e^{-{\rm \Omega}(m^{\epsilon})}$, and lower bounded by $5$ (one step increases the step size to $\alpha$, and four steps decrease the step size to 1). 
Thus the number $N_1$ of non-trivial phases is upper bounded by $2^{m^{\epsilon /2}} /5$. 
Combining the conclusion obtained above that the step size is increased to over $\alpha^{m^{\epsilon}}$ during each non-trivial phase with probability $e^{-{\rm \Omega}(m^{\epsilon})}$, for the $N_1$ non-trivial phases, we have that the step size of $e_1$ is increased to over $\alpha^{m^{\epsilon}}$ with probability 
$$ e^{-{\rm \Omega}(m^{\epsilon})} \cdot N_1 \leq e^{-{\rm \Omega}(m^{\epsilon})} \cdot 2^{m^{\epsilon /2}} /5  = e^{-{\rm \Omega}(m^{\epsilon})} \ .$$
That is, during the whole process of $2^{m^{\epsilon /2}}$ steps, the step size of $e_1$ is increased to over $\alpha^{m^{\epsilon}}$ with probability $e^{-{\rm \Omega}(m^{\epsilon})}$. 
Therefore, by the end of the whole process of $2^{m^{\epsilon/2}}$ steps, the increment on the LP value of $e_1$ is upper bounded by $2^{m^{\epsilon /2}} \cdot \alpha^{m^{\epsilon}} \leq \alpha^{2 m^{\epsilon}}$ (as $\alpha \ge 2$) with probability $1 - e^{-{\rm \Omega}(m^{\epsilon})}$, where $\alpha^{2 m^{\epsilon}}$ is less than $W_{\textup{max}} -1 = \alpha^m -1$ since $0 < \epsilon \leq 1/2$ and $m$ is sufficiently large. 
Therefore, with probability $1- e^{-{\rm \Omega}(m^{\epsilon})}$, the \mbox{\textup{(1+1) EA with 1/5-th Rule}} cannot find the unique MFDS of $G^*$ within runtime $2^{m^{\epsilon /2}}$.
\end{proof}

There are two reasons for the pseudo-polynomial runtime of the \mbox{\textup{(1+1) EA with 1/5-th Rule}} for DWVC-E$^+$: (1). the small probability of a mutation to be accepted by the algorithm; (2). the "radical" strategy that decreases the step sizes of all the edges selected by the mutation if it is rejected. 
Under the combined impact of the two factors, the step size of $e_1$ cannot be increased to an enough large value to overcome the exponential large weight $W_{\textup{max}}$.
An obvious workaround is incorporating the ``conservative'' strategy (adopted by Algorithm~\ref{alg:(1+1) EA}) into the \mbox{\textup{(1+1) EA with 1/5-th Rule}}, which only decreases the step sizes of the edges that satisfy a strict condition. Then the probability of a mutation that decreases the step size of $e_1$ would be smaller.
Another possible workaround is considering the $1/i$-th rule, where $i > 5$, to slow down the decreasing speed of the step size of $e_1$. 
Both workarounds aim to make the expected drift of the step size of $e_1$ be positive.

Considering the instance $\{G_s \setminus \{e_1\},Y_{\textup{orig}},E^+ = \{e_1\} \! \}$ of DWVC-E$^+$, where $Y_{\textup{orig}}(e_i) = 1$ for all $2 \leq i \leq m$, we can get that Theorem~\ref{theo:(1+1) EA with 1/5-th Rule for DWVC-X} holds for \mbox{\textup{DWVC-E}}$^+$ by Lemma~\ref{lem:(1+1) EA with 1/5-th Rule increase weight}.
Similarly, considering the instance $\{G'_s,Y_{\textup{orig}},E^- = \{e'_1\} \! \}$ of DWVC-E$^-$, where $Y_{\textup{orig}}(e_i) = 1$ for all $1 \leq i \leq m$ and $Y_{\textup{orig}}(e'_1) = W_{\textup{max}}-1$ (graph $G'_s$ is given in Figure~\ref{fig:special graph}(b)), we can get that Theorem~\ref{theo:(1+1) EA with 1/5-th Rule for DWVC-X} holds for \mbox{\textup{DWVC-E}}$^-$ by Lemma~\ref{lem:(1+1) EA with 1/5-th Rule increase weight}.
Considering that the weights of the two endpoints of $e_1$ in $G_s$ are increased from 1 to $W_{\textup{max}}$, and $Y_{\textup{orig}}(e_i) = 1$ for all $1 \leq i \leq m$, we can get that Theorem~\ref{theo:(1+1) EA with 1/5-th Rule for DWVC-X} holds for \mbox{\textup{DWVC-W}}$^+$ by Lemma~\ref{lem:(1+1) EA with 1/5-th Rule increase weight}.
Considering that the weight of the endpoint of $e'_1$ that is not shared with $e_1$ in $G'_s$ is decreased from $W_{\textup{max}}$ to 1, and $Y_{\textup{orig}}(e'_1) = W_{\textup{max}}-1$ and $Y_{\textup{orig}}(e_i) = 1$ for all $1 \leq i \leq m$, we can get that Theorem~\ref{theo:(1+1) EA with 1/5-th Rule for DWVC-X} holds for \mbox{\textup{DWVC-W}}$^-$ by Lemma~\ref{lem:(1+1) EA with 1/5-th Rule increase weight}.
Combining the conclusions for \mbox{\textup{DWVC-E}}$^+$, \mbox{\textup{DWVC-E}}$^-$, \mbox{\textup{DWVC-W}}$^+$, and \mbox{\textup{DWVC-W}}$^-$, we have that Theorem~\ref{theo:(1+1) EA with 1/5-th Rule for DWVC-X} holds for \mbox{\textup{DWVC-E}} and \mbox{\textup{DWVC-W}}.

\begin{theorem}
\label{theo:(1+1) EA with 1/5-th Rule for DWVC-X}
If the maximum weight $W_{\textup{max}}$ of the vertices in the considered weighted graph is $\alpha^{m}$, then the expected runtime of the \mbox{\textup{(1+1) EA with 1/5-th Rule}} for \mbox{\textup{DWVC-X}}, where \textup{X} $\in \{E^+, E^-,E,W^+, W^-, W\}$, is lower bounded by $2^{m^{\epsilon /2}}$ with probability $1- e^{-{\rm \Omega}(m^{\epsilon})}$ ($0 < \epsilon \leq 1/2$).
\end{theorem}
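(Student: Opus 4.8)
The plan is to reduce every one of the six cases to the single hardness result already established in Lemma~\ref{lem:(1+1) EA with 1/5-th Rule increase weight}. That lemma shows that, on the special graph $G_s$ (Figure~\ref{fig:special graph}(a)) with $W_{\textup{max}}=\alpha^m$, once the algorithm sits at a feasible dual-solution with $Y(e_i)=1$ for every $i$ and with the step size of $e_1$ equal to $1$, the \mbox{\textup{(1+1) EA with 1/5-th Rule}} needs more than $2^{m^{\epsilon/2}}$ steps (with probability $1-e^{-{\rm \Omega}(m^{\epsilon})}$) to drive $Y(e_1)$ up to $W_{\textup{max}}=\alpha^m$, because the exponent of $\sigma(e_1)$ has strictly negative drift and hence cannot grow large enough within so few steps. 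Thus it suffices, for each \textup{X}, to exhibit one instance of \mbox{\textup{DWVC-X}} whose only remaining task after the graph-editing operation is exactly this exponential climb of $Y(e_1)$.

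First I would fix the four concrete instances on $G_s$ and its companion $G'_s$ (Figure~\ref{fig:special graph}(b)). For \mbox{\textup{DWVC-E}}$^+$ I take $G=G_s\setminus\{e_1\}$ with $Y_{\textup{orig}}(e_i)=1$ ($2\le i\le m$) and add $E^+=\{e_1\}$; the edit recreates $G_s$, so the algorithm starts with $Y(e_1)=0$, $\sigma(e_1)=1$ and must lift $Y(e_1)$ to $W_{\textup{max}}$. For \mbox{\textup{DWVC-W}}$^+$ I keep $G=G_s$ but raise the two endpoints of $e_1$ from weight $1$ to $W_{\textup{max}}$, again forcing the same climb from $Y_{\textup{orig}}(e_1)=1$. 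For the removal/decrease variants I use $G'_s$, where $e_1$ and $e'_1$ share the weight-$W_{\textup{max}}$ apex and $Y_{\textup{orig}}(e'_1)=W_{\textup{max}}-1$, $Y_{\textup{orig}}(e_i)=1$: for \mbox{\textup{DWVC-E}}$^-$ I delete $e'_1$, and for \mbox{\textup{DWVC-W}}$^-$ I lower the private endpoint of $e'_1$ from $W_{\textup{max}}$ to $1$. In either case the apex slack formerly held by $e'_1$ is released and the unique MFDS again requires $Y(e_1)$ to rise to $\Theta(W_{\textup{max}})$. In each case the post-edit situation matches the hypothesis of Lemma~\ref{lem:(1+1) EA with 1/5-th Rule increase weight} up to a $\Or(1)$ shift in the starting value of $Y(e_1)$, which cannot affect a $2^{m^{\epsilon/2}}$ bound, so the lemma yields the claimed lower bound for \mbox{\textup{DWVC-E}}$^+$, \mbox{\textup{DWVC-E}}$^-$, \mbox{\textup{DWVC-W}}$^+$, and \mbox{\textup{DWVC-W}}$^-$.

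The delicate step I expect to spend the most care on is the decrease phase forced by the weight-decrease variant \mbox{\textup{DWVC-W}}$^-$. There $Y_{\textup{orig}}$ is infeasible for $G^*$, since lowering the private endpoint of $e'_1$ to weight $1$ violates its constraint while $Y_{\textup{orig}}(e'_1)=W_{\textup{max}}-1$; by Lemma~\ref{lem:infeasible to MFDS} the algorithm must first pull $Y(e'_1)$ down to restore feasibility before any increase of $Y(e_1)$ can be accepted. I would argue that this preliminary phase does not shortcut the hardness: it only lowers $Y(e'_1)$ and leaves $Y(e_1)$ untouched, so once the apex slack is released the algorithm still faces the bottleneck of lifting $Y(e_1)$ from a value near $1$ up to $\Theta(W_{\textup{max}})=\Theta(\alpha^m)$ through a step size whose exponent drifts negatively, precisely the scenario Lemma~\ref{lem:(1+1) EA with 1/5-th Rule increase weight} rules out in fewer than $2^{m^{\epsilon/2}}$ steps. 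Finally, since any \mbox{\textup{DWVC-E}} instance contains \mbox{\textup{DWVC-E}}$^+$ and \mbox{\textup{DWVC-E}}$^-$ as special cases and any \mbox{\textup{DWVC-W}} instance contains \mbox{\textup{DWVC-W}}$^+$ and \mbox{\textup{DWVC-W}}$^-$, the same constructions certify the lower bound for \mbox{\textup{DWVC-E}} and \mbox{\textup{DWVC-W}}, covering all six values of \textup{X}.
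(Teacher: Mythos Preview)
Your proposal is correct and follows essentially the same approach as the paper: the paper also reduces each of the six variants to Lemma~\ref{lem:(1+1) EA with 1/5-th Rule increase weight} via exactly the four instances you describe (on $G_s$ and $G'_s$, with the same choices of $Y_{\textup{orig}}$, $E^+$, $E^-$, and weight changes), and then obtains \mbox{\textup{DWVC-E}} and \mbox{\textup{DWVC-W}} from their respective special cases. Your treatment of the infeasible phase in \mbox{\textup{DWVC-W}}$^-$ is actually more explicit than the paper's, which simply invokes the lemma without discussing that preliminary decrease.
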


\section{Conclusion}

In the paper, we contributed to the theoretical understanding of evolutionary computing for the Dynamic Weighted Vertex Cover problem, generalizing the results obtained by Pourhassan et al.~\cite{pourhassan2015maintaining} for the Dynamic Vertex Cover problem. Two graph-editing operations were studied for the dynamic changes on the given weighted graph, which lead to two versions: Dynamic Weighted Vertex Cover problem with Edge Modification and Dynamic Weighted Vertex Cover problem with Weight Modification, and two special variants for each version. 

We first introduced two algorithms (1+1) EA and RLS with the step size adaption strategy, and analyzed their performances for the two versions (including their four special variants) separately.
Our analysis shows that the qualities of the solutions for these studied dynamic changes can be maintained efficiently.
As mentioned in Section~\ref{sec:intro}, Pourhassan et al.~\cite{pourhassan2017use} studied the Weighted Vertex Cover problem using the dual form of the LP formulation, and showed that their (1+1) EA with step size adaption strategy cannot get a 2-approximate solution in polynomial expected runtime with a high probability. 
It is easy to find that our (1+1) EA can be extended to solve the Weighted Vertex Cover problem efficiently (i.e., construct a 2-approximate solution), of which each instance $G' = (V',E',W')$ can be transformed to an instance of DWVC-E$^+$ with $E = \emptyset$ and $E^+ = E'$.
There are two main differences between their (1+1) EA and our (1+1) EA, causing the big performance gap: 
(1). for the mutation $M$ of their (1+1) EA, the adjustment directions of the LP values of the edges selected by $M$ are random, i.e., there may exist two edges selected by $M$ whose LP values are increased and decreased respectively; for the mutation $M$ of our (1+1) EA, the LP values of the edges selected by $M$ are either all increased or all decreased, and the adjustment direction depends on the feasibility of the maintained solution; 
(2). for the mutation $M$ that is rejected by their (1+1) EA, the step sizes of all the edges selected by $M$ are decreased; for the mutation $M$ that is rejected by our (1+1) EA, only the step sizes of the edges satisfying a specific condition can be decreased.

To eliminate the artificial influences on the behaviors of the two algorithms mentioned above, we also incorporated the 1/5-th (success) rule to control the increasing/decreasing rate of the step size, and presented two algorithms named \mbox{\textup{(1+1) EA with 1/5-th Rule}} and \mbox{\textup{RLS with 1/5-th Rule}}. 
The \mbox{\textup{RLS with 1/5-th Rule}} was shown to be able to maintain the qualities of the solutions efficiently as well. However, for the \mbox{\textup{(1+1) EA with 1/5-th Rule}}, its performance was shown to be not satisfying. More specifically, its expected runtime is lower bounded by a pseudo-polynomial with a high probability, to maintain the qualities of the solutions if the maximum weight that the vertices have is exponential with respect to the number of the edges in the graph. 
The result matches that given by Pourhassan et al.~\cite{pourhassan2017use}, and indicates that the 1/5-th rule cannot overcome the negative impact caused by the standard mutation operator, when considering the special instances. 
However, for the 1/$i$-th rule with a sufficiently large value of $i$, it seems to be a promising way to overcome such an impact. 

This is the first work that incorporates the 1/5-th rule with the step size adaption strategy, to solve a dynamic combinatorial optimization problem. We will leave it for future research to extend these insights to the analysis for more (dynamic) combinatorial optimization problems. 



\begin{thebibliography}{10}
\providecommand{\url}[1]{{#1}}
\providecommand{\urlprefix}{URL }
\expandafter\ifx\csname urlstyle\endcsname\relax
  \providecommand{\doi}[1]{DOI~\discretionary{}{}{}#1}\else
  \providecommand{\doi}{DOI~\discretionary{}{}{}\begingroup
  \urlstyle{rm}\Url}\fi

\bibitem{Auger11}
Auger, A., Doerr, B.: Theory of Randomized Search Heuristics: Foundations and
  Recent Developments.
\newblock World Scientific Publishing Co., Inc. (2011)

\bibitem{bar1981linear}
Bar-Yehuda, R., Even, S.: A linear-time approximation algorithm for the
  weighted vertex cover problem.
\newblock Journal of Algorithms \textbf{2}(2), 198--203 (1981)

\bibitem{beyer2002evolution}
Beyer, H.G., Schwefel, H.P.: Evolution strategies--a comprehensive
  introduction.
\newblock Natural computing \textbf{1}(1), 3--52 (2002)

\bibitem{doerr2015optimal}
Doerr, B., Doerr, C.: Optimal parameter choices through self-adjustment:
  Applying the 1/5-th rule in discrete settings.
\newblock In: Proceedings of the 2015 Genetic and Evolutionary Computation
  Conference, pp. 1335--1342. ACM (2015)

\bibitem{algorithmica/DoerrJW12}
Doerr, B., Johannsen, D., Winzen, C.: Multiplicative drift analysis.
\newblock Algorithmica \textbf{64}(4), 673--697 (2012)

\bibitem{du2011design}
Du, D.Z., Ko, K.I., Hu, X.: Design and analysis of approximation algorithms,
  vol.~62.
\newblock Springer Science \& Business Media (2011)

\bibitem{friedrich2010approximating}
Friedrich, T., He, J., Hebbinghaus, N., Neumann, F., Witt, C.: Approximating
  covering problems by randomized search heuristics using multi-objective
  models.
\newblock Evolutionary Computation \textbf{18}(4), 617--633 (2010)

\bibitem{friedrich2017s}
Friedrich, T., Neumann, F.: What's hot in evolutionary computation.
\newblock In: AAAI, pp. 5064--5066 (2017)

\bibitem{hansen2003reducing}
Hansen, N., M{\"u}ller, S.D., Koumoutsakos, P.: Reducing the time complexity of
  the derandomized evolution strategy with covariance matrix adaptation
  (cma-es).
\newblock Evolutionary computation \textbf{11}(1), 1--18 (2003)

\bibitem{he2004study}
He, J., Yao, X.: A study of drift analysis for estimating computation time of
  evolutionary algorithms.
\newblock Natural Computing \textbf{3}, 21--35 (2004)

\bibitem{LPForWeighteVC1983}
Hochbaum, D.S.: Efficient bounds for the stable set, vertex cover and set
  packing problems.
\newblock Discrete Applied Mathematics \textbf{6}(3), 243 -- 254 (1983)

\bibitem{ncs/Jansen13}
Jansen, T.: Analyzing evolutionary algorithms - The computer science
  perspective.
\newblock Natural Computing Series. Springer (2013)

\bibitem{jansen2013approximating}
Jansen, T., Oliveto, P.S., Zarges, C.: Approximating vertex cover using
  edge-based representations.
\newblock In: Proceedings of the 12th ACM/SIGEVO Workshop on Foundations of
  genetic algorithms (FOGA XII), pp. 87--96. ACM (2013)

\bibitem{khot2002power}
Khot, S.: On the power of unique 2-prover 1-round games.
\newblock In: Proceedings of the 34th annual ACM Symposium on Theory of
  computing, pp. 767--775. ACM (2002)

\bibitem{khot2008vertex}
Khot, S., Regev, O.: Vertex cover might be hard to approximate to within
  2-$\varepsilon$.
\newblock Journal of Computer and System Sciences \textbf{74}(3), 335--349
  (2008)

\bibitem{kotzing20151+}
K{\"o}tzing, T., Lissovoi, A., Witt, C.: {(1+1) EA} on generalized dynamic
  onemax.
\newblock In: Proceedings of the 13th ACM/SIGEVO Conference on Foundations of
  Genetic Algorithms (FOGA XIII), pp. 40--51. ACM (2015)

\bibitem{kratsch2013fixed}
Kratsch, S., Neumann, F.: Fixed-parameter evolutionary algorithms and the
  vertex cover problem.
\newblock Algorithmica \textbf{65}(4), 754--771 (2013)

\bibitem{BookNeuWit}
Neumann, F., Witt, C.: Bioinspired computation in combinatorial optimization:
  Algorithms and their computational complexity, 1st edn.
\newblock Springer-Verlag New York, Inc., New York, NY, USA (2010)

\bibitem{neumann2015runtime}
Neumann, F., Witt, C.: On the runtime of randomized local search and simple
  evolutionary algorithms for dynamic makespan scheduling.
\newblock In: Proceedings of the 24th International Conference on Artificial
  Intelligence (IJCAI), pp. 3742--3748 (2015)

\bibitem{phillips2012chernoff}
Phillips, J.M.: Chernoff-hoeffding inequality and applications.
\newblock arXiv preprint arXiv:1209.6396  (2012)

\bibitem{pourhassan2017use}
Pourhassan, M., Friedrich, T., Neumann, F.: On the use of the dual formulation
  for minimum weighted vertex cover in evolutionary algorithms.
\newblock In: Proceedings of the 14th ACM/SIGEVO Workshop on Foundations of
  Genetic Algorithms (FOGA XIV), pp. 37--44. ACM (2017)

\bibitem{pourhassan2015maintaining}
Pourhassan, M., Gao, W., Neumann, F.: Maintaining 2-approximations for the
  dynamic vertex cover problem using evolutionary algorithms.
\newblock In: Proceedings of the 2015 Annual Conference on Genetic and
  Evolutionary Computation, pp. 903--910. ACM (2015)

\bibitem{pourhassan2016ppsn}
Pourhassan, M., Shi, F., Neumann, F.: Parameterized analysis of multi-objective
  evolutionary algorithms and the weighted vertex cover problem.
\newblock In: Proceedings of the 14th International Conference on Parallel
  Problem Solving from Nature (PPSN XIV), pp. 729--739. Springer (2016)

\bibitem{roostapour2018performance}
Roostapour, V., Neumann, A., Neumann, F.: On the performance of baseline
  evolutionary algorithms on the dynamic knapsack problem.
\newblock In: the 15th International Conference on Parallel Problem Solving
  from Nature (PPSN XV), pp. 158--169. Springer (2018)

\bibitem{shi2018GECCO}
Shi, F., Neumann, F., Wang, J.: Runtime analysis of randomized search
  heuristics for the dynamic weighted vertex cover problem.
\newblock In: Proceedings of the 2018 Genetic and Evolutionary Computation
  Conference, pp. 1515--1522. ACM (2018)

\bibitem{shi2019reoptimization}
Shi, F., Schirneck, M., Friedrich, T., K{\"o}tzing, T., Neumann, F.:
  Reoptimization time analysis of evolutionary algorithms on linear functions
  under dynamic uniform constraints.
\newblock Algorithmica \textbf{81}(2), 828--857 (2019)

\bibitem{vazirani2013approximation}
Vazirani, V.V.: Approximation algorithms.
\newblock Springer Science \& Business Media (2013)

\end{thebibliography}

\end{document}